\tikzstyle{block} = [rectangle, draw, fill=blue!10, rounded corners, text centered,     text width = 7em, minimum height = 2em]
\tikzstyle{line} = [draw, -latex']
\title{Gradient flow dynamics of shallow ReLU networks for square loss and orthogonal inputs}
\author{%
  Etienne Boursier \\
  TML, EPFL, Switzerland\\
  \texttt{etienne.boursier@epfl.ch} \\
   \And
Loucas Pillaud-Vivien \\
  TML, EPFL, Switzerland\\
  \texttt{loucas.pillaud-vivien@epfl.ch} \\
   \And
  Nicolas Flammarion \\
  TML, EPFL, Switzerland\\
 \texttt{nicolas.flammarion@epfl.ch}
}
\begin{document}

\maketitle

\setcounter{tocdepth}{3}

\doparttoc 
\faketableofcontents 


\begin{abstract}
    The training of neural networks by gradient descent methods is a cornerstone of the deep learning revolution. Yet, despite some recent progress, a complete theory explaining its success is still missing. This article presents, for orthogonal input vectors, a precise description of the gradient flow dynamics of training one-hidden layer ReLU neural networks for the mean squared error at small initialisation. In this setting, despite non-convexity, we show that the gradient flow converges to zero loss and characterise its implicit bias towards minimum variation norm. Furthermore, some interesting phenomena are highlighted: a quantitative description of the initial alignment phenomenon and a proof that the process follows a specific saddle to saddle dynamics.

\end{abstract}

\section{Introduction}\label{sec:intro}
Artificial neural networks are nowadays trained successfully to solve a large variety of learning tasks. However, a large number of fundamental questions surround their impressive success. Among them, the convergence to global minima of their non-convex training dynamics and their ability to generalise well despite fitting perfectly the dataset have challenged traditional machine learning belief. While a complete theory is still lacking, the machine learning community has recently come up with 
key steps that allow to tame the complexity of the problem: proving the convergence of gradient flow to zero loss~\citep{mei2018mean,chizat2018global,sirignano2020mean,rotskoff2022}, investigating the algorithmic selection of a specific global minimum, often referred as the \textit{implicit bias} of an algorithm~\citep{neyshabur2014search,zhang2021understanding}; while paying attention to the importance of the initialisation~\citep{woodworth2020kernel,chizat2019lazy}. The aim of this article is to analyse precisely these three points for regression problems. This is done in a specific setting:  for orthogonal inputs,
we provide a complete characterisation of the gradient flows dynamics of training one-hidden layer ReLU neural networks with the square loss at small initialisation. We show that this non-convex optimisation dynamics  
captures most of the complexity mentioned above
and thus could be a first step towards analysing more general setups.

\paragraph{Global convergence of training loss for neural networks.} 
Showing convergence of the gradient flow to a global minimum is  an open and important question.
Beyond the lazy regime (see next paragraph), only a few results were proven in the regression setting.
The most promising route might be the link with Wassertein gradient flows for infinite neural networks.
In that case, global convergence happens under mild conditions~\citep{chizat2018global,wojtowytsch2020convergence}. 
Other works focus on local convergence~\citep{zhou2021local, safran2021effects}, or general criteria that eventually fail to encompass practical setups~\citep{chatterjee2022convergence,chen2022feature}.
These latter works rest on Polyak-\L{}ojasiewicz inequalities that in fact cannot be satisfied through the whole process if the dynamics travels near saddle points~\citep{liu2022loss}, as empirically observed~\citep{dauphin2014identifying}.
%
On the contrary, the present paper proves global convergence without resorting to large overparameterisation, dealing carefully with saddles.
%
%
%
%

\paragraph{Feature learning and small initialisation.} The scale of initialisation plays an essential role in the behavior of the training dynamics. Indeed, an important example is that, at large initialisation, known as the \textit{lazy regime}~\citep{chizat2019lazy}, the neurons move relatively slightly implying that the dynamics is nearly convex and described by an effective kernel method with respect to the \textit{Neural Tangent Kernel}~\citep{jacot2018neural,allen2019learning,arora2019fine}. Instead, we are interested in another regime where the initialisation scale is small. This regime is known to be richer as it performs \textit{feature learning}~\citep{yang2021tensor} but is also more challenging to analyse as it follows a truly non-convex dynamics (see details in Section~\ref{sec:dynamics}).

\paragraph{Implicit bias of gradient methods training.} There are many global minima to the mean squared error, i.e. ReLU neural networks that perfectly interpolate the dataset. 
An important question is to understand which one is selected by the gradient flow for a given initialisation~\citep{neyshabur2014search}. 
For linear neural networks, this question has been answered thoroughly~\citep{arora2019fine, yun2021unifying,min2021on} with a discussion on the role of initialisation~\citep{woodworth2020kernel} and noise~\citep{pesme2021implicit}.
For non-linear activations such as ReLU, no clear implicit bias criteria have been ever exhibited for the square loss besides a conjecture of a quantisation effect~\citep{maennel2018gradient}.
Finally, note that in the classification setting, the favorable behavior of iterates going to infinity simplifies the analysis to prove implicit biases such as: max-margin for the $\ell_2$ norm in case of linear models~\citep{soudry2018implicit,ji2019implicit}, alignment of inner layers for linear neural networks~\citep{ji2018gradient} and max-margin for the variation norm induced by neural networks~\citep{kurkova2001variation} in the case of one-hidden layer neural networks~\citep{lyu2019gradient,chizat2020implicit}.\\
Beyond the convergence results, the implicit bias characterisation anticipates the generalisation properties of the returned estimate as discussed in \cref{subsec:discussion}.

\paragraph{Dynamics of training for neural network.}
In the regression case, the starting point governs where the flow converges. 
This observation suggests that a complete analysis of the trajectory may be required when one wants to understand the implicit bias in this case.  Such descriptions have been undertaken by \citet{maennel2018gradient}, who describe the initial alignment phase at small initialisation, and \citet{li2020towards,jacot2021deep} who conjecture that the dynamics travels from saddle to saddle. These papers provide intuitive content that we prove rigorously in the orthogonal setup.\\
Finally, closest to our work are the following results on the classification of orthogonally separable data~\citep{phuong2020inductive,wang2021convex} and linearly separable, symmetric data~\citep{lyu2021gradient}. The classification setup provides easier tools to analyse the problem: indeed, after the initial alignment phase, the network has already  perfectly classified the data points in these settings. From there, it is known that the training loss converges to zero and that the parameters direction is biased towards KKT points of the max-margin problem~\citep{lyu2019gradient,ji2020directional}. Such tools cannot be applied after the alignment phase for regression, and we resort to a refined analysis of the trajectory to show both global convergence and implicit bias. 
On the other hand, \citet{lyu2021gradient} require a precise description of the dynamics to ensure convergence towards specific KKT points of the max-margin problem. Yet, the analysis of the dynamics is simplified by their symmetry assumption: the trajectory does not go through intermediate saddles and all the labels are simultaneously fitted. On the contrary, the dynamics we describe travels near an intermediate saddle point which separates two distinct fitting phases. This behaviour largely complicates the analysis, besides being more representative of the saddle to saddle dynamics observed in general settings.


\vspace*{-0.2em}

\subsection{Main contributions}

\vspace*{-0.15em}
%
We make the following contributions.
\begin{itemize}
[itemsep=-0.1em, topsep=-0.05em,leftmargin=0.8cm]
    \item 
     We prove the convergence of the gradient flow towards a global minimum of the non-convex training loss for small enough initialisation and finite width. 
    \item We characterise the global optimum retrieved for infinitesimal initialisation as a minimum $\ell_2$ norm interpolator, which implies a minimum variation norm in terms of prediction function. 
     \item As important as the convergence result, the dynamics is portrayed in \cref{sec:dynamics}: we quantitatively detail its different phases (alignment and fitting) and show it follows a saddle to saddle dynamics.
\end{itemize}

\subsection{Notations}

We denote by $\iind{A}$ the function equal to $1$ if $A$ is true and $0$ otherwise. $\mathcal{U}(S)$ is the uniform distribution over the set $S$ and $\mathcal{N}(\mu, \Sigma)$ is a Gaussian of mean $\mu$ and covariance $\Sigma$. We denote $\nabla_\theta h_\theta(x)$ the gradient of $\theta \mapsto h_\theta(x)$ at fixed $x$. For any $n \in \N^*$, $\llbracket n \rrbracket$  denotes the tuple of integers between $1$ and $n$. 
The scalar product between $x,y \in \R^d$ is denoted by $\langle x,y \rangle$ and the Euclidean norm is denoted by $\|\cdot \|$ and called~$\ell_2$. $\S_{d-1}$ denotes the sphere of $\R^d$ for the Euclidean norm. $B(\theta,r)$ is the Euclidean ball of center $\theta$ and radius $r$. All the detailed proofs of the claimed results are deferred to the Appendix.

\section{Setup and preliminaries}
\label{sec:setup_preliminaries}

\subsection{One-hidden layer neural network and training loss}
\label{subsec:model_and_gradient_flow}

\paragraph{Model.} Let us fix an integer $n \in \mathbb{N}^*$ as well as input data $(x_1, \hdots, x_n) \in (\R^d)^n$ and outputs $(y_1, \hdots, y_n) \in \R^n$. We are interested in the minimisation of the mean squared error:
\begin{equation}
\label{eq:erm}
L(\theta):= \frac{1}{2 n } \sum_{k = 1}^n \left(h_\theta (x_k) -y_k\right)^2, \qquad \text{where} \ \, h_\theta (x) :=  \sum_{j=1}^m a_j \sigma(\langle w_j, x \rangle)
\end{equation}
is a one-hidden layer neural network of width $m$ defined with parameters $\theta = (a, W)\in \R^m \times \R^{m \times d}$. The vector $a \in \R^m$ stands for the weights of the last layer and  $W^\top  = \begin{bmatrix} w_1 \cdots\, w_m \end{bmatrix} \in \R^{d \times m}$, where each $w_j \in \R^{d}$ represents a hidden neuron. To encompass the effect of the bias, an additional component can be added to the inputs $x^\top \leftarrow [x^\top, 1]$ without changing our results.  Finally, the activation function $\sigma$ is the ReLU: $\sigma(x) \coloneqq\max \, \{0, x\}$.


We introduce here the main assumptions on the data inputs.
\begin{ass}
\label{ass:orthonormal_family}
The input points form an orthonormal family, i.e. $\forall k,k' \in \llbracket n \rrbracket$, $\langle  x_k, x_{k'} \rangle = \iind{k = k'}$.  
\end{ass}
\vspace*{-0.2cm}
The data are assumed to be normalized only for convenience---the real limitation being that they are pairwise orthogonal. 
This assumption is exhaustively discussed in \cref{subsec:discussion}. 
\begin{ass}
\label{ass:non_degeneracy_y}
For all $k \in \llbracket n \rrbracket$, $y_k \neq 0$ and $ \sum_{k \mid y_k > 0} y_k^2 \neq \sum_{k \mid y_k < 0} y_k^2$.  
\end{ass}
\vspace*{-0.2cm}
This assumption on the data output is mild, e.g. has zero Lebesgue measure, and only permits to exclude degenerate situations. 


\paragraph{Gradient flow.} 
As the limiting dynamics of the (stochastic) gradient descent with infinitesimal step-sizes~\citep{li2019stochastic}, we study the following gradient flow
\begin{equation}
\label{eq:gradient_flow_theta}
\frac{\dd \theta^t}{\dd t} = - \nabla L(\theta^t) = - \frac{1}{n} \sum_{k = 1}^n \left(h_{\theta^t} (x_k) -y_k\right) \nabla_\theta h_{\theta^t}(x_k),
\end{equation}
initialised at $\theta^0 := (a^0,W^0)$. Since the ReLU is not differentiable at $0$, the dynamics should be defined as a subgradient inclusion flow~\citep{bolte2010characterizations}. However, we show in \cref{app:subgradient_flows} that the \textit{only} ReLU subgradient that guarantees the existence of a global solution is $\sigma'(x)= \iind{x > 0}$.
Hence, we stick with this choice throughout the paper.
%
Another important difficulty of this non-differentiability is that Cauchy-Lipschitz theorem does not apply and uniqueness is not ensured. There have been attempts to define the solution of this Ordinary Diffential Equation (ODE) unequivocally~\citep{eberle2021existence} as well as ways to circumvent this difficulty by resorting to smooth activations or additional data assumptions~\citep{wojtowytsch2020convergence,chizat2020implicit}. Yet, we do not follow this line and demonstrate our results \textit{for all the gradient flows} satisfying \cref{eq:gradient_flow_theta}.

\subsection{Preliminary properties and initialisation}

Let us derive here some preliminary properties of the gradient flows. If we rewrite explicitly the dynamics of \cref{eq:gradient_flow_theta} on each layer separately, we have straightforwardly that for all $j \in \llbracket m \rrbracket$, 
\begin{equation}
\label{eq:ode_a_w}
\frac{\dd a^t_j}{\dd t} = \langle D^{\theta^t}_{j}, w_j^t \rangle \qquad \text{and} \qquad
\frac{\dd w^t_j}{\dd t} =  D^{\theta^t}_{j} a_j^t,
\end{equation}
where $  D^{\theta^t}_{j} \coloneqq - \frac{1}{ n } \sum_{k=1}^n \iind{{\langle w^t_j,\, x_k\rangle  > 0}}\left(h_{\theta^t} (x_k)-y_k\right) x_k$ is a vector of $\R^{d}$. This vector solely depends on $\theta^t$ through the prediction function $h_{\theta^t}$ and on the neuron $j$ through its activation vector $\mathsf{A}(w_j^t)$; where, for a vector $w \in \R^{d}$, $\mathsf{A}(w):= (\iind{\langle w,\, x_1\rangle  > 0},\, \hdots,\, \iind{\langle w,\, x_n\rangle  > 0})  \in \{0, 1\}^n$.
%
From~\cref{eq:ode_a_w}, we deduce the following balancedness property~\citep{arora2019fine}.
\begin{lem}
\label{lem:balance}
For all $t\geq0$ and all $j\in \llbracket m\rrbracket$, $\displaystyle (a_j^t)^2 - \|w_j^t\|^2 = (a_j^0)^2 - \|w_j^0\|^2$.
Assume furthermore that for all $j\in \llbracket m\rrbracket$, the initialisation is balanced and non-zero: $|a_j^0| = \|w_j^0\|>0$. Then $|a_j^t| = \|w_j^t\| > 0$ and letting $\s = \mathrm{sign}(a^0) \in \{1, -1\}^{m}$, for all $t\geq 0$, we have that $a_j^t = \s_j \|w_j^t\|$.
\end{lem}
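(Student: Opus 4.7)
The plan is to split the two assertions of the lemma, establishing the balance identity first, then the preservation of sign and the equality $a_j^t = \mathsf{s}_j \|w_j^t\|$.

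For the first claim, I would differentiate the quantity $\Phi_j(t) := (a_j^t)^2 - \|w_j^t\|^2$ in time and plug in the expressions from \eqref{eq:ode_a_w}. By the chain rule,
\begin{equation*}
\tfrac{\dd}{\dd t}\Phi_j(t) = 2 a_j^t \langle D_j^{\theta^t}, w_j^t\rangle - 2 \langle w_j^t, D_j^{\theta^t} a_j^t\rangle = 0,
\end{equation*}
so $\Phi_j$ is constant in time. This holds for any choice of subgradient flow because it only uses the algebraic structure in \eqref{eq:ode_a_w} and not the particular form of $D_j^{\theta^t}$.

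Specialising to a balanced, non-zero initialisation, the first part immediately gives $(a_j^t)^2 = \|w_j^t\|^2$, i.e. $|a_j^t| = \|w_j^t\|$. To upgrade this into the signed identity $a_j^t = \mathsf{s}_j \|w_j^t\|$ it suffices to prove that $a_j^t$ never vanishes, since it is continuous in $t$ and its sign at $t=0$ is exactly $\mathsf{s}_j$. The key step is a Grönwall-type estimate on $u(t) := (a_j^t)^2$. Indeed,
\begin{equation*}
\bigl|\tfrac{\dd}{\dd t} u(t)\bigr| = 2\,|a_j^t|\,|\langle D_j^{\theta^t}, w_j^t\rangle| \;\leq\; 2\,\|D_j^{\theta^t}\|\,|a_j^t|\,\|w_j^t\| \;=\; 2\,\|D_j^{\theta^t}\|\,u(t),
\end{equation*}
where the inequality is Cauchy–Schwarz and the last equality uses the balance identity just proved. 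Assuming local integrability of $t\mapsto \|D_j^{\theta^t}\|$ along the flow (which follows from the boundedness of the losses and inputs on any bounded trajectory), Grönwall's lemma yields $u(t) \geq u(0)\exp\bigl(-2\int_0^t \|D_j^{\theta^s}\|\dd s\bigr) > 0$ for all finite $t$.

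Hence $a_j^t$ stays away from zero, its sign is preserved, $a_j^t = \mathsf{s}_j |a_j^t| = \mathsf{s}_j \|w_j^t\|$, and $\|w_j^t\| = |a_j^t| > 0$. The one subtlety worth highlighting is that uniqueness of the subgradient flow is not available because of the ReLU discontinuity; however, the balance computation and the Grönwall bound both rely only on pointwise estimates of the velocity field at each $t$, using the single-variable Cauchy–Schwarz inequality, and therefore apply to \emph{any} solution of \eqref{eq:gradient_flow_theta} independently of how the indicators $\iind{\langle w_j^t,x_k\rangle>0}$ are resolved at measure-zero times.
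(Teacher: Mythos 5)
Your proof is correct and follows the standard argument for this balancedness lemma (the paper itself gives no separate proof, citing the usual conservation argument of Arora et al.): differentiating $(a_j^t)^2-\|w_j^t\|^2$ along \eqref{eq:ode_a_w} gives conservation, and the Cauchy--Schwarz/Gr\"onwall lower bound $u(t)\geq u(0)\exp\bigl(-2\int_0^t\|D_j^{\theta^s}\|\dd s\bigr)$, with $\|D_j^{\theta^s}\|$ locally bounded along the continuous trajectory, rules out vanishing and hence preserves the sign. Your remark that everything holds for any solution of the subgradient flow, regardless of how $\sigma'(0)$ is resolved, is exactly the right caveat.
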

%
%
Importantly, by Lemma~\ref{lem:balance}, the study of \cref{eq:ode_a_w} reduces to the hidden layer $W$ solely. We consider the following balanced initialisation:
\begin{equation}\label{eq:init}
    \theta^0 = (a^0,W^0) \quad \text{with} \quad \begin{cases}
    w_j^0 = \lambda \, g_j \text{ where } g_j \stackrel{\mathrm{i.i.d.}}{\sim} \mathcal{N}(0, I_{d}),\\
    a_j^0 = \s_j \|w_j^0\| \text{ where } \s_j \stackrel{\mathrm{i.i.d.}}{\sim} \mathcal{U}(\{-1,1\}).
    \end{cases}
\end{equation}
As already stated, we are interested in the regime where the initialisation scale $\lambda>0$ is small. 
We also introduce the following sets of neurons that are crucial in the fitting process
\begin{align}\label{eq:S+1}
    S_{+,1} &\coloneqq \left\{ j\in\llbracket m \rrbracket\mid \s_j = +1 \ \ \  \text{ and \ \ \ for all } k \text{ such that } y_k>0, \langle w_j^0, x_k \rangle \geq 0 \right\},   \\
    S_{-,1} &\coloneqq \left\{ j\in\llbracket m \rrbracket\mid \s_j = -1  \ \ \ \text{ and \ \ \  for all } k \text{ such that } y_k<0, \langle w_j^0, x_k \rangle \geq 0 \right\}.\label{eq:S-1}
\end{align}

\begin{ass}\label{ass:init}
The sets $S_{+,1}$ and $S_{-,1}$ are both non-empty.
\end{ass}
\cref{ass:init} states that there are some neurons in two given cones at initialisation. It holds with probability $1$ when the support of initialisation covers all directions and the width $m$ of the network goes to infinity. 
This is thus a weaker condition than the \textit{omni-directionality} of neurons at initialisation~\citep{wojtowytsch2020convergence}, which is instrumental to show convergence in the mean field regime~\citep{chizat2018global}. 
On the other hand, it is stronger than the alignment condition of~\citet{abbe2022initial}, which is known to be necessary for weak learning but might not lead to the implicit bias described in the next section.


\section{Convergence and implicit bias characterisation}
\label{sec:main_result}


\subsection{Main result}\label{subsec:main}

\cref{thm:main} below states our main result on the convergence and implicit bias of one-hidden layer ReLU networks for regression tasks with orthogonal data.

\begin{theo}\label{thm:main}
Under \cref{ass:orthonormal_family,ass:non_degeneracy_y,ass:init}, there exists $\lambda^*>0$ depending only on the data and the width such that, if $\lambda\leq \lambda^*$, the gradient flow initialised according to \cref{eq:init} converges almost surely to some $\theta_\lambda^\infty$ of zero training loss, i.e. $L(\theta_\lambda^\infty)=0$. Furthermore, there exists $\displaystyle \theta^* $ such that 
\begin{equation}
\label{eq:minimum_norm_convergence}
\lim_{\lambda \to 0} \lim_{t \to \infty} \theta^t = \theta^* \in \underset{L(\theta) = 0}{\mathrm{argmin}} \ \|\theta\|^2.
\end{equation}
\end{theo}
The significance of this result is thoroughly discussed in \cref{subsec:discussion}.
Note that a quantitative and non-asymptotic version of \cref{thm:main}, both in time and $\lambda$, is stated in \cref{lemma:theta*} (\cref{app:proof}). Roughly, it states that the dynamics has already nearly converged after a time of order $-\ln(\lambda)$ and then that the convergence happens at exponential speed. 
Note also that the neural network need not be overparametrised for the result to hold: the only sufficient and necessary requirement on the width $m$ stems from \cref{ass:init}. \looseness=-1

\paragraph{Sketch of proof.}
The proof of \cref{thm:main} rests on a precise description of the training dynamics, which is divided into four different phases. We here only sketch it at a very high level and a more thorough description, with quantitative intermediate lemmas, is given in \cref{sec:dynamics}. 

During the first phase, hidden neurons align to a few representative directions, while remaining close to $0$ in norm. In particular, all hidden neurons in $S_{+,1}$ (resp. $S_{-,1}$) align with some key vector $D_+$ (resp. $-D_-$) defined in \cref{subsec:addnotations}.
During the second phase, the neurons aligned with $D_+$ grow in norm, while staying aligned with $D_+$, until fitting all the positive labels of the dataset (up to some error scaling with $\lambda$). Meanwhile, all the other neurons stay idle.
Then similarly, the neurons aligned with $-D_-$ grow in norm during the third phase, until nearly fitting all the negative labels. Meanwhile, these neurons remain aligned with $-D_-$ and all other neurons remain idle. The precise description of these three phases is obtained by analysing the solutions of the limit ODEs when $\lambda = 0$. The approximation errors that occur from dealing with non-zero $\lambda$ are then carefully handled via Gr\"onwall comparison arguments. Due to the large time scales (of order $-\ln(\lambda)$), the error can propagate on such large time spans. Handling these error terms is the main challenge of our proof and remains intricate despite the orthogonality assumption. 

After these three phases (which last a time $-\ln(\lambda)/\|D_-\|$), the parameters vector is close to some minimal $\ell_2$-norm interpolator. From there we show, exploiting a local Polyak-\L{}ojasiewicz condition, that the dynamics converges at exponential speed to a global minimum close to this interpolator.

\subsection{Discussion}\label{subsec:discussion}

Even if the orthogonal setting we consider is quite restrictive, it carries several characteristics that may be generic, either because they have been observed empirically, shown in related contexts or simply conjectured. 
We discuss these important points below.

\paragraph{Convergence to zero loss.} \cref{thm:main} states that the gradient flow converges to zero loss. Such a result is simple to show when the loss satisfies a Polyak-\L{}ojasiewicz (PL) inequality~\citep{bolte2007lojasiewicz}: $\|\nabla L\|^2 \geq c L $ for $c>0$. However, here, as the dynamics travels near saddles, this inequality is not verified through all the process. Circumventing this global argument, it is yet possible to formulate a refined analysis and show convergence if the dynamics arrives in a region where a local PL stands with a large enough constant. This refined analysis, inspired by the recent work of~\citet{chatterjee2022convergence}\footnote{Note that the argument is certainly not new, but the cited article has the benefit of clearly presenting it.}, 
allows to characterise properly the last phase of the dynamics. We believe that this approach may help in showing convergence in other non-convex gradient flow/descent. 

\paragraph{On the implicit bias.}
Additionally, \cref{thm:main} states that the gradient flow at infinitesimally small initialisation \textit{selects} global minimisers with the smallest $\ell_2$ parameter norm. 
To our knowledge, this is the first characterisation of the implicit bias for regression with non-linear neural networks. 
Although it might not hold for some degenerate situations~\citep{vardi2021implicit}, we believe it to be true beyond the orthogonal case.

This regularisation is \textit{implicit}, meaning that this effect does not result from any explicit regularisation (e.g. weight decay) performed during training~\citep{shevchenko2021mean,parhi2022kinds}. This is only a consequence of the inner structure of the gradient flow and the scale of initialisation.


Furthermore, the implicit bias in parameter space can be translated in function space. 
Indeed, if we introduce formally the space of (infinite) neural networks, i.e. functions written as $f(x):= \int \sigma(\langle \theta, x \rangle) \dd \mu (\theta)$,  where $\mu$ is a signed finite measure on $\S_{d-1}$.
Then, we can define the \textit{variation norm},  $\|f\|_{\mathcal{F}_1}$, as the infimum of $|\mu|(\S_{d-1})$ over such representations~\citep{kurkova2001variation,bach2017breaking}. 
We have the following link between the two formulations
\begin{equation}
\label{eq:equivalence_ell2_F1}
\min_{L(\theta) = 0} \frac{1}{2}\|\theta\|_2^2 = \min_{L(f) = 0} \|f\|_{\mathcal{F}_1},
\end{equation}
with a slight abuse of notation when defining $L(f)$. Note that the result in terms of the $\ell_2$-norm of the parameters is strictly stronger than that of the $\mathcal{F}_1$-norm of the function~\citep{neyshabur2014search}.

Following \cref{eq:equivalence_ell2_F1}, note the striking parallel between the inductive bias of infinitesimally small initialisation for regression and that of the classification problem with the logistic loss as a max-margin problem with respect to the $\mathcal{F}_1$-norm~\citep{chizat2020implicit}. As already observed in the linear case~\citep{woodworth2020kernel}, in contrast with classification, infinitesimally small initialisation is instrumental in regression to be biased towards small $\mathcal{F}_1$-norm functions. 
The role of initialisation is illustrated empirically in \cref{app:expe}.

Finally, let us stress that we did not address the question of what functions solve \cref{eq:equivalence_ell2_F1}, nor the question of the generalisation implied by such a bias. Related works on the first point come from a functional description of norms related to $\mathcal{F}_1$~\citep{savarese2019infinite,ongie2019function,debarre2022sparsest}. 
For the generalisation properties of small $\mathcal{F}_1$ norm functions, we refer to~\citet{kurkova2001variation,bach2017breaking}. Importantly, we recall that the question of how well low $\mathcal{F}_1$-norm functions generalise depends heavily on the \textit{a priori} we have on the ground-truth~\citep{petrini2022learning}.

\paragraph{The initial alignment phenomenon.} 
An important characteristics of the loss landscape is that the origin is a saddle point. Hence, as the dynamics is initialised at small scale $\lambda$, the radial movement is slow and neurons move out of the saddle after time scale $- \ln \lambda$. Meanwhile, the tangential movement of the neurons rules the dynamics and aligns their directions towards specific vectors. This has been first explained by~\citet{maennel2018gradient} and referred as the \textit{quantisation} phenomenon, because neural networks weights collapse to a small finite number of directions. We emphasise that this phase happens generically when initialisation is near the origin and that this part of our analysis can be directly extended to the general (i.e. non-orthognal) case. \citet{phuong2020inductive,lyu2021gradient} analysed a similar early alignment for classification with specific data structures.

\paragraph{The saddle to saddle dynamics.} When initialising the dynamics of a gradient flow near a saddle point of the loss,  it is expected (but hard to prove generically) that the dynamics will alternate slow movements near saddles and rapid junctions between them. Such a behavior has been conjectured for linear neural networks~\citep{li2020towards,jacot2021deep} initialised near the origin. We precisely prove that such a phenomenon occurs: after initialisation, the dynamics visits one strict saddle. See \cref{sec:dynamics},~\cref{fact:saddle_to_saddle} for more details.

\paragraph{Limitations and possible relaxations.} As its main limitation, \cref{thm:main} assumes orthogonal data points $x_k$. The orthogonality assumption disentangles the analysis as the different phases, where either the neurons align towards some direction or grow in norm, are well separated in that case. More precisely, the neurons do not change in direction once they have a non-zero norm in the case of orthogonal data. 
This separation between alignment and norm growth does not hold in the general case, as observed empirically in \cref{app:expe}. Extending our result to more general data thus remains a major challenge and requires additional theoretical tools. 
Nonetheless, as it can be the case in high dimension, our analysis can easily be extended to nearly orthogonal data where $|\langle x_k, x_{k'} \rangle|\leq \delta$, with $\delta$ of order $\lambda$. If however $\delta$ is much larger than the initialisation scale, the dynamics is drastically different and becomes as hard as the general case to analyse. In \cref{app:expe}, we observe similar dynamics for high dimensional data, where the loss converges towards $0$, goes through an intermediate saddle point and the final solution is close to a $2$ neurons network.

A minor assumption is the balanced initialisation, i.e. $\|w_j^0\|=|a_j^0|$. If instead we initialise $a^0$ as a Gaussian scaling with $\lambda$, the initialisation would be  nearly balanced for small $\lambda$. This assumption is thus mostly used for simplicity and our analysis can be extended to unbalanced initialisations.

It is unclear whether our analysis can be extended to any homogeneous activation function. The training trajectory might indeed not be biased towards minimal $\ell_2$-norm for \textit{leaky ReLU} activations~\citep[][Theorem 6.2]{lyu2021gradient}. Contrary to some beliefs, it suggests that the $\ell_2$ implicit bias phenomenon does not occur for any homogeneous activation function, but might instead be specific to the ReLU. 

\paragraph{The overparameterisation regime.} \cref{ass:init} states a deterministic condition to guarantee convergence towards a minimal norm interpolator. This condition is not only sufficient, but also \textit{necessary} for implicit bias towards minimal $\ell_2$ norm.
For isotropic initialisations, the width $m$ needs to be exponential in the number of data points $n$ for \cref{ass:init} to hold with high probability. 

With a smaller (e.g. polynomial in $n$) number of neurons, \cref{ass:init} does not hold anymore. In that case, the training loss should still converge to $0$, but the $\ell_2$-norm of the parameters will not be minimal. More precisely, the estimated function will have more than two kinks. However, an adapted analysis might still show some sparsity in the number of kinks (and thus a weak bias) of the final solution. The training trajectory would then go through multiple saddle points (one saddle per kink).

\paragraph{Scale of initialisation.} The exact value of $\lambda^*$ is omitted for exposition's clarity. Roughly, it can be inferred from the analysis that $\lambda^*$ scales as $\frac{\Theta(1)}{\sqrt{m}}e^{-\Theta(n)}$. Interestingly, the $\frac{1}{\sqrt{m}}$ term is reminiscent of the mean field regime, which is known to induce implicit bias \citep{chizat2020implicit,lyu2021gradient}. On the other hand, the exponential dependency in $n$ is common in the implicit bias literature \citep{woodworth2020kernel}. For larger values of $\lambda$ (but still in the mean field regime), the parameters empirically seem to also converge towards a minimal norm interpolator. The analysis yet becomes more intricate and we do not observe any separation between Phase 2 and Phase 3, i.e. there is no intermediate saddle in the trajectory.


\section{Fine dynamics description: alignments and saddles}\label{sec:dynamics}

This section describes thoroughly the training dynamics of the gradient flow. 
It presents and discusses quantitative lemmas on the state of the neural network at the end of each different phase. 
In particular, mathematical formulations of the early alignment and saddle to saddle phenomena 
are provided.

\subsection{Additional notations}\label{subsec:addnotations}

First, we need to introduce additional notations for this section. We define vectors $D_+$ and $D_-$ that are the two directions towards which the neurons align
\begin{equation*}
    D_+ \coloneqq \frac{1}{n}\sum_{k\mid y_k>0} y_k x_k \qquad \text{and} \qquad D_- \coloneqq \frac{1}{n}\sum_{k\mid y_k<0} y_k x_k.
\end{equation*}
%
%
We also need to define $c\coloneqq\max_{j\in\llbracket m \rrbracket} \|w_j^0\|/\lambda$ and $r\coloneqq \|D_+\|/\|D_-\|$. \cref{ass:non_degeneracy_y} implies that $r\neq 1$, and by symmetry we can assume $r>1$ without any loss of generality.
%
We additionally fix constants $\lambda_*, \varepsilon>0$, small enough and depending only on the dataset and the width $m$.

\paragraph{Spherical coordinates.}
As radial and tangential movements are almost decoupled during the dynamics, it is natural to introduce the spherical coordinates of the neurons: for all $j\in\llbracket m \rrbracket$, denote $w_j = \e^{\rho_j} \cdot \w_j $, where $\rho_j = \ln \|w_j\| \in \R$ and  $\w_j = w_j/\|w_j\| \in \S_{d-1}$. In these adapted coordinates, the system of ODEs~\eqref{eq:ode_a_w} reduces to:
\begin{equation}
\label{eq:ode_norm_angle}
\frac{\dd \rho^t_j}{\dd t} = \s_j \langle D^{\theta^t}_{j}, \w_j^t \rangle \qquad \text{and} \qquad \frac{\dd \w^t_j}{\dd t} = \s_j \left(D^{\theta^t}_{j} - \langle D^{\theta^t}_{j}, \w_j^t \rangle \w_j^t \right).
\end{equation}

\subsection{Training dynamics}
This section precisely describes the phases of the dynamics, summarised in \cref{fig:frise}.
\begin{figure*}[htbp]
    \centering
\begin{tikzpicture}[xscale=1.375]\usetikzlibrary{patterns}
  \def\h{1}
  \def\b{0.725}
  \def\o{0.25}
  {\footnotesize	
  \draw[thick,blue!80!black,fill=blue!90!black,opacity=0.2]
    (0,0) rectangle (2,\h);
  \node at (1,\b) {Early alignment};
  \node at (1,\o) {(\cref{lemma:phase1})};
  \draw[thick,blue!80!black,fill=blue!90!black,opacity=0.2]
    (2,0) rectangle (4.5,\h);
    \node at (3.25,\b) {Fitting positive labels};
  \node at (3.25,\o) {(\cref{lemma:phase2informal})};
\draw[thick,blue!80!black,fill=blue!90!black,opacity=0.2]
    (8,0) rectangle (9.8,\h);
    \node at (8.9,\b) {Final convergence};
  \node at (8.9,\o) {(\cref{lemma:PL})};
  \fill[opacity=1,pattern=north east lines,pattern color=green!80!black]    (4.5,0) rectangle (6.85,\h);
  \draw[thick,blue!80!black,fill=blue!90!black,opacity=0.2]
    (4.5,0) rectangle (8,\h);
    \node at (6.25,\b) {Fitting negative labels};
  \node at (6.25,\o) {(\cref{lemma:phase3informal})};
  \draw[->,line width=2] (0,0) -- (10,0) node[below right] {\large $t$};
  
   \draw (0 cm,2pt) -- (0 cm,-2pt) node[anchor=north] {\tiny $0$};
   \draw (2 cm,2pt) -- (2 cm,-2pt) node[anchor=north] {\tiny $\frac{-\varepsilon \ln \lambda}{\|D_-\|}$};
   \draw (4.5 cm,2pt) -- (4.5 cm,-2pt) node[anchor=north] {\tiny $\frac{-(1+3\varepsilon) \ln \lambda}{\|D_+\|}$};
   \draw[color=green!50!blue] (6.85 cm,2pt) -- (6.85 cm,-2pt) node[anchor=north,color=green!60!blue]  {\tiny $\frac{-(1-\varepsilon) \ln \lambda}{\|D_-\|}$};
   \draw (8 cm,2pt) -- (8 cm,-2pt) node[anchor=north]  {\tiny $\frac{-(1+ 3r\varepsilon) \ln \lambda}{\|D_-\|}$};
   \draw (9.8 cm,2pt) -- (9.8 cm,-2pt) node[anchor=north]  {\tiny $\infty$};
   \draw[<->,line width=1.5, color=green!50!blue] (4.5 cm, 1.2*\h cm) -- (6.85 cm, 1.2*\h cm) node[above, align= left, color=green!60!blue]  {\hspace{-3.25cm} \scriptsize Intermediate saddle (\cref{fact:saddle_to_saddle})};
   }
\end{tikzpicture}
    \vspace{-1.5em}
    \caption{Timeline of the training dynamics.}
    \label{fig:frise}
      \vspace{.6em}
\end{figure*}
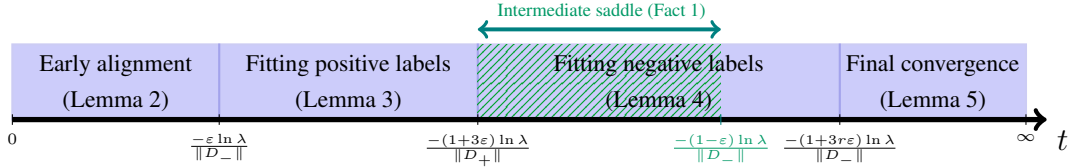

\paragraph{Neuron alignment phase.} During the first phase, all the neurons remain small in norm, while moving tangentially (i.e. in directions). The neurons align according to several key directions: an initial clustering of neurons' directions happens in this early phase, as observed by \citet{maennel2018gradient}.
As the neurons have small norm, $h_{\theta^t}\approx 0$ for this phase and \cref{eq:ode_norm_angle} approximates
\begin{equation}
\label{eq:tangential_movement}
\frac{\dd \w^t_j}{\dd t} \approx \s_j \left(D^{0}_{j} - \langle D^{0}_{j}, \w_j^t \rangle \w_j^t \right).
\end{equation}
This ODE corresponds to the descent/ascent gradient flow (depending on the sign of $\s_j$) on the sphere with objective $\langle D^{0}_{j}, \w_j \rangle$. All neurons end up minimizing or maximizing their scalar product with $D^{0}_{j}$, which only depends on the activation of $\w_j$. As a consequence, neurons with similar activations align towards the same vector, leading to some quantisation of the neurons' directions. This alignment happens in a relatively short time, so that the neurons cannot largely grow in norm. 
\cref{lemma:phase1} below quantifies this effect for neurons in $S_{+1,}$ and $S_{-,1}$, which are crucial to the training dynamics. Since all other neurons remain small in norm during the whole process, we do not focus on their direction.
%
%
\begin{lem}[First phase]\label{lemma:phase1}
For $\lambda \leq \lambda^*$, we have the following inequalities for $t_1=\frac{-\varepsilon \ln(\lambda)}{\|D_-\|}$:
\begin{enumerate}[itemsep=-0.5em, topsep=-1em, label=(\roman*)]
    \item neurons in $S_{+,1}$ are aligned with $D_+$: \quad  \phantom{+}$\forall j \in S_{+,1}, \langle \w_j^{t_1},D_+ \rangle \geq (1-2\lambda^{\varepsilon})\|D_+\|$,
    \item neurons in $S_{-,1}$ are aligned with $-D_-$: \quad  $\forall j \in S_{-,1}, \langle \w_j^{t_1},-D_- \rangle \geq (1-2\lambda^{\varepsilon})\|D_-\|$,
        \item all neurons have small norm: \quad $\forall j \in \llbracket m \rrbracket, \|w_j^{t_1}\|\leq 2c\lambda^{1-r\varepsilon}$. 
\end{enumerate}
\end{lem}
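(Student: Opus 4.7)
My plan is to analyse the spherical-coordinate ODE~\eqref{eq:ode_norm_angle} on the short window $[0,t_1]$ of length $t_1 = -\varepsilon\ln(\lambda)/\|D_-\|$, and show that the radial variables stay at essentially their initial $\lambda$-scale while the angular dynamics reduces to an approximate gradient ascent on the sphere whose attractors, for neurons in $S_{+,1}$ and $S_{-,1}$, are $D_+/\|D_+\|$ and $-D_-/\|D_-\|$ respectively. Throughout, a bootstrap / stopping-time argument intertwines the three items (i)--(iii), breaking the natural circularity between norm control, field approximation, and angular alignment.

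For item (iii), I would start from the balance property of \cref{lem:balance}, which gives $|h_{\theta^t}(x_k)|\le m\max_j\|w_j^t\|^2$: assuming tentatively that $\|w_j^t\|\le 2c\lambda^{1-r\varepsilon}$ on $[0,T]\subseteq[0,t_1]$, the field $D_j^{\theta^t}$ differs from the \emph{linearised} field $\bar D_j^t := \frac{1}{n}\sum_{k\,:\,\langle w_j^t,x_k\rangle>0} y_k x_k$ by only $O(\lambda^{2-2r\varepsilon})$. Since $\|\bar D_j^t\|$ is uniformly bounded in terms of the data, integrating $|\dot\rho_j|\le\|D_j^{\theta^t}\|$ on $[0,t_1]$ gives $\|w_j^{t_1}\|\le c\lambda\,\lambda^{-r\varepsilon}(1+o(1))\le 2c\lambda^{1-r\varepsilon}$, closing the bootstrap for $\lambda\le\lambda^*$ and $\varepsilon$ small. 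Next, orthonormality decouples the weight ODE coordinate-wise: with $\beta_{j,k}^t:=\langle w_j^t,x_k\rangle$, one has $\dot\beta_{j,k}=a_j^t\,\iind{\beta_{j,k}>0}\,(y_k-h_{\theta^t}(x_k))/n$. For $j\in S_{+,1}$ (so $\s_j=+1$, hence $a_j^t\ge 0$) and $y_k>0$, the definition of $S_{+,1}$ gives $\beta_{j,k}^0\ge 0$ and positive drift up to $O(\lambda^{2-2r\varepsilon})$, so these coordinates remain active throughout phase~1; the $y_k<0$ coordinates can only deactivate. Consequently $\bar D_j^t = D_+ + E_j^t$ with $E_j^t\in\mathrm{span}\{x_k:y_k<0\}$ and therefore $E_j^t\perp D_+$; symmetrically for $S_{-,1}$ and $-D_-$.

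For items (i)--(ii), I would study the scalar product $\phi_j^t:=\langle\w_j^t,D_+\rangle$ for $j\in S_{+,1}$. Differentiating along the tangential ODE~\eqref{eq:ode_norm_angle} and using the orthogonal decomposition $\bar D_j = D_+ + E_j$ established above, a direct computation yields $\dot\phi_j \ge \|D_-\|(\|D_+\|-\phi_j) - O(\lambda^{2-2r\varepsilon})$ once $\phi_j$ is in the positive half, which it enters in time $O(1)$ from the initial condition $\phi_j^0\ge 0$ guaranteed by $S_{+,1}$. Gr\"onwall then gives $\|D_+\|-\phi_j^{t_1}\le \|D_+\|\e^{-\|D_-\|t_1}(1+o(1))\le 2\lambda^\varepsilon\|D_+\|$, which is item (i); item (ii) is symmetric with decay rate $\|D_-\|$ saturating the exponent $\lambda^\varepsilon$. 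The main obstacle is to propagate the $O(\lambda^{2-2r\varepsilon})$ error terms, stemming from $h_{\theta^t}$ and from the evolving active set, coherently over the $\Theta(-\ln\lambda)$-long interval while simultaneously maintaining all three invariants. I would close this with a single stopping-time argument, letting $T^*$ denote the first violation of any of the three bounds and deriving a contradiction with $T^*<t_1$ from the margins established above.
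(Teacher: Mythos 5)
Your items (i)--(ii) follow essentially the paper's own route: under the bootstrap norm bound, orthogonality gives $D_j^{\theta^t}\approx D_+ + E_j^t$ with $E_j^t\in\mathrm{span}\{x_k \mid y_k<0\}$, hence $E_j^t\perp D_+$ and $\langle E_j^t,\w_j^t\rangle\le 0$ on the active set, which yields the Riccati-type inequality $\frac{\dd}{\dd t}\langle \w_j^t,D_+\rangle \ge \|D_+\|^2-\langle \w_j^t,D_+\rangle^2-O(\lambda^{2(1-r\varepsilon)})$; your linear relaxation with rate $\|D_-\|$ is a harmless weakening of the paper's $\tanh$ comparison and does deliver the $(1-2\lambda^{\varepsilon})$ alignment (note that $\langle \w_j^0,D_+\rangle\ge 0$ holds already by definition of $S_{+,1}$, so the ``enters the positive half in time $O(1)$'' step is superfluous).

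The genuine gap is in item (iii), which is both part of the statement and the input to all your error estimates. Bounding $|\dot\rho_j|\le\|D_j^{\theta^t}\|$ and invoking a uniform data-dependent bound only controls the radial growth rate by $\sqrt{\|D_+\|^2+\|D_-\|^2}$ (attained by neurons active on the whole dataset), so integrating over $t_1=-\varepsilon\ln(\lambda)/\|D_-\|$ gives at best $\|w_j^{t_1}\|\le c\,\lambda^{1-\varepsilon\sqrt{1+r^2}}$, which is strictly weaker than the claimed $2c\lambda^{1-r\varepsilon}$ since $\sqrt{1+r^2}>r$; in particular your bootstrap with threshold $2c\lambda^{1-r\varepsilon}$ does not close, and your asserted conclusion $\|w_j^{t_1}\|\le c\lambda\,\lambda^{-r\varepsilon}(1+o(1))$ does not follow from the stated inequality. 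The missing ingredient --- the key step of the paper's first phase --- is a sign argument: while all norms are $O(\lambda^{1-r\varepsilon})$, the residual $y_k-h_{\theta^t}(x_k)$ has the same sign as $y_k$, so for a neuron with $\s_j=+1$ the negative-label terms contribute \emph{negatively} to $\frac{\dd}{\dd t}\|w_j^t\|=\s_j\langle D_j^{\theta^t},w_j^t\rangle$ and can be discarded, giving $\frac{\dd}{\dd t}\|w_j^t\|\le\bigl(\|D_+\|+O(\lambda^{2(1-r\varepsilon)})\bigr)\|w_j^t\|$, and symmetrically $\le\bigl(\|D_-\|+O(\lambda^{2(1-r\varepsilon)})\bigr)\|w_j^t\|\le\bigl(\|D_+\|+O(\lambda^{2(1-r\varepsilon)})\bigr)\|w_j^t\|$ for $\s_j=-1$. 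Only this rate $\max(\|D_+\|,\|D_-\|)=\|D_+\|$, integrated over $t_1=r\varepsilon\ln(1/\lambda)/\|D_+\|$, produces the exponent $1-r\varepsilon$ and lets your stopping-time scheme close; with that replacement the rest of your plan goes through.
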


\paragraph{Fitting positive labels.} During the second phase, the norm of the neurons in $S_{+,1}$ (which are aligned with $D_+$) grows until fitting all positive labels. Meanwhile, all the other neurons do not move significantly. The key approximate ODE of this phase is given for $u_+(t) := \sum_{j\in S_{+,1}} \|w_j^t\|^2$ by
\begin{equation*}
    \frac{\dd u_+(t)}{\dd t} \approx 2\|D_+\|\left(1-\frac{u_+(t)}{n\|D_+\|}\right)u_+(t).
\end{equation*}
This equation implies that $u_+(t)$, the sum of the squared norms of neurons in $S_{+,1}$, eventually converges to $n\|D_+\|$ within a time ${-\ln(\lambda)}/{\|D_+\|}$ . Meanwhile, it needs to be shown that these neurons remain aligned with $D_+$ and that the other neurons remain small in norm. This fine control is technical and relies on the orthogonality assumption. If data were not orthogonal, neurons could indeed realign while growing in norm as illustrated by \cref{fig:nonortho3} in \cref{app:expe}.
\cref{lemma:phase2informal} below describes the state of the network at the end of the second phase.
\begin{lem}[Second phase]\label{lemma:phase2informal}
If $\lambda \leq \lambda^*$, then for some time $t_2\leq -\frac{1+3\varepsilon}{\|D_+\|}\ln(\lambda)$:
\begin{enumerate}[itemsep=-0.5em, topsep=-1em, label=(\roman*)]
    \item neurons in $S_{+,1}$ are aligned with $D_+$: \quad $\forall j \in S_{+,1}, \langle \w_j^{t_2},D_+ \rangle \geq \|D_+\| - \lambda^{\frac{\varepsilon}{2}}$,
    \item neurons in $S_{+,1}$ have a large norm: \quad $\sum_{j\in S_{+,1}} \|w_j^{t_2}\|^2=n\|D_+\|-\lambda^{\frac{\varepsilon}{5}}$, 
        \item other neurons have small norm: \quad $\forall j \in \llbracket m \rrbracket\setminus S_{+,1}, \|w_j^{t_2}\|\leq 2c\lambda^{\varepsilon}$.
\end{enumerate}
\end{lem}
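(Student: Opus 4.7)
The plan is to run a continuation argument starting from the end of phase 1. Define the exit time $\tau$ as the first time after $t_1$ at which one of the following approximate invariants breaks: (I1) for every $j\in S_{+,1}$, $\langle\w_j^t,D_+\rangle \geq \|D_+\| - \lambda^{\varepsilon/2}$; (I2) for every $j\notin S_{+,1}$, $\|w_j^t\|\leq 2c\lambda^{\varepsilon}$; and (I3) $u_+(t)\leq n\|D_+\|$. We then identify $t_2\in(t_1,\tau)$ as the first time at which $u_+(t_2)=n\|D_+\|-\lambda^{\varepsilon/5}$, bound $t_2$ from above, and verify that the invariants still hold at $t_2$ with strict slack, which closes the bootstrap.

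Under the invariants, orthogonality of the $x_k$'s decouples the prediction. Since $\w_j^t\approx D_+/\|D_+\|$ for $j\in S_{+,1}$ and $\langle D_+,x_k\rangle = y_k/n$, one has $\langle w_j^t,x_k\rangle \approx \|w_j^t\|\,y_k/(n\|D_+\|)$ when $y_k>0$, while this scalar product is at most $O(\lambda^{\varepsilon/2})\|w_j^t\|$ when $y_k<0$; together with $a_j^t=\|w_j^t\|$ for $j\in S_{+,1}$ and invariant (I2), this yields $h_{\theta^t}(x_k) \approx u_+(t)\,y_k/(n\|D_+\|)$ for $y_k>0$ and $h_{\theta^t}(x_k)=O(\lambda^{\varepsilon})$ for $y_k<0$. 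Substituting back into the definition of $D^{\theta^t}_j$ for $j\in S_{+,1}$ and using $\langle D_+,x_k\rangle=y_k/n$ once more gives
\begin{equation*}
D^{\theta^t}_j = \left(1-\frac{u_+(t)}{n\|D_+\|}\right)D_+ + R_j^t, \qquad \|R_j^t\|=O(\lambda^{\varepsilon/2}).
\end{equation*}
Summing $\tfrac{d}{dt}\|w_j^t\|^2 = 2\|w_j^t\|^2\langle D^{\theta^t}_j,\w_j^t\rangle$ over $j\in S_{+,1}$ then yields the announced logistic ODE for $u_+$ up to a multiplicative error of order $\lambda^{\varepsilon/2}$. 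Explicit integration of the unperturbed logistic equation, starting from $u_+(t_1)$ of order $\lambda^{2(1-r\varepsilon)}$ (inherited from the radial growth of $S_{+,1}$ neurons at rate $\|D_+\|$ during phase 1), shows that $u_+$ reaches $n\|D_+\|-\lambda^{\varepsilon/5}$ after an elapsed time $-(1-r\varepsilon+\varepsilon/10)\ln(\lambda)/\|D_+\|+O(1)$; adding $t_1=-\varepsilon\ln(\lambda)/\|D_-\|$ and using $\|D_+\|/\|D_-\|=r$, the target inequality $t_2\leq -(1+3\varepsilon)\ln(\lambda)/\|D_+\|$ reduces to $r\leq r+2.9$, hence holds.

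For invariant (I1), the tangential equation of \cref{eq:ode_norm_angle} combined with the form of $D^{\theta^t}_j$ above gives
\begin{equation*}
\frac{d}{dt}\langle \w_j^t,D_+\rangle = \left(1-\frac{u_+(t)}{n\|D_+\|}\right)\bigl(\|D_+\|^2 - \langle\w_j^t,D_+\rangle^2\bigr) + O(\lambda^{\varepsilon/2}),
\end{equation*}
which is a sphere gradient ascent actively \emph{improving} alignment while $u_+<n\|D_+\|$; the only potential decay is the $O(\lambda^{\varepsilon/2})$ error, exactly within the slack allowed by (I1). For (I2), a neuron $j\in S_{-,1}$ is aligned with $-D_-$ at $t_1$ and satisfies $D^{\theta^t}_j\approx D_-$ because only still-unfitted negative labels lie in its activation cone, so $\frac{d\rho_j^t}{dt}\approx\|D_-\|$ and
\begin{equation*}
\|w_j^t\|^2 \leq 4c^2\lambda^{2(1-r\varepsilon)}\exp\bigl(2\|D_-\|(t-t_1)\bigr),
\end{equation*}
which at $t=t_2$ evaluates to at most $4c^2\lambda^{2(1-r\varepsilon)-2(1+3\varepsilon)/r+2\varepsilon}$; elementary algebra shows this is $\leq(2c\lambda^\varepsilon)^2$ provided $(r-1)-\varepsilon(r^2+3)\geq 0$, which holds because $r>1$ and $\varepsilon$ is taken small enough. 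Neurons outside $S_{+,1}\cup S_{-,1}$ face a norm growth rate bounded by $\max(\|D_+\|,\|D_-\|)$ and are treated analogously.

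The main obstacle will be propagating the small perturbative error $R_j^t$ over the long time horizon of order $-\ln(\lambda)$: a naive Grönwall estimate yields a $\lambda^{-\Theta(1)}$ amplification factor that has to be absorbed by the initial error. The nested exponents $\varepsilon/2$ and $\varepsilon/5$ (and the slack between them and the claims of the lemma) are chosen precisely so that (a) the propagated error stays below the allowed alignment loss and (b) it never drives $u_+$ past the saturation value $n\|D_+\|$ before the target level $n\|D_+\|-\lambda^{\varepsilon/5}$ is reached. Taking $\lambda^*$ small enough absorbs the amplification and closes the bootstrap; conceptually the dynamics is just a logistic growth of $u_+$ with every other degree of freedom frozen, but the quantitative bookkeeping of these nested error bounds is the technical heart of the argument.
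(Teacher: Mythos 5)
Your overall scaffolding (stopping-time bootstrap, decomposition of $D_+^t$ as a near-multiple of $D_+$, logistic ODE for $u_+$, Grönwall) matches the paper's, but the step you yourself call the technical heart — propagating the alignment invariant (I1) over the $-\ln\lambda$ horizon — is not actually resolved by your proposal, and the mechanism you invoke cannot work. Under (I1) the misalignment vector satisfies $\|d_j^t\|\lesssim\sqrt{h(t)/\|D_+\|}$ where $h(t)=\|D_+\|-\min_j\langle\w_j^t,D_+\rangle$, so the perturbation of $D_+^t$ is of order $\sqrt{h}$ (i.e.\ $\lambda^{\varepsilon/4}$, not $\lambda^{\varepsilon/2}$), and the cross term $\|d_j^t\|\,\|h_\bot(t)\|$ in the tangential equation is of order $h$ itself. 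Hence the defect obeys $h'\leq 2\|D_+\|h+\text{small}$ once the restoring factor $1-u_+/(n\|D_+\|)$ is no longer order one, and a Grönwall bound over the whole phase of length $\sim(1+2\varepsilon)\frac{-\ln\lambda}{\|D_+\|}$ amplifies the initial defect $\lambda^{\varepsilon}$ by $\lambda^{-2(1+2\varepsilon)}$ — no choice of the nested exponents $\varepsilon/2,\varepsilon/5$ absorbs this. The paper's proof needs an extra structural idea you are missing: it splits the phase at $t_u=\inf\{t\mid u_+(t)\geq n\|D_+\|/7 \text{ or } h(t)\geq 5\|D_+\|\lambda^{\varepsilon}\}$, shows that on $[t_{+1},t_u]$ the restoring drift keeps $h$ at order $\lambda^{\varepsilon}$ (so $t_u$ is triggered by the norm condition), proves that the remaining time satisfies $t_2-t_u\leq\frac{-\varepsilon}{5\|D_+\|}\ln\lambda$ because the logistic growth finishes quickly once $u_+$ is macroscopic, and only then applies the exponential Grönwall bound, whose amplification $\lambda^{-2\varepsilon/5}$ turns $\lambda^{\varepsilon}$ into $\lambda^{3\varepsilon/5}<\lambda^{\varepsilon/2}$.

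A second genuine gap is your treatment of neurons outside $S_{+,1}\cup S_{-,1}$ with $\s_j=+1$: bounding their radial rate by $\max(\|D_+\|,\|D_-\|)=\|D_+\|$ is not enough, since rate $\|D_+\|$ over a time $(1+3\varepsilon)\frac{-\ln\lambda}{\|D_+\|}$ takes a neuron of size $\lambda$ to size $\sim\lambda^{-3\varepsilon}$, violating (I2). The paper instead uses that such a neuron is deactivated on at least one positive-label direction at initialisation and stays so (\cref{lemma:sectors}), giving a rate at most $(1-\alpha)\|D_+\|$ with $\alpha=\min_{k\mid y_k>0}y_k^2/(2\|D_+\|^2)$, and the choice of $\varepsilon$ with $(1+3\varepsilon)(1-\alpha)\leq 1-\varepsilon$ then closes the bound. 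Two further (repairable) slips: the time estimate should start from a \emph{lower} bound on $u_+$ at the start of the phase (the paper proves $u_+\gtrsim\lambda^{2(1+\varepsilon)}$ via a separate minimal-norm lemma; your $\lambda^{2(1-r\varepsilon)}$ is only an upper bound, though the time budget still closes), and the fact that $S_{+,1}$ neurons do not contribute to negative labels is obtained in the paper from persistent deactivation ($\langle w_j^t,x_k\rangle=0$ for $y_k<0$ after phase 1), not from a misalignment bound of size $\lambda^{\varepsilon/2}$.
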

These three points directly imply that the loss is of order $\lambda^{\frac{\varepsilon}{5}}$ on the positive labels at time $t_2$. 

\paragraph{Saddle to saddle dynamics.} As explained above, the positive labels are almost fitted by the action of the neurons belonging to $S_{+,1}$ at the end of the second phase, whereas the other neurons still have infinitesimally small norm. At this point, the dynamics has reached the vicinity of a strict saddle point and requires a long time to escape it. The analysis actually leads to the following fact:
\begin{fact}
\label{fact:saddle_to_saddle}
There exists a (strict) saddle point $\theta_S\neq 0$ of $L$ such that if $\lambda\leq\lambda^*$:
\begin{equation*}
    \hspace*{-1cm}\forall t\in\left[-\frac{1+3\varepsilon}{\|D_+\|}\ln(\lambda), -\frac{1-\varepsilon}{\|D_-\|}\ln(\lambda)\right], \quad \text{we have }\  \|\theta^t - \theta_S\| \leq \lambda^{\frac{\varepsilon}{5}}.
\end{equation*}
\end{fact}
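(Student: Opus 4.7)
I would define the saddle $\theta_S$ explicitly from the network state at the end of Phase~2, verify it is a strict saddle of $L$, and then propagate the initial closeness $\|\theta^{t_2}-\theta_S\|\lesssim\lambda^{\varepsilon/5}$ across the whole dwelling interval $[t_2,t_3]$ via Gr\"onwall comparisons decoupled by the orthogonality of the inputs. Concretely, set $w_j^S = \alpha_j D_+/\|D_+\|$ and $a_j^S = \alpha_j$ for $j\in S_{+,1}$, with $\alpha_j = \|w_j^{t_2}\|\sqrt{n\|D_+\|/u_+(t_2)}$ so that $\sum_{j\in S_{+,1}}\alpha_j^2 = n\|D_+\|$, and $w_j^S = a_j^S = 0$ otherwise. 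Orthogonality of the $x_k$'s gives $h_{\theta_S}(x_k) = y_k$ for $y_k>0$ and $h_{\theta_S}(x_k) = 0$ for $y_k<0$ (since $\langle D_+,x_k\rangle = 0$); hence the residual vanishes on positive labels, the $S_{+,1}$ neurons have activation indicator equal to $1$ exactly on $\{k:y_k>0\}$ and thus zero gradient, and the remaining (zero) neurons trivially have zero gradient, so $\theta_S$ is a critical point. \cref{ass:init} then provides an escape direction: perturbing a zero neuron to $(-\delta D_-/\|D_-\|,-\delta)$ reduces $L$ by $\Theta(\delta^2)\sum_{y_k<0}y_k^2/(n^2\|D_-\|)$, so $\theta_S$ is a strict saddle. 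Initial closeness $\|\theta^{t_2}-\theta_S\|\leq\tfrac12\lambda^{\varepsilon/5}$ then follows immediately from \cref{lemma:phase2informal} and \cref{lem:balance}: on $S_{+,1}$ the alignment bound and the rescaling $\sqrt{n\|D_+\|/u_+(t_2)} = 1+O(\lambda^{\varepsilon/5})$ give $\|w_j^{t_2}-w_j^S\| = O(\lambda^{\varepsilon/5})$ per neuron, and for the other neurons $\|(w_j^{t_2},a_j^{t_2})\|\leq 2\sqrt{2}c\lambda^\varepsilon$, absorbing the $\sqrt{m}$ factor into the choice of $\lambda^*$.

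To propagate this closeness across $[t_2,t_3]$ I would split the neurons into three groups and control~\eqref{eq:ode_norm_angle} separately on each. For $S_{+,1}$, positive-label residuals remain of order $\lambda^{\varepsilon/5}$ by a Lyapunov argument on the loss restricted to positive labels (locally strongly convex in $u_+$ near its equilibrium $n\|D_+\|$), which keeps these neurons trapped near $\theta_S$. For $S_{-,1}$, the neurons remain aligned with $-D_-/\|D_-\|$ and obey $\dd\|w_j\|/\dd t \approx \|D_-\|\|w_j\|$ as long as they are small; using the tighter Phase~1 bound $\|w_j^{t_1}\|\lesssim\lambda^{1-\varepsilon}$ (valid because the effective growth rate of these aligned neurons in Phase~1 is $\|D_-\|$, not the pessimistic $\|D_+\|=r\|D_-\|$ used for the uniform bound of \cref{lemma:phase1}(iii)), a Gr\"onwall argument yields $\|w_j^{t_3}\|\lesssim \lambda^{1-\varepsilon}\cdot\lambda^{-(1-2\varepsilon)}=\lambda^\varepsilon\leq\lambda^{\varepsilon/5}$. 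For the remaining neurons, the post-alignment direction is not aligned with $-D_-$, so their effective growth rate is strictly smaller and the same bound holds.

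\paragraph{Main obstacle.} The delicate point is that $[t_2,t_3]$ has length $\Theta(-\ln\lambda)$, so naive Gr\"onwall estimates would amplify any initial perturbation by a polynomial factor in $\lambda^{-1}$. Three structural features make the argument close: (a) orthogonality of the $x_k$'s decouples the activation patterns of the three neuron groups, since a neuron aligned with $D_+$ is orthogonal to every $x_k$ with $y_k<0$ and conversely, so cross-coupling is negligible; (b) the $S_{+,1}$ block sits near a local equilibrium of the restricted loss, yielding contractivity of perturbations in that block; (c) the escape on $S_{-,1}$ is exponential at rate exactly $\|D_-\|$ and starts from scale $\lambda^{1-\varepsilon}$, so it reaches only $\lambda^\varepsilon$ by $t_3$. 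The subtlest piece is the $O(\lambda^{\varepsilon/4})$ residual misalignment of the $S_{+,1}$ neurons, which creates a small cross-activation on negative labels; propagating this through Gr\"onwall requires it to remain subdominant to the escape rate $\|D_-\|$, which is why the exponents in the timeline have been calibrated as they are.
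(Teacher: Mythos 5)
Your overall architecture is the same as the paper's (implicit) argument: the Fact is not proved in isolation but follows from the Phase~2/Phase~3 analysis — at $t_2$ the $S_{+,1}$ block fits the positive labels while everything else is tiny (\cref{lemma:phase2informal}), and on the dwelling interval the $S_{-,1}$ neurons grow only at rate $\approx\|D_-\|$ from scale $\approx\lambda$, the other idle neurons grow strictly slower, and the positive-label residual $D_+^t$ stays contracted so the $S_{+,1}$ block barely moves (\cref{lemma:phase3informal} and its proof). Your candidate saddle, its criticality, and the strict-saddle escape direction along $(a_j,w_j)=(-\delta,-\delta D_-/\|D_-\|)$ are all correct and match what the paper leaves unsaid. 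Two points, however, are genuine soft spots. First, your $\theta_S$ is built from $\theta^{t_2}$ (through $\|w_j^{t_2}\|$ and $u_+(t_2)$) and therefore depends on $\lambda$, whereas the statement quantifies "there exists $\theta_S$ ... such that if $\lambda\le\lambda^*$", i.e.\ a single saddle for all small $\lambda$. Making $\theta_S$ $\lambda$-independent requires pinning down the limiting distribution of norms among the $S_{+,1}$ neurons, which is exactly the coupling with the $\lambda$-independent rescaled flow and the constants $d_{j,j'}$ in the proof of \cref{lemma:theta*}; your sketch contains no substitute for this step.

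Second, the trapping of the $S_{+,1}$ block is the technical crux, and "local strong convexity of the positive-label loss in $u_+$" is not by itself a proof. What must be controlled over a horizon of length $\Theta(\ln(1/\lambda))$ is the full vector residual $D_+^t$ (and through it both the norms and the directions of the $S_{+,1}$ neurons), including the perturbations caused by the \emph{small} neurons that still activate on positive labels while their norms grow; in the paper this is the fourth point of the proof of \cref{lemma:phase3informal}, which needs the auxiliary quantities $\alpha_j(t)$ for $j\in S_{-,1}$ and the integral Gr\"onwall estimate of \cref{lemma:gronwall} precisely because a naive Gr\"onwall bound would not close. Relatedly, the danger you single out — the $O(\lambda^{\varepsilon/4})$ misalignment of $S_{+,1}$ neurons creating cross-activation on \emph{negative} labels — does not actually arise: by point (iii) of \cref{lemma:phase_plus_1}, \cref{lemma:sectors} and the convention $\sigma'(0)=0$, these scalar products are exactly zero after the first phase, so the misalignment lives entirely in the span of the positive-label directions; the cross-term one really has to fight is the reverse one described above. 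Finally, note that integrating $\|D_+^t\|\lesssim\lambda^{\varepsilon/5}$ over the dwelling interval yields a drift of order $\lambda^{\varepsilon/5}\ln(1/\lambda)$, so as in the paper's own quantitative bounds the exponent one actually obtains for the distance to $\theta_S$ is slightly worse than $\varepsilon/5$; this is a matter of the Fact's loose constants rather than of your approach, but your claim of an exact $\tfrac12\lambda^{\varepsilon/5}$ budget at $t_2$ cannot be salvaged merely by shrinking $\lambda^*$, since the norm-deficit term is itself exactly of order $\lambda^{\varepsilon/5}$.
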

The training trajectory thus starts at the saddle point $0$ and passes through a second non-trivial saddle point at the end of the second phase. This lemma illustrates the phenomenon of \textit{saddle to saddle dynamics}  discussed in \cref{subsec:discussion} and conjectured for linear models by \citet{li2020towards,jacot2021deep}.
This intermediate saddle point is escaped when the norms of the neurons in $S_{-,1}$ have significantly grown (i.e. become non-zero), which happens during a third phase described below.

\paragraph{Fitting negative labels.}
The norm of the neurons in $S_{-,1}$ (which are aligned with $-D_-$) grows until fitting all negative labels during the third phase. Meanwhile, all other neurons do not move significantly.
The additional difficulty in the analysis of this phase compared to the second one is that of controlling the possible movements of neurons in $S_{+,1}$. Their norm is indeed large during the whole phase, but they do not change consequently, because the positive labels are nearly perfectly fitted.  
\looseness=-1
\begin{lem}[Third phase]\label{lemma:phase3informal}
If $\lambda\leq\lambda^*$, then for some time $t_3\leq -\frac{1+3r\varepsilon}{\|D_-\|}\ln(\lambda)$:
\begin{enumerate}[itemsep=-0.5em, topsep=-1em, label=(\roman*)]
    \item neurons in $S_{-,1}$ are aligned with $-D_-$: \quad $\forall j \in S_{-,1}, \langle \w_j^{t_3},-D_- \rangle \geq \|D_-\| - \lambda^{\frac{\varepsilon}{14}}$,
    \item neurons in $S_{-,1}$ have a large norm: \quad $\sum_{j\in S_{-,1}} \|w_j^{t_3}\|^2=n\|D_-\|-\lambda^{\frac{\varepsilon}{29}}$,
    \item neurons in $S_{+,1}$ did not move since phase $2$: \quad $\forall j\in S_{+,1}, \|w_j^{t_2}-w_j^{t_3}\|\leq \lambda^{\frac{\varepsilon}{15}}$,
        \item other neurons have small norm: \quad $\forall j \in \llbracket m \rrbracket\setminus \left(S_{+,1}\cup S_{-,1}\right), \|w_j^{t_3}\|\leq 3c\lambda^{\varepsilon}$.
\end{enumerate}
\end{lem}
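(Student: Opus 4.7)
My strategy is to mirror the analysis of the second phase (Lemma~\ref{lemma:phase2informal}), with $S_{-,1}$ and $-D_-$ now playing the role that $S_{+,1}$ and $D_+$ played before. The state at time $t_2$ supplied by Lemma~\ref{lemma:phase2informal} serves as the new ``initialisation''. I run a bootstrap on conclusions (i)--(iv): assume they hold on $[t_2,t]$ with slightly tighter constants, then use the ODEs~\eqref{eq:ode_norm_angle} to push the bounds past $t$, up to the horizon $t_3 \leq -(1+3r\varepsilon)\ln(\lambda)/\|D_-\|$.

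For items (i) and (ii), the orthogonality of the $x_k$'s decouples the driver. For $j \in S_{-,1}$, the bootstrap keeps the positive-label residuals at scale $O(\lambda^{\varepsilon/5})$, while on the negative-label subspace a direct computation using orthogonality and the tight alignment of the $S_{-,1}$ cluster around $-D_-/\|D_-\|$ gives $h_{\theta^t}(x_k) - y_k \approx y_k\bigl(u_-(t)/(n\|D_-\|) - 1\bigr)$, with $u_-(t) \coloneqq \sum_{j \in S_{-,1}} \|w_j^t\|^2$. Substituting in the radial ODE yields the approximate logistic equation
\[
\frac{d u_-(t)}{dt} \;\approx\; 2\|D_-\|\, u_-(t)\left(1 - \frac{u_-(t)}{n\|D_-\|}\right),
\]
starting from $u_-(t_2) \lesssim \lambda^{2\varepsilon}$ by Lemma~\ref{lemma:phase2informal}(iii). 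A Gr\"onwall comparison with the explicit logistic solution gives item (ii) within the required time budget, and item (i) follows from the tangential ODE, which drives $\w_j^t$ towards the attractor $-D_-/\|D_-\|$ at a rate proportional to $u_-(t)\|D_-\|/n$.

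The central obstacle is item (iii). The $S_{+,1}$ neurons have norms of order one at $t_2$, so even a small driver $D^{\theta^t}_j$ integrated over a window of length $\Theta(-\ln\lambda)$ could produce displacements far beyond $\lambda^{\varepsilon/15}$. For $j \in S_{+,1}$, orthogonality confines $D^{\theta^t}_j$ essentially to the positive-label subspace (a potential contamination from negative-label directions is ruled out by the small projection of $w_j^t$ there, itself maintained by the bootstrap). I would therefore track a self-consistent loop on $\eta^t \coloneqq \max_{k : y_k > 0} |h_{\theta^t}(x_k) - y_k|$: the residual dynamics on the positive-label subspace reduce, through orthogonality, to a contractive linear ODE $\dot\eta^t \lesssim -\|D_+\|\,\eta^t + (\text{error})$ with no contamination from $S_{-,1}$, since these neurons activate only on the orthogonal negative-label subspace. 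Hence $\eta^t$ stays of order $\lambda^{\varepsilon/5}$ throughout the phase, and plugging this back into $\|\dot w_j^t\| \lesssim \eta^t\,\|w_j^t\|$ integrates to the claimed $\lambda^{\varepsilon/15}$ bound on the displacement.

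Item (iv) follows from a sign analysis of the radial ODE for the remaining neurons. A neuron $j \notin S_{+,1}\cup S_{-,1}$ has, by definition, an initial activation pattern that either misses a label of its own sign or hits labels of the opposite sign. A case split on $\s_j$ and the activation pattern $\mathsf{A}(w_j^t)$, combined with orthogonality of the $x_k$'s and the alignment of the $S_{+,1}$ and $S_{-,1}$ neurons, shows that $\s_j \langle D^{\theta^t}_j, \w_j^t\rangle$ is bounded by a quantity whose integral between $t_2$ and $t_3$ is $O(1)$, so $\|w_j^{t_3}\| \leq \tfrac{3}{2}\|w_j^{t_2}\| \leq 3c\lambda^\varepsilon$. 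The main technical difficulty is keeping all four bootstrap bounds mutually consistent: the slack in each conclusion must absorb the errors propagated by the others, which is precisely what dictates the hierarchy of exponents $\varepsilon/14,\ \varepsilon/15,\ \varepsilon/29$ appearing in the statement.
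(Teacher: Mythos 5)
Your treatment of items (i) and (ii) follows the paper's route (logistic ODE for $u_-(t)$ plus tangential alignment, bootstrapped from the state at $t_2$), but the heart of the lemma is item (iii), and there your key claim — that the positive-label residual obeys a contractive ODE ``with no contamination from $S_{-,1}$, since these neurons activate only on the orthogonal negative-label subspace'' — is exactly the step that cannot be taken for granted. At the end of the first phase the neurons of $S_{-,1}$ are not strictly deactivated on the positive-label directions: the scalar products $\langle w_j^t,x_k\rangle$ ($y_k>0$) have been driven to \emph{exactly} zero, and since the flow of a non-smooth ReLU network is not unique, admissible trajectories can reactivate these coordinates as soon as the positive residual changes sign (which it may do once the positive labels are overfitted by a margin of order $\lambda^{\varepsilon/5}$). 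Because the $S_{-,1}$ norms grow to order $n\|D_-\|$ during this very phase, even an infinitesimal activation on a positive direction is multiplied by an order-one weight and feeds back into the positive residual. The paper therefore tracks the quantity $\alpha_j(t)=\bigl(\sum_{k\mid y_k>0,\ \langle \w_j^t,x_k\rangle>0}\langle \w_j^t,x_k\rangle^2\bigr)^{1/2}$ for $j\in S_{-,1}$, shows that $\alpha_j$ is itself driven by $\|D_+^t\|$, and resolves the resulting two-way coupling through an integro-differential inequality of the form $\frac{\dd}{\dd t}\|D_+^t\|\leq -a\|D_+^t\|+b\int_\tau^t\|D_+^s\|\dd s+c$, handled by a dedicated auxiliary Gr\"onwall-type lemma; only this yields $\|D_+^t\|\leq\lambda^{\varepsilon/14}$ on $[t_2,t_3]$ (note the exponent degrades from $\varepsilon/5$ to $\varepsilon/14$ precisely because of this coupling). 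Your sketch has no mechanism for this feedback loop, so the displacement bound $\|w_j^{t_2}-w_j^{t_3}\|\leq\lambda^{\varepsilon/15}$ is not established.

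A secondary problem is item (iv). For a neuron $j\notin S_{+,1}\cup S_{-,1}$ with $\s_j=-1$, the radial driver during the third phase is of order $(1-\beta)\|D_-\|$ (the negative labels are not yet fitted), and the phase lasts a time of order $-\ln\lambda/\|D_-\|$; the integral of the driver over $[t_2,t_3]$ is therefore of order $-(1-\beta)\ln\lambda$, not $O(1)$, and such a neuron can grow by a factor polynomial in $1/\lambda$ between $t_2$ and $t_3$. Your conclusion $\|w_j^{t_3}\|\leq\tfrac32\|w_j^{t_2}\|$ is thus false for this subset; the correct argument (as in the paper) integrates the bound from $t=0$, compares with the initial scale $c\lambda$, and uses the choice $(1+3r\varepsilon)(1-\beta)\leq 1-\varepsilon$ to land at $3c\lambda^{\varepsilon}$. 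The $\s_j=+1$ stragglers, by contrast, are driven only by $\|D_+^t\|\leq\lambda^{\varepsilon/14}$ during this phase, so for them the $O(1)$-growth reasoning is fine — but this again presupposes the control of $\|D_+^t\|$ that is missing from your item (iii).
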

Thanks to the orthogonality assumption, the set of minimal $\ell_2$-norm interpolators can be exactly described by \cref{prop:KKT} in \cref{app:KKT}. The minimal interpolators are actually \textit{equivalent} to a neural network of width $2$: the first hidden neuron is collinear with $D_+$ and the second one is collinear with $-D_-$.
\cref{lemma:phase3informal} then ensures at the end of the third phase that the parameter vector is $\lambda^{\frac{\varepsilon}{29}}$-close to an interpolator $\theta^*$ of minimal $\ell_2$-norm, that does not depend on $\lambda$ (see \cref{lemma:theta*}). It remains to show that the training trajectory converges to a point close to this interpolator at infinity.

\paragraph{Convergence phase.} To show this final convergence, we use a local PL condition given by \cref{lemma:PL}.
\begin{lem}[Local PL condition]\label{lemma:PL}
For $\lambda\leq\lambda^*$, we have the following lower bound on the PL constant
\begin{equation*}
\inf_{\theta \in B(\theta^*,\ \lambda^{\frac{\varepsilon}{240}}) \cap \Theta} \frac{\|\nabla L (\theta)\|^2}{L(\theta)} \geq \|D_-\|,
\end{equation*}
where $\Theta$ is the set of parameters verifying the balancedness property.
\end{lem}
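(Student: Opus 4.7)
The plan is to directly lower bound $\|\nabla L(\theta)\|^2$ by summing the contributions from the neurons in $S_{+,1}\cup S_{-,1}$ only. Writing $r_k := h_\theta(x_k) - y_k$ so that $L(\theta) = \frac{1}{2n}\|r\|^2$, by orthonormality of $(x_k)_k$ one has
\begin{equation*}
\left\|\frac{\partial L}{\partial w_j}\right\|^2 = \left\|\frac{a_j}{n}\sum_{k \,:\, \langle w_j,x_k\rangle>0} r_k\, x_k\right\|^2 = \frac{a_j^2}{n^2}\sum_{k \,:\, \langle w_j,x_k\rangle>0} r_k^2,
\end{equation*}
and the non-negative $(\partial L/\partial a_j)^2$ contributions will simply be dropped.

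The crucial first step is to describe $\theta^*$ and the stability of activation patterns on $B(\theta^*, \lambda^{\varepsilon/240}) \cap \Theta$. Combining \cref{lemma:phase3informal} with the KKT characterization (\cref{prop:KKT}) of minimum $\ell_2$-norm interpolators, $\theta^*$ must satisfy: the neurons in $S_{+,1}$ are aligned with $D_+$ with $\sum_{j \in S_{+,1}}\|w_j^*\|^2 = n\|D_+\|$, those in $S_{-,1}$ are aligned with $-D_-$ with $\sum_{j \in S_{-,1}}\|w_j^*\|^2 = n\|D_-\|$, and the remaining neurons vanish; balancedness gives $a_j^* = \s_j\|w_j^*\|$. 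By orthonormality, for $j \in S_{+,1}$ and $k \in \Sigma_+ := \{k : y_k > 0\}$ one has $\langle w_j^*, x_k\rangle = \|w_j^*\|y_k/(n\|D_+\|)$, a positive data-dependent constant; for $\lambda$ small enough so that $\lambda^{\varepsilon/240}$ is below this margin, every $\theta \in B(\theta^*, \lambda^{\varepsilon/240})$ still has neurons $j \in S_{+,1}$ activating on all of $\Sigma_+$, and symmetrically for $S_{-,1}$ on $\Sigma_-$. The activations of $S_{+,1}$ on $\Sigma_-$ (and vice versa) are ambiguous, but the magnitudes $|\langle w_j, x_k\rangle|$ there are at most $\lambda^{\varepsilon/240}$.

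Thus for $j \in S_{+,1}$, discarding any spurious contribution from activations on $\Sigma_-$,
\begin{equation*}
\left\|\frac{\partial L}{\partial w_j}\right\|^2 \geq \frac{\|w_j\|^2}{n^2}\|r_{\Sigma_+}\|^2,
\end{equation*}
using balancedness $a_j^2 = \|w_j\|^2$; a symmetric estimate holds on $S_{-,1}$. Using Lipschitz continuity of $\theta \mapsto \sum_{j \in S_{\pm,1}}\|w_j\|^2$ around $\theta^*$ to get $\sum_{j \in S_{\pm,1}}\|w_j\|^2 \geq n\|D_\pm\| - O(\lambda^{\varepsilon/240})$ on the ball, summing, and applying $\|D_+\| \geq \|D_-\|$, one obtains
\begin{equation*}
\|\nabla L\|^2 \geq \frac{\|D_-\| - O(\lambda^{\varepsilon/240})}{n}\|r\|^2 = \bigl(2\|D_-\| - O(\lambda^{\varepsilon/240})\bigr)\,L \geq \|D_-\|\,L
\end{equation*}
for $\lambda \leq \lambda^*$ small enough. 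The main obstacle is the activation-stability claim of the previous paragraph, but the margin $\min_{j \in S_{+,1},\, k \in \Sigma_+}|\langle w_j^*, x_k\rangle|$ is a strictly positive constant depending only on the data, hence dominates $\lambda^{\varepsilon/240}$; a secondary point is that the precise distribution of squared norms within $S_{+,1}$ (resp.\ $S_{-,1}$) at $\theta^*$, inherited from Phase 2 (resp.\ Phase 3), does not affect the argument, which uses only the aggregates $\sum_{j \in S_{\pm,1}}\|w_j\|^2$.
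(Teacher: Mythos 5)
Your proof is correct and takes essentially the same route as the paper's own argument (the combination of \cref{lem:PL} and \cref{lem:sourav} in the appendix): keep only the hidden-layer part of the gradient, use balancedness $a_j^2=\|w_j\|^2$, lower bound each neuron's factor by the residual restricted to positive (resp.\ negative) labels because neurons of $S_{+,1}$ (resp.\ $S_{-,1}$) activate on all of them, and bound the aggregates $\sum_{j\in S_{\pm,1}}\|w_j\|^2$ by roughly $n\|D_\pm\|$ on the ball. The only notable difference is that you make the activation-pattern stability explicit via the positive margin $\langle w_j^*,x_k\rangle>0$ at $\theta^*$, a step the paper leaves implicit when it invokes ``the definition of the set $S_{+,1}$''.
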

Adapting arguments from the recent work by \citet{chatterjee2022convergence}, this implies that the training trajectory converges to an interpolator and stays in the aforementioned ball. It thus converges exponentially at a rate $\|D_-\|$ to a point close to a minimal norm interpolator, and the distance to this point goes to $0$ when $\lambda$ goes to $0$, hence implying \cref{thm:main}. This exponential rate is only asymptotical: the dynamics still require a large time $-{\ln(\lambda)}/{\|D_-\|}$ to escape the two first saddles.

\begin{rk}[Local PL in the ReLU case]
Note that, strictly speaking, Theorem 1.1 of \citet{chatterjee2022convergence} cannot be applied directly to our setup because (i) the infimum is taken on the intersection of a ball and the set $\Theta$ of balanced weights, (ii) the ReLU and hence the loss are not $C^2$ as required. We thank Spencer Frei for bringing us this point after the paper acceptance.
Yet, the reformulation of \citet{chatterjee2022convergence} in our case is straightforward and does not provide any more insight. To avoid confusion, we decide to omit the proof here and postpone its writing in an independent note for later. This will also be the occasion to put emphasis on the fact that, in some cases, such a result can be applied in the ReLU case without much difficulty.
\end{rk}

\section{Experiments}\label{sec:expe}

This section confirms empirically the dynamics described in \cref{sec:dynamics} on an orthogonal toy example. 
The code and animated versions of the figures are available in \url{github.com/eboursier/GFdynamics}.
Additional experiments can be found in \cref{app:expe}; they illustrate the necessity of small initialisation for implicit bias and present similar experiments on non-orthogonal toy data. For the latter, we observe some similar training phenomena, but major differences appearing in the dynamics highlight the difficulty of dealing with non-orthogonality.

We consider the following two-point dataset: $x_1,y_1 = (-0.5, 1), -1$ and $x_2,y_2 = (2,1), 1$. It corresponds to unidimensional data with a second $1$ coordinate for the bias term. We choose unidimensional data for a simpler visualisation. However, it restricts the number of observations to $n=2$ to maintain orthogonality. Also, the inputs' norms are not $1$ here, but we recall that our analysis is not specific to this case.
The width of the neural network is $m=60$. We choose a balanced initialisation at scale 
$\lambda=10^{-6}/\sqrt{m}$. We then run gradient descent with a step size $10^{-3}$ to approximate the gradient flow trajectory.

   \begin{figure}[htbp]
        \centering
        \begin{subfigure}[b]{0.45\textwidth}
            \centering
            \includegraphics[trim=10pt 0pt 10pt 30pt, clip, width=\textwidth]{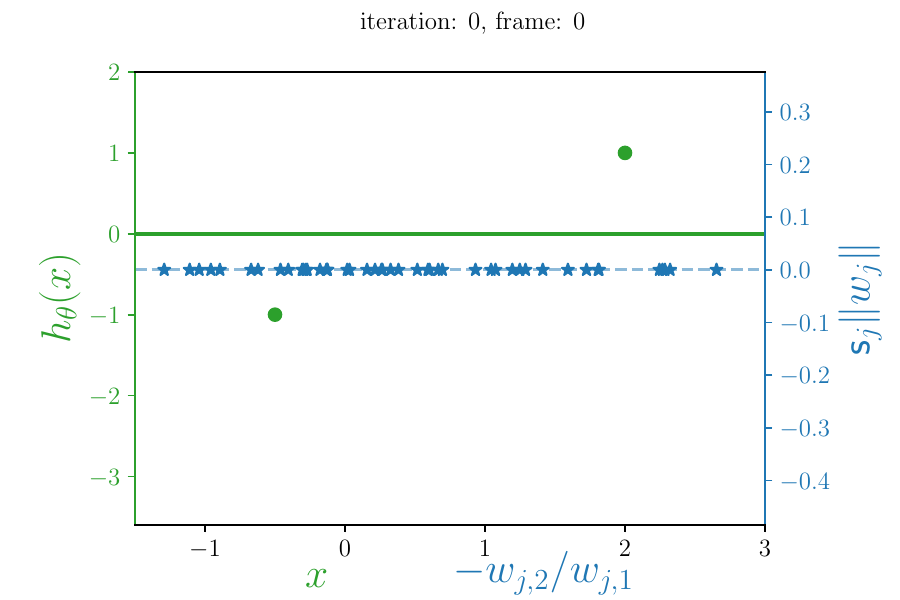}
            \caption{\label{fig:ortho0}{\small Initialisation (Iteration $0$)}}  
        \end{subfigure}
        \hfill
        \begin{subfigure}[b]{0.45\textwidth}  
            \centering 
            \includegraphics[trim=10pt 0pt 10pt 30pt, clip, width=\textwidth]{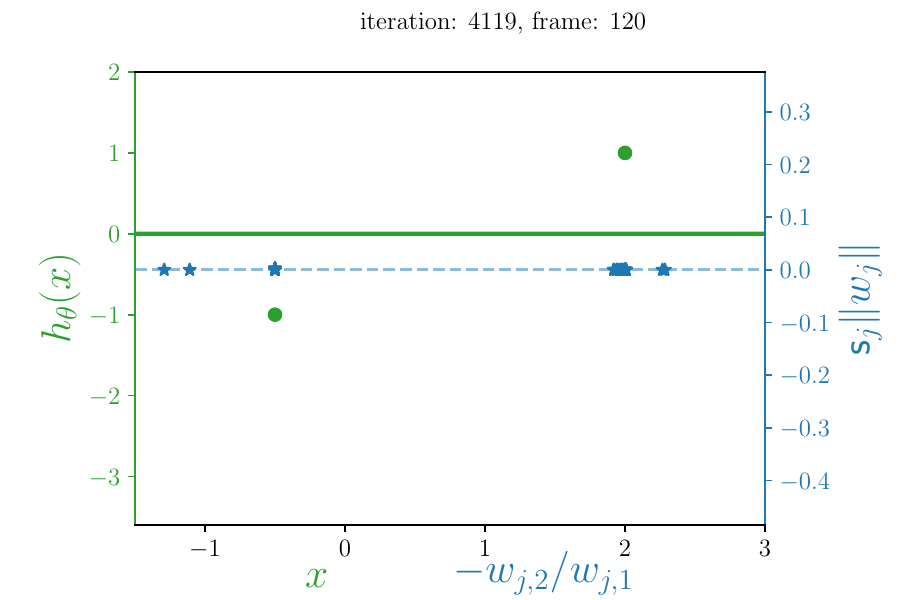}
            \caption{\label{fig:ortho1}{\small End of first phase (Iteration $4000$)}}    
        \end{subfigure}
        \vskip\baselineskip
        \begin{subfigure}[b]{0.45\textwidth}   
            \centering 
            \includegraphics[trim=10pt 0pt 10pt 30pt, clip, width=\textwidth]{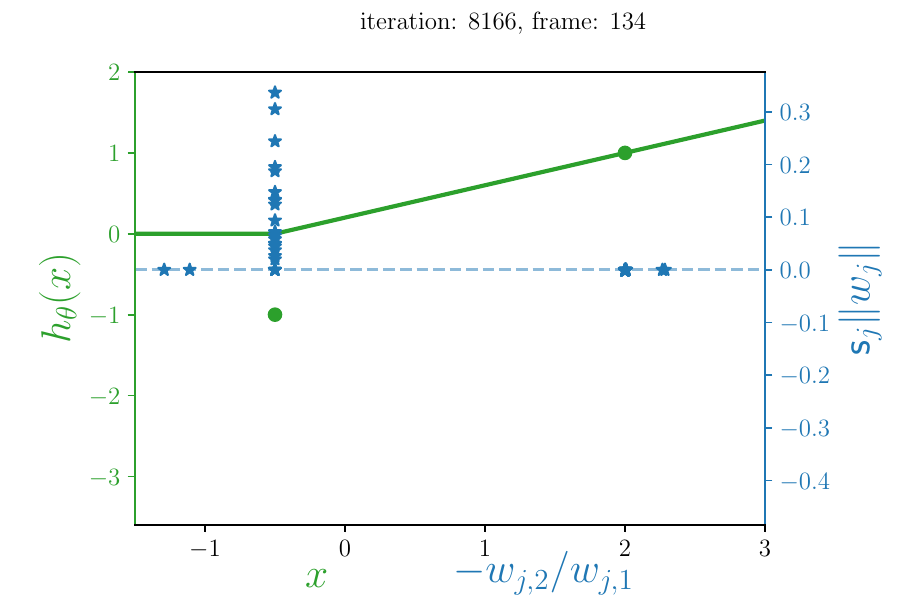}
            \caption{\label{fig:ortho2}{\small End of second phase (Iteration $8000$)}}    
        \end{subfigure}
        \hfill
        \begin{subfigure}[b]{0.45\textwidth}   
            \centering 
            \includegraphics[trim=10pt 0pt 10pt 30pt, clip, width=\textwidth]{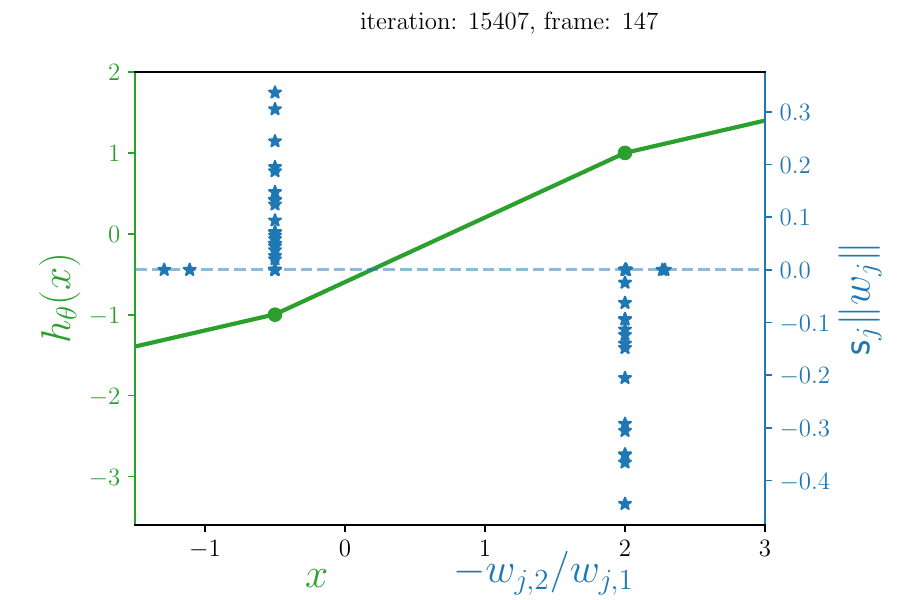}
            \caption{\label{fig:ortho3}{\small End of third phase (Iteration $15000$)}}    
        \end{subfigure}
        \caption{\label{fig:dynamics} State of training at different stages. The green dots correspond to the data, while the green line is the estimated function $h_\theta$. Each blue star represents a neuron $w_j$: its $x$-axis value is given by $-w_{j,2}/w_{j,1}$, which coincides with the position of the kink of its associated ReLU; its $y$-axis value is given by $\s_j\|w_j\|$, which we recall is the associated value of the output layer.} 
              \vspace{.6em}
    \end{figure}
    
    \cref{fig:dynamics} shows the training dynamics on this example. In particular, the state of the network is shown at different steps. In \cref{fig:ortho0}, all the neurons are close to $0$ at initialisation. 
    \cref{fig:ortho1} shows the end of the first phase, where the neurons are aligned towards two key directions.
    After the second phase, shown in \cref{fig:ortho2}, all the neurons aligned with $D_+$ have grown in norm and the positive label is perfectly fitted. 
    Similarly at the end of third phase in \cref{fig:ortho3}, all neurons aligned with $-D_-$ have grown in norm and the negative label is fitted.
    
    At the end of training, the loss is $0$ and the estimated function is \textit{simple}. In particular, it only has two kinks, which illustrates the sparsity induced by the implicit bias. Also, the final estimated function might be counter-intuitive. Previous works on implicit bias indeed conjectured that the learned estimator is linear if the data can be linearly fitted~\citep{kalimeris2019sgd,lyu2021gradient}. However, the learned function in \cref{fig:ortho3} has a smaller $\mathcal{F}_1$-norm than the linear interpolator. 

    \begin{minipage}{0.51\linewidth}
    \cref{fig:saddle} shows the evolution of the loss during training. The saddle to saddle dynamics is well observed here: the parameters vector starts from the $0$ saddle point at initialisation and needs $5000$ iterations to leave this first saddle. A second saddle is then encountered at the end of the second phase and the trajectory only leaves this saddle around iteration $11000$, once the norm of the neurons in $S_{-,1}$ start being significant during the third phase. All these different experiments confirm \cref{thm:main} and the precise dynamics described in \cref{sec:dynamics}. Moreover, such training phenomena are not specific to the orthogonal data case, as observed in \cref{app:expe}.
    \end{minipage}\hspace{10pt}
    \begin{minipage}{0.45\linewidth}
    \begin{figure}[H]
        \centering
        \includegraphics[trim=5pt 15pt 5pt 15pt, clip, width=\textwidth]{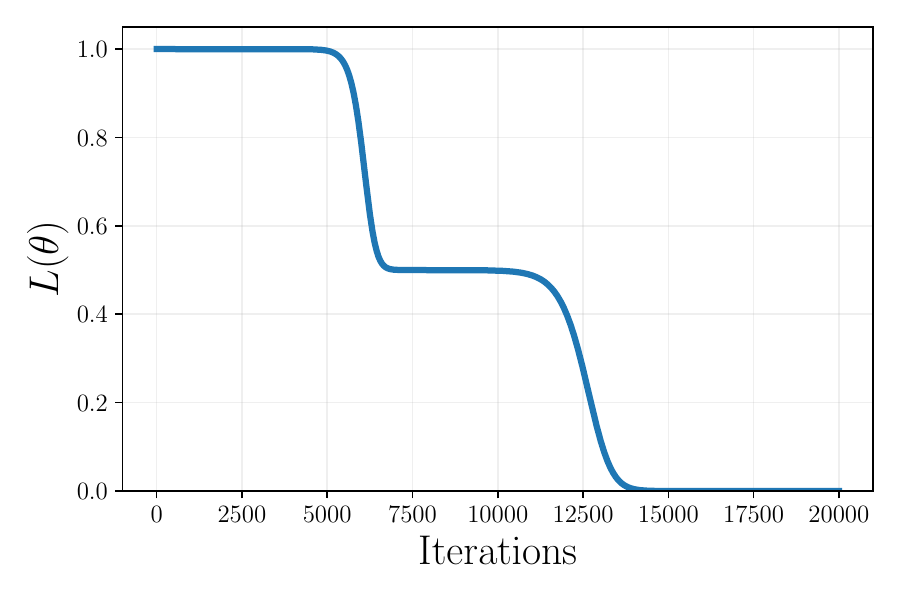}
        \caption{Evolution of the training loss.}
        \label{fig:saddle}
    \end{figure}
    \end{minipage}
\section{Conclusion and perspectives}

We have shown that the training of non-linear neural networks on orthogonal data presents a rich dynamics with a small and omnidirectional intialisation. Convergence holds generically despite a truly non-convex landscape and the limit enjoys an implicit bias as a minimum $\ell_2$ parameter norm. Obviously, removing the orthogonal assumption on the inputs, while keeping a fine level of description is a major, but difficult, perspective for future work. 
Another key point to better understand the good generalisation of neural networks is to analyse the properties of the functions solving the minimum variation norm problem stated in \cref{eq:equivalence_ell2_F1}. 

%
%
%
%
%
\clearpage
%
%
%
%
%
%


\bibliographystyle{plainnat}
\bibliography{neurips_2022}

\clearpage

\appendix

\addcontentsline{toc}{section}{Appendix} 
\part{Appendix} 
\parttoc 

\section{Additional experiments}\label{app:expe}

\subsection{Unidimensional data}

This section presents additional experiments in the general setting where data are non-orthogonal. To be able to visualize the results, similarly to \cref{sec:expe}, we consider unidimensional data with a bias term, but with $5$ data points. Here again, we consider a neural network of width $m=60$ and run gradient descent with step size $10^{-3}$.

   \begin{figure}[htbp]
        \centering
        \begin{subfigure}[b]{0.475\textwidth}
            \centering
            \includegraphics[trim=10pt 0pt 10pt 30pt, clip, width=\textwidth]{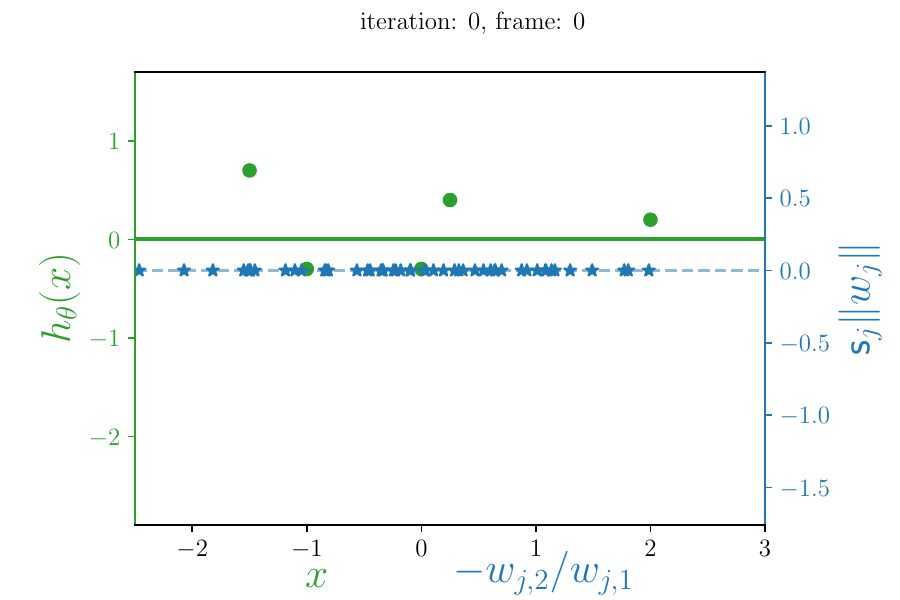}
            \caption{\label{fig:nonortho0}{\small Initialisation (Iteration $0$)}}  
        \end{subfigure}
        \hfill
        \begin{subfigure}[b]{0.475\textwidth}  
            \centering 
            \includegraphics[trim=10pt 0pt 10pt 30pt, clip, width=\textwidth]{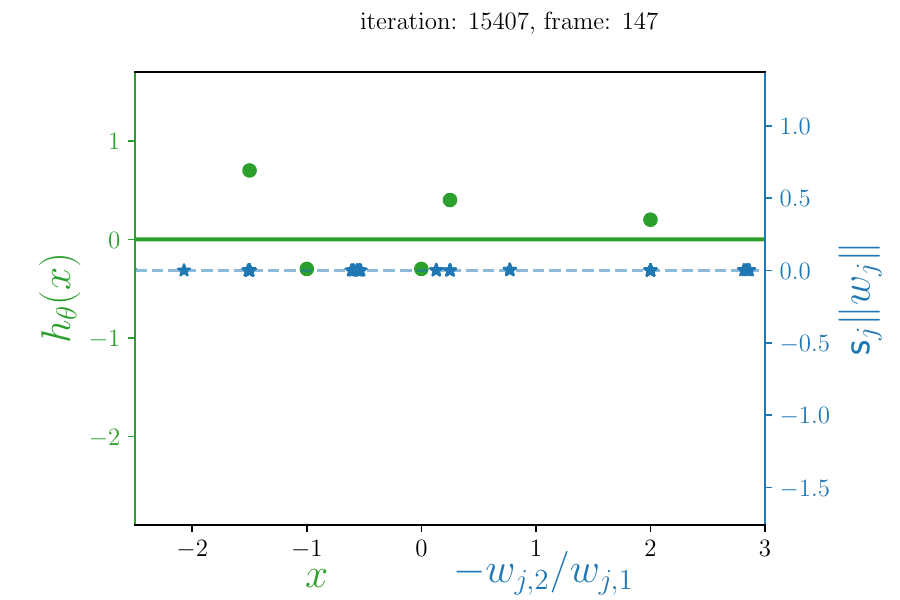}
            \caption{\label{fig:nonortho1}{\small End of early alignment (Iteration $15000$)}}    
        \end{subfigure}
        \vskip\baselineskip
        \begin{subfigure}[b]{0.475\textwidth}   
            \centering 
            \includegraphics[trim=10pt 0pt 10pt 30pt, clip, width=\textwidth]{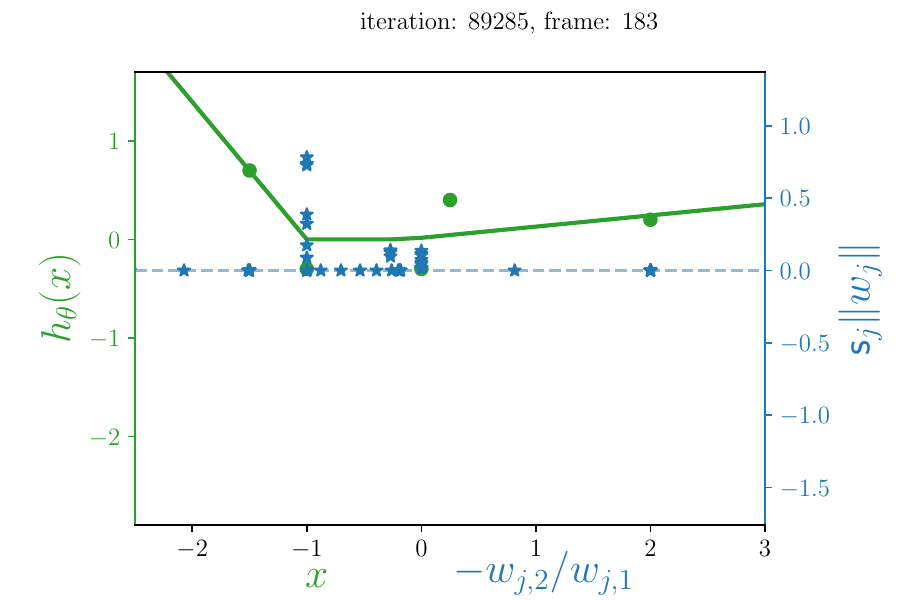}
            \caption{\label{fig:nonortho2}{\small Intermediate saddle (Iteration $94000$)}}    
        \end{subfigure}
        \hfill
        \begin{subfigure}[b]{0.475\textwidth}   
            \centering 
            \includegraphics[trim=10pt 0pt 10pt 30pt, clip, width=\textwidth]{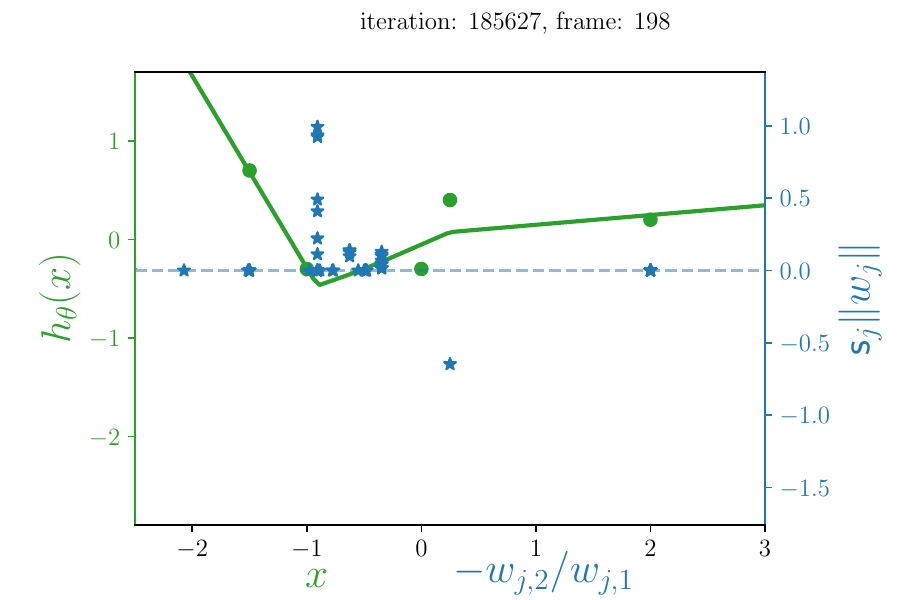}
            \caption{\label{fig:nonortho3}{\small Simultaneous norm growth and realignment (Iteration $150000$)}}    
        \end{subfigure}
        \vskip\baselineskip
        \begin{subfigure}[b]{0.475\textwidth}   
            \centering 
            \includegraphics[trim=10pt 0pt 10pt 30pt, clip, width=\textwidth]{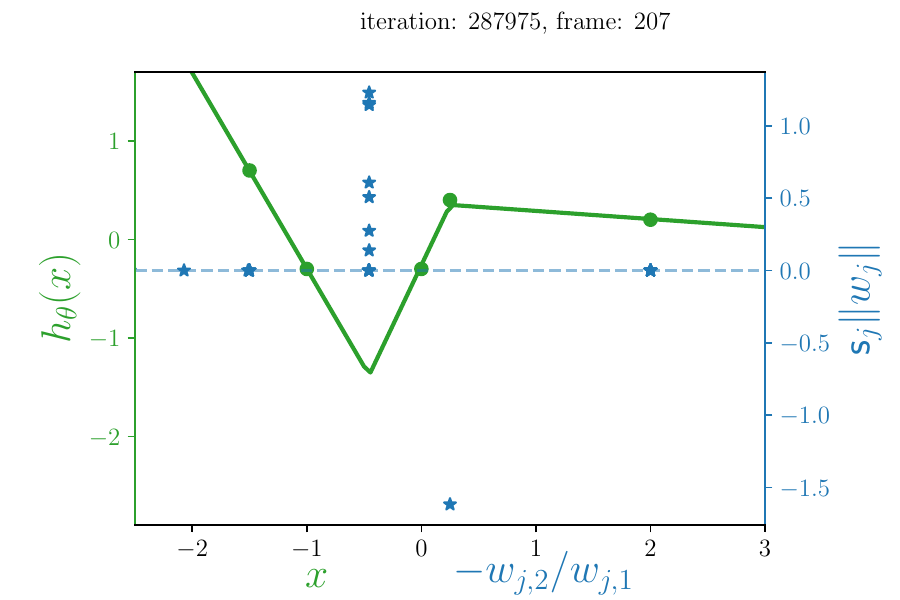}
            \caption{\label{fig:nonortho4}{\small Final state (Iteration $300000$)}}    
        \end{subfigure}
        \hfill
        \begin{subfigure}[b]{0.475\textwidth}   
            \centering 
            \includegraphics[width=\textwidth]{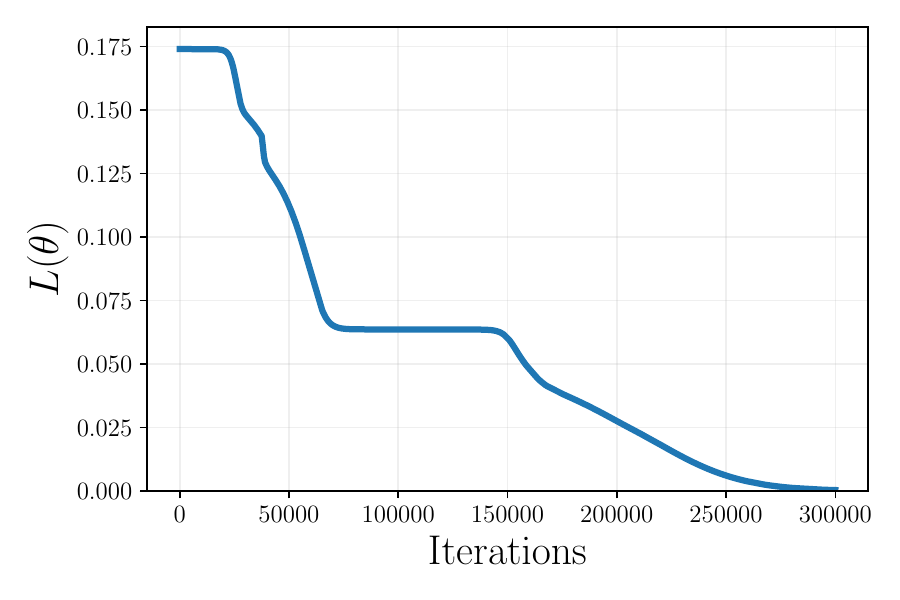}
            \caption{\label{fig:nonorthoprofile}{\small Loss profile}}    
        \end{subfigure}
        \caption{\label{fig:non-ortho} State of training at different stages and loss profile. } 
    \end{figure}

Similarly to \cref{fig:dynamics}, \cref{fig:non-ortho} presents the training dynamics with a small initialisation ($\lambda=\frac{10^{-4}}{\sqrt{m}}$). As in the orthogonal case, we first observe an early alignment phase in \cref{fig:nonortho1}. Afterwards, two groups of neurons (against a single group in the orthogonal case) grow in norms until reaching an intermediate saddle point in \cref{fig:nonortho2}. Note that during this norm-growth phase, the group of neurons does not remain aligned with a fixed direction, but changed in direction. A similar phenomenon happens when leaving the intermediate saddle point in \cref{fig:nonortho3}: the norm of these neurons still grow but they also change their direction. This behavior is what makes the general case fundamentally harder to analyse than the orthogonal one, where such a behavior does not happen. We believe that controlling this type of behavior is the key towards dealing with the general set up.

In \cref{fig:nonortho4}, we have a zero training loss and as in the orthogonal case, the estimated function is sparse in its number of kinks. \cref{fig:nonorthoprofile} shows that a saddle to saddle dynamics also happens in this general setup.

   \begin{figure}[htbp]
        \centering
        \begin{subfigure}[b]{0.475\textwidth}
            \centering
            \includegraphics[trim=10pt 0pt 10pt 30pt, clip, width=\textwidth]{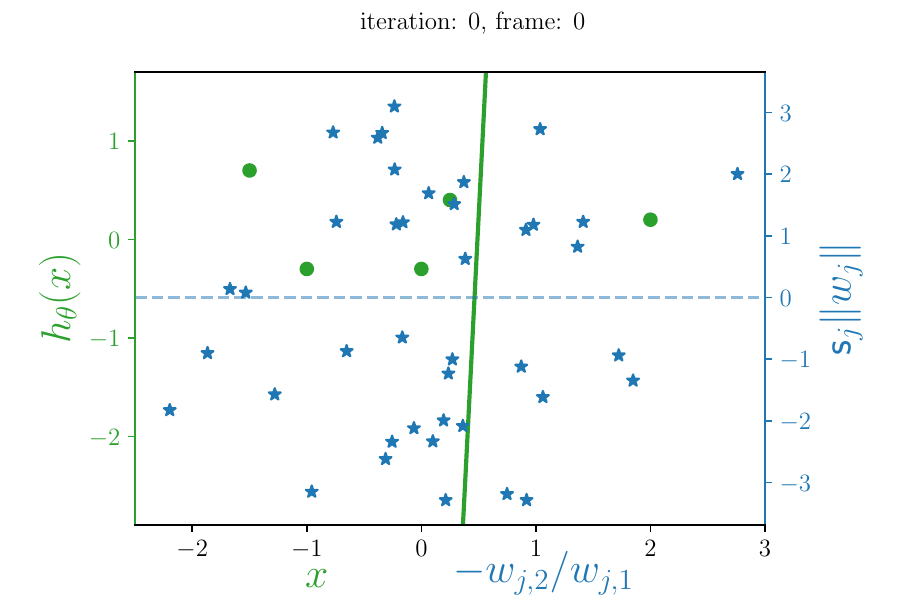}
            \caption{\label{fig:largeinit0}{\small Initialisation (Iteration $0$)}}  
        \end{subfigure}
        \hfill
        \begin{subfigure}[b]{0.475\textwidth}  
            \centering 
            \includegraphics[trim=10pt 0pt 10pt 30pt, clip, width=\textwidth]{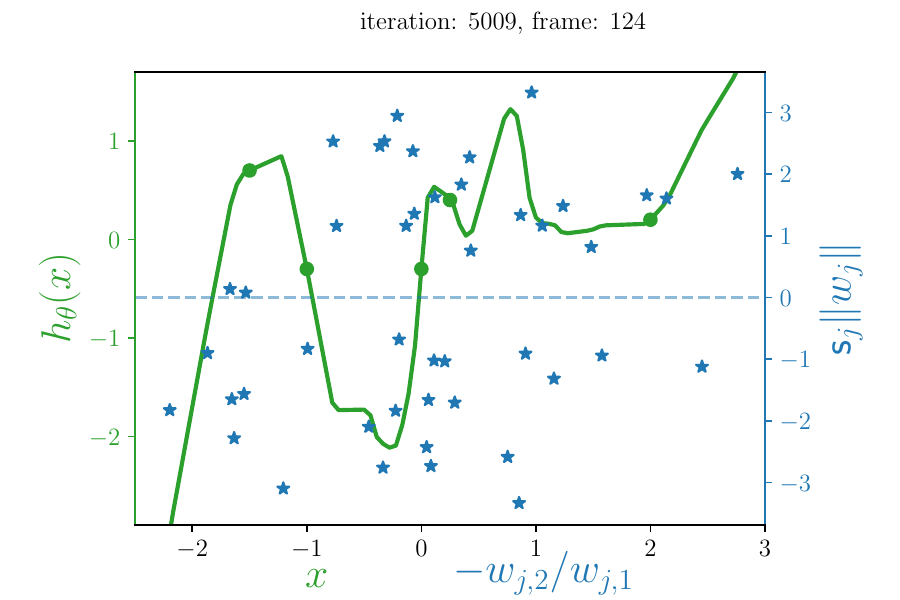}
            \caption{\label{fig:largeinit1}{\small End of training (Iteration $5000$)}}    
        \end{subfigure}
   \caption{\label{fig:largeinit} Training dynamics for large initialisation.} 
    \end{figure}
    
\cref{fig:largeinit} on the other hand studies the impact of the initialisation scale. In particular, it shows the training dynamics for a large scale of initialisation ($\lambda=\frac{10}{\sqrt{m}}$). By comparing \cref{fig:largeinit0,fig:largeinit1}, we indeed observe the \textit{lazy regime}~\citep{chizat2019lazy}: the neuron weights do not significantly move between the initialisation and the end of training. 

The final estimated function is not as simple as for small initialisation: it is approximately the interpolator coming from the Neural Tangent Kernel at initialisation, whose associated RKHS is described in~\citet{bietti2019inductive}. The strong biased induced by initialisation and the poor generalising properties of this kernel interpolator illustrate the benefits of the \textit{rich regime} obtained for small initialisation. However, this large initialisation has the advantage of converging towards $0$ loss much faster: the trajectory indeed does not go through saddle points that might significantly slow down the learning process. Similar results are observed for large initialisations when either considering unbalanced initialisation or orthogonal data.

\subsection{High dimensional data}\label{app:expehighdim}

 \begin{figure}[htbp]
        \centering
        \begin{subfigure}[b]{0.45\textwidth}
            \centering
            \includegraphics[trim=90pt 10pt 80pt 15pt, clip, width=\textwidth]{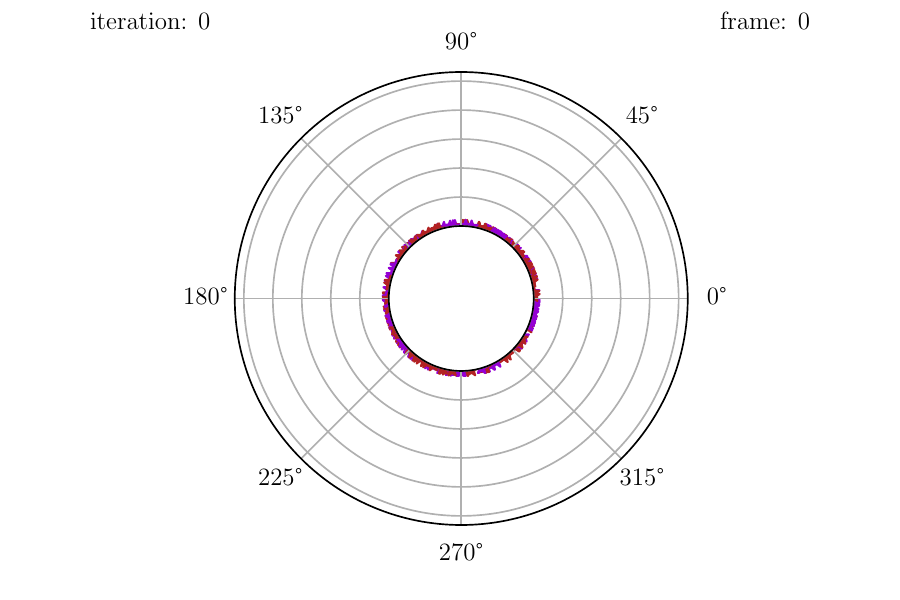}
            \caption{\label{fig:highdim0}{\small Initialisation (Iteration $0$)}}  
        \end{subfigure}
        \hfill
        \begin{subfigure}[b]{0.45\textwidth}  
            \centering 
            \includegraphics[trim=90pt 10pt 80pt 15pt, clip, width=\textwidth]{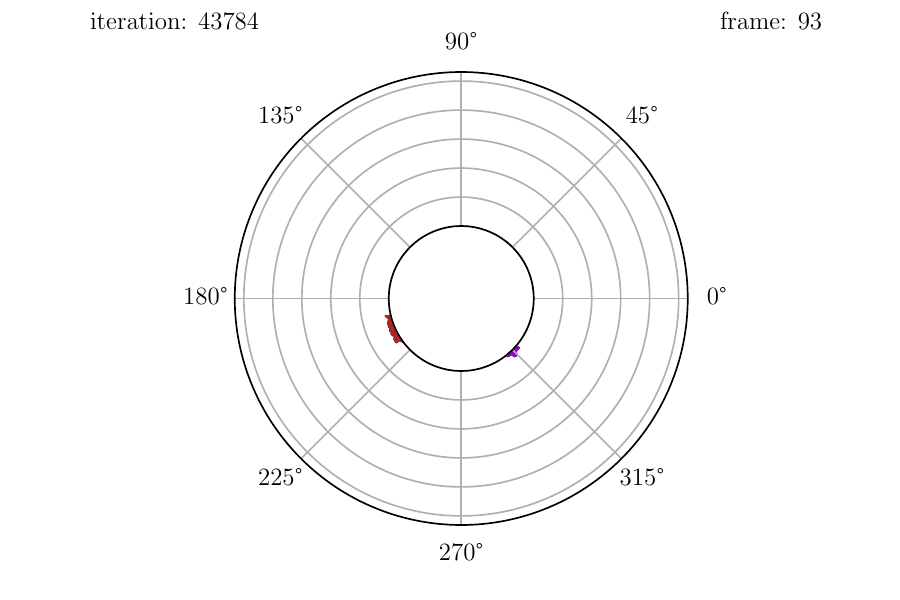}
            \caption{\label{fig:highdim1}{\small End of alignment phase (Iteration $40000$)}}    
        \end{subfigure}
        \vskip\baselineskip
        \begin{subfigure}[b]{0.45\textwidth}   
            \centering 
            \includegraphics[trim=90pt 10pt 80pt 15pt, clip, width=\textwidth]{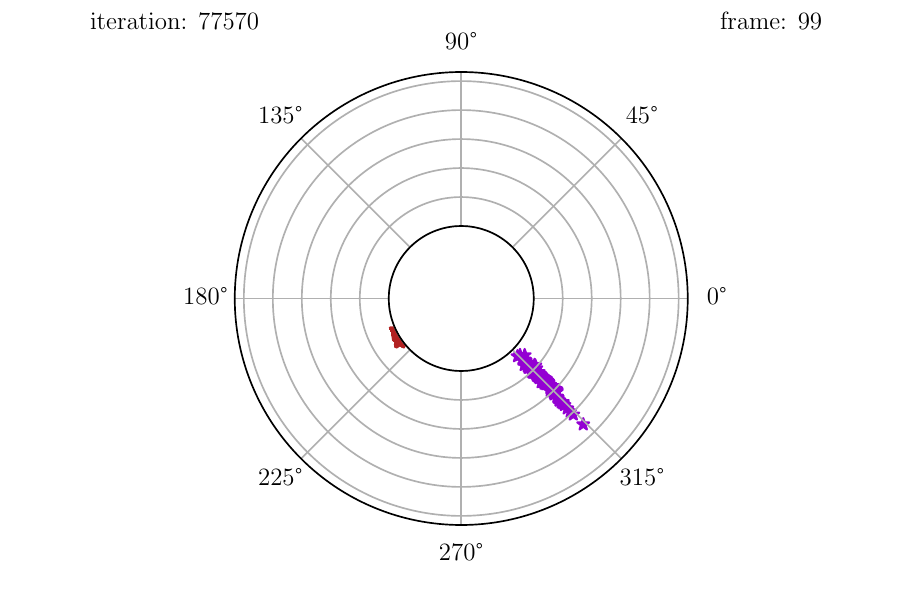}
            \caption{\label{fig:highdim2}{\small Growth of first cluster (Iteration $80000$)}}    
        \end{subfigure}
        \hfill
        \begin{subfigure}[b]{0.45\textwidth}   
            \centering 
            \includegraphics[trim=90pt 10pt 80pt 15pt, clip, width=\textwidth]{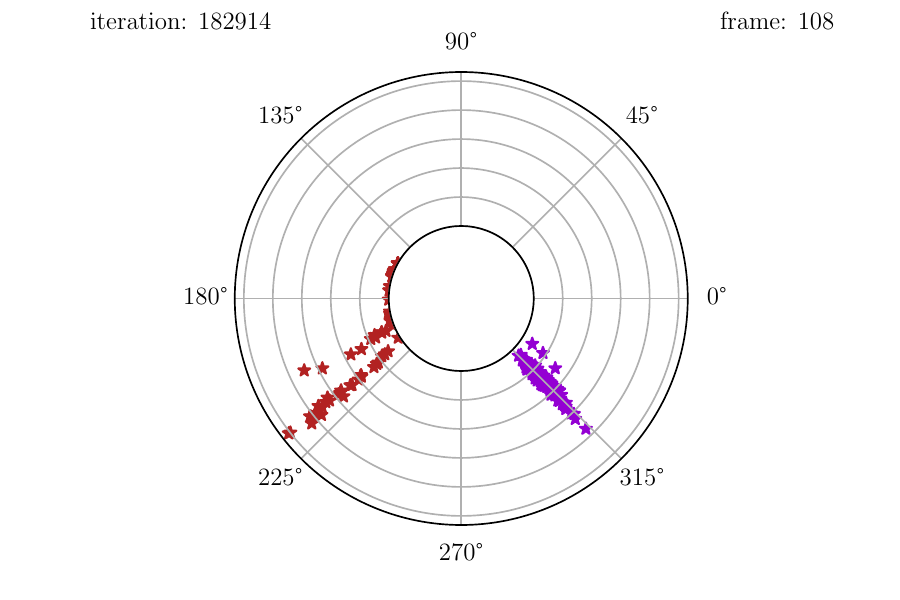}
            \caption{\label{fig:highdim3}{\small Final state (Iteration $200000$)}}    
        \end{subfigure}
        \caption{\label{fig:highdim}State of training at different stages. Each red (resp. purple) star represents a single neuron with $\s_j=-1$ (resp. $\s_j=1$): it shows (in polar coordinates) the projection of the hidden layer weight onto the $2$ dimensional space spanned by the two principal components of the hidden layer weights at the final state of training. The inner circle corresponds to $0$ norm vectors, whose direction is given by the angle.} 
              \vspace{.6em}
    \end{figure}
    
       \begin{figure}[htbp]
        \centering
        \begin{subfigure}[b]{0.475\textwidth}
            \centering
            \includegraphics[trim=10pt 10pt 10pt 10pt, clip, width=\textwidth]{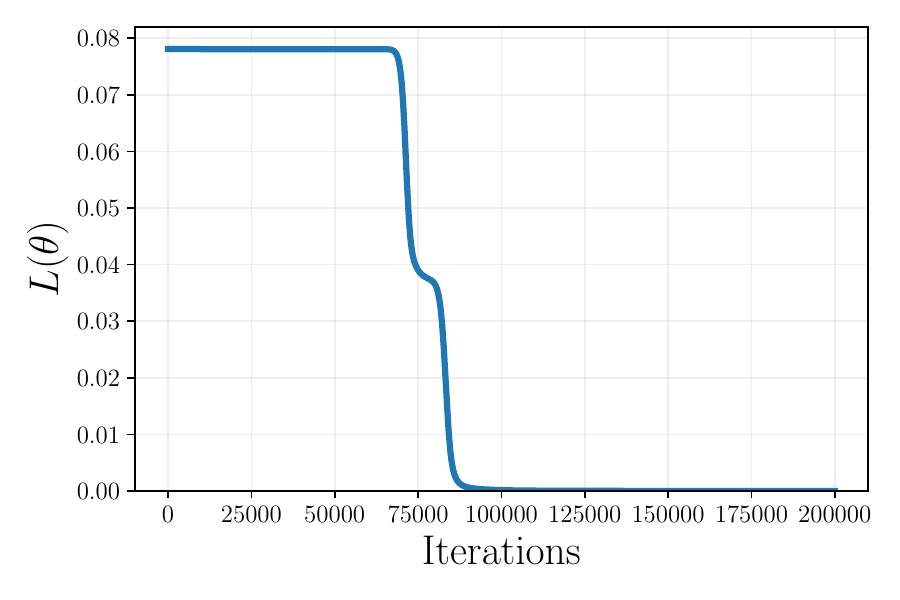}
            \caption{\label{fig:highdimloss}{\small Loss profile}}  
        \end{subfigure}
        \hfill
        \begin{subfigure}[b]{0.475\textwidth}  
            \centering 
            \includegraphics[trim=10pt 10pt 10pt 10pt, clip, width=\textwidth]{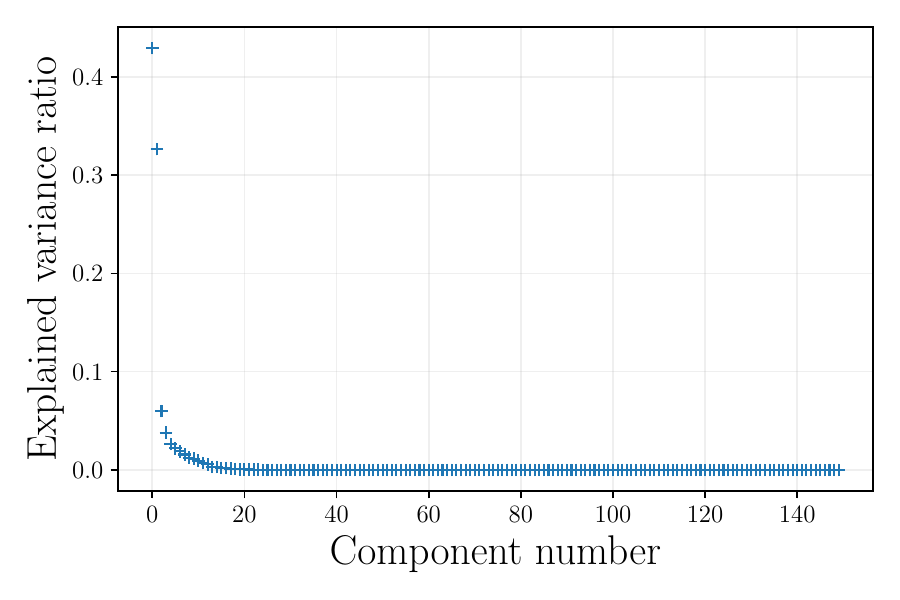}
            \caption{\label{fig:highdimratio}{\small Explained variance ratios of the principal components analysis of the final hidden weights}}    
        \end{subfigure}
   \caption{\label{fig:highdimsupp}Additional information on the high dimensional experiment.}
    \end{figure}

This section presents additional experiments for high dimensional, nearly orthogonal data. We generated $n=75$ data points $x_i$ drawn independently at random according to a standard Gaussian distribution of dimension $d = 150$. It is then known that such points are almost orthogonal with large probability. 
The labels $y_i$ are then given by a $6$-neurons teacher network, whose weights were drawn at random following a Gaussian distribution. 

From there, we trained a neural network with width $m=200$ (without bias terms), an initialisation scale $\lambda = \frac{10^{-20}}{\sqrt{md}}$ and a gradient step size of $10^{-3}$. \cref{fig:highdim} illustrates the training dynamics of the parameters when projected onto the $2$ dimensional space of the two principal components of the $200\times150$ matrix associated to the hidden layer of the network.

The training loss profile is given by \cref{fig:highdimloss}. \cref{fig:highdimratio} finally shows the explained variance ratio of the principal components of the PCA used in \cref{fig:highdim}.

\medskip

Behaviors close to the orthogonal case can easily be observed here. First, we can see in \cref{fig:highdimloss} that the training loss converges to $0$ and that the dynamics goes through an intermediate saddle point around iteration $75000$. Also, thanks to~\cref{fig:highdim}, we see the training dynamics (at least when projected onto the $2$ dimensional space of the two principal components) follows similar phases. At first, we observe an early alignment phase where the neurons align towards two key directions. During a second phase, a first cluster of neurons grows in norm while keeping a fixed direction. During a third phase, the same happens for the other cluster of neurons. Only after these three phases, the neurons will slightly move from these two key directions and reach the final state of \cref{fig:highdim3}, for which we see that the neurons are not exactly aligned with two directions.

This last state is what differs from the exactly orthogonal case, where the final solution consists of a $2$ neurons network. This is obviously not the case here, as can be seen from \cref{fig:highdim3}, but also from the explained variance ratios given in \cref{fig:highdimratio}. Indeed, we can there see that $80\%$ of the final state neurons are explained by the two principal components. This implies that the estimated interpolator is close to a $2$ neurons network, but still far from being only represented by $2$ direction (around $20\%$ of its variance is explained by the remaining directions).

\medskip

Note that we here chose a very small initialisation scale (of order $10^{-20}$). As explained in \cref{subsec:discussion}, this confirms the exponential dependence of $\lambda$ in the number of data points and is merely needed to observe a clear saddle point in the dynamics, but similar final states of training are observed for much larger values initialisation scales. 

\section{Main proofs}\label{app:proof}

In this section, we prove the main theorem. \cref{sec:notations,app:sec_initialisation} provide additional notations and recall the assumption on the initialisation. Then, each subsection corresponds to the study of the different phases of the dynamics:
\cref{app:sec_phase1,app:sec_phase2,app:sec_phase3,app:sec_phaseconvergence} prove respectively the alignment phase, the positive, then negative label fitting phases and finally the convergence phase.

\subsection{Additional notations}
\label{sec:notations}
First, we introduce the following additional notations. We need to define the two dynamical vectors that encode respectively the fitting of positive and negative labels
\begin{equation*}
D_+^t = -\frac{1}{n}\sum_{k \mid y_k>0}(h_{\theta^t}(x_k)-y_k) x_k \quad \textrm{ and }\quad  D_-^t = -\frac{1}{n}\sum_{k \mid y_k<0}(h_{\theta^t}(x_k)-y_k) x_k.
\end{equation*}
We also define the following open subsets of $\R^d$: 
\begin{equation*}
S_+ = \{ w \in \R^d \mid \forall k, \iind{\langle w, x_k\rangle\geq0} \geq  \iind{y_k>0}\} \quad \textrm{ and } \quad  S_- = \{ w \in \R^d \mid \forall k, \iind{\langle w, x_k\rangle\geq0} \geq  \iind{y_k<0}\}.
\end{equation*}
Note that $D_+ \in S_+$ and $-D_- \in S_-$ 
%
%
and that neurons defined in \cref{eq:S+1,eq:S-1} correspond to
\begin{equation*}
S_{+,1} = \{ j \in \llbracket m \rrbracket \mid w_j^0 \in S_+ \text{ and } \s_j = 1\} \quad \textrm{ and } \quad  S_{-,1} = \{ j \in \llbracket m \rrbracket \mid w_j^0 \in S_- \text{ and } \s_j = -1\}.
\end{equation*}

\subsection{Initialisation and assumptions on the dataset}
\label{app:sec_initialisation}

We recall here, for the sake of completeness, the setup at initialisation. We initialise the dynamics on a balanced fashion: $a_j^0 = \s_j \|w_j^0\|$. We take $\lambda >  0$ and  assume that $w_j^0 = \lambda g_j$, where each $g_j \sim \mathcal{N}(0, I_d)$ is an independent standard Gaussian. We assume that both $S_{+,1}$ and $S_{-,1}$ are non-empty and that for all $k$, $y_k \neq 0$. We also assume without loss of generality that $\| D_+\| > \| D_-\|$. This makes $r = \| D_+\| / \| D_-\| > 1$. 
We fix $\varepsilon>0$ small enough so that
\begin{gather*}
    (1+3\varepsilon)\max\left(1-\alpha, \frac{\|D_-\|}{\|D_+\|}\right)\leq 1-\varepsilon \quad \text{and} \quad (1+3r\varepsilon)(1-\beta)\leq 1-\varepsilon\\
    \text{where} \quad \alpha = \min_{k\mid y_k>0}\frac{y_k^2}{2\|D_+\|^2}, \quad \beta = \min_{k\mid y_k<0}\frac{y_k^2}{2\|D_-\|^2}.
\end{gather*}
We finally introduce an arbitrarily small $\lambda_*>0$ that only depends on the training set $(x_k,y_k)_k$ and set $c=\max_{j\in\llbracket m\rrbracket} \|w_j^0\| / \lambda = \max_{j\in\llbracket m\rrbracket} \|g_j\|$.

Because of the non-differentiability of the ReLU activation, the gradient flow is not uniquely defined. In the orthogonal case, we have in particular the following ODE
\begin{equation}\label{eq:multipleflows}
    \frac{\dd \langle w_j^t, x_k \rangle}{\dd t} = -\frac{h_{\theta^t}(x_k)-y_k}{n}\|w_j^t\| \iind{\langle w_j^t, x_k\rangle>0},
\end{equation}
which leads to the following key lemma.
\begin{lem}\label{lemma:sectors}
For any $t'\geq t$, $j\in\llbracket m \rrbracket$ and $k\in[n]$:
\begin{equation*}
\iind{\langle w_j^t,x_k \rangle< 0} \implies \iind{\langle w_j^{t'},x_k \rangle< 0}.
\end{equation*}
\end{lem}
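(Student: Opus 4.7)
The plan is to exploit the structural feature of the ODE~\eqref{eq:multipleflows}: the indicator $\iind{\langle w_j^t, x_k\rangle > 0}$ on the right-hand side kills the derivative of $s\mapsto \langle w_j^s, x_k\rangle$ as soon as this inner product is non-positive. So the statement should reduce to a short absorbing-set argument: once the projection of a neuron on $x_k$ is negative, it cannot increase because it has no driving force, and by continuity it cannot jump.

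Concretely, I would fix $j,k$ and $t$ with $\langle w_j^t, x_k\rangle < 0$, and set $T \coloneqq \inf\{s \geq t : \langle w_j^s, x_k\rangle \geq 0\}$, with $\inf \emptyset = \infty$. By absolute continuity of the gradient flow, $s\mapsto \langle w_j^s, x_k\rangle$ is continuous, so on the open interval $[t,T)$ this scalar product remains strictly negative. By~\eqref{eq:multipleflows} the indicator on the right-hand side then vanishes almost everywhere on $[t,T)$, so $s\mapsto \langle w_j^s, x_k\rangle$ is constant on $[t,T)$, equal to $\langle w_j^t, x_k\rangle < 0$. If $T$ were finite, passing to the limit $s\uparrow T$ would give $\langle w_j^T, x_k\rangle = \langle w_j^t, x_k\rangle < 0$, contradicting the definition of $T$. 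Hence $T = \infty$, which is exactly the claim.

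The only delicate point is justifying that~\eqref{eq:multipleflows} can indeed be used pointwise-a.e.\ along any gradient flow satisfying~\eqref{eq:gradient_flow_theta}, rather than as a formal computation. This is handled by the choice of the ReLU subgradient $\sigma'(x) = \iind{x>0}$ discussed after~\eqref{eq:gradient_flow_theta}, combined with \cref{ass:orthonormal_family}: projecting the flow $\dot{w}_j^t = a_j^t D_j^{\theta^t}$ on $x_k$ and using orthonormality collapses the sum defining $D_j^{\theta^t}$ to a single term proportional to $\iind{\langle w_j^t, x_k\rangle > 0}$, yielding~\eqref{eq:multipleflows} as a bona fide Carathéodory ODE. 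I do not expect any other obstacle: no Gr\"onwall-type argument is needed and the result holds uniformly for every gradient flow allowed by the paper's subgradient convention.
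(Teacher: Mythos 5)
Your proposal is correct and follows essentially the same route as the paper: the paper proves \cref{lemma:sectors} directly from the structure of \cref{eq:multipleflows}, where the indicator $\iind{\langle w_j^t, x_k\rangle>0}$ annihilates the derivative of $s\mapsto\langle w_j^s,x_k\rangle$ once this quantity is negative. Your absorbing-set argument with the stopping time $T$ simply spells out the details that the paper leaves implicit, and it is sound.
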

\begin{proof}
This is a direct consequence of \cref{eq:multipleflows}.
\end{proof}

\subsection{Phase 1: proof of Lemma~\ref{lemma:phase1}}
\label{app:sec_phase1}

During the first phase, the neurons remain small in norm and they move tangentially to align with the vectors $D_+$ and $D_-$. The duration of this movement is typically sub-logarithmic in $\lambda$, the initialisation scale. As $\|D_+\| > \|D_-\|$, neurons in $S_{+,1}$ move slightly faster to align with $D_+$ than the ones in  $S_{-,1}$ that align with $-D_-$. We begin by describing the dynamics of neurons in $S_{+,1}$ in \cref{lemma:phase_plus_1} and derive similar results for the one of $S_{-,1}$ in \cref{lemma:phase_moins_1}. These two Lemmas constitute together \cref{lemma:phase1} of the main text.

First, we define the following ending time of the phase $+1$:
\begin{equation}
t_{+1} = \frac{- \varepsilon \ln(\lambda)}{\|D_+\|}. \label{eq:t+1} 
\end{equation}
We have the following lemma.

\begin{lem}[Phase $+1$]\label{lemma:phase_plus_1}
If $\lambda\leq \lambda_*$, then we have the following inequalities,
\begin{enumerate}[itemsep=-0.5em, topsep=-1em, label=(\roman*)]
\item $\forall j\in\llbracket m \rrbracket, \|w_j^{t_{+1}} \| \leq 2c \lambda^{1-\varepsilon}$
\item $\forall j\in  S_{+,1},\langle \w_j^{t_{+1}},D_+\rangle \geq (1-2\lambda^{\varepsilon})\|D_+\|$.
\item For all $j \in S_{+,1}$, let $k$ such that $y_k < 0$, then $\langle w_j^{t_{+1}}, x_k \rangle = 0$.
\end{enumerate}
\end{lem}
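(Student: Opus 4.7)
The plan is a bootstrap argument on the maximum norm $M_t := \max_{j \in \llbracket m \rrbracket} \|w_j^t\|$, driven by the polar-coordinate system~\eqref{eq:ode_norm_angle} and by the orthogonal-data ODE~\eqref{eq:multipleflows}. The key intuition is a separation of time scales: as long as $M_t = o(1)$, the prediction satisfies $|h_{\theta^t}(x_k)| = O(m M_t^2)$, so each $D_j^{\theta^t}$ is close to the static vector $\frac{1}{n}\sum_k \iind{\langle w_j^t, x_k\rangle > 0}\, y_k x_k$; the angular ODE then behaves like a spherical gradient ascent with exponential rate, while the radial ODE grows at a rate bounded purely by data constants.

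For (i), I would define the stopping time $\tau := \inf\{t : M_t > 2c\lambda^{1-\varepsilon}\}$ and argue $\tau > t_{+1}$ by contradiction. On $[0, \tau \wedge t_{+1}]$ the uniform bound $\|D_j^{\theta^t}\| \leq \|D_+\|(1 + o_\lambda(1))$ combined with $\dot\rho_j^t \leq \|D_j^{\theta^t}\|$ and $\rho_j^0 \leq \ln(c\lambda)$ yields $\|w_j^{t_{+1}}\| \leq 2c\lambda^{1-\varepsilon}$ for $\lambda \leq \lambda_*$ small enough, contradicting the definition of $\tau$ unless $\tau > t_{+1}$.

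For (ii), fix $j \in S_{+,1}$, so $\s_j = +1$. Because $\langle w_j^0, x_k\rangle \geq 0$ for all $k$ with $y_k > 0$ and~\eqref{eq:multipleflows} forces these inner products to remain non-negative (their derivative has sign $\s_j(y_k - h_{\theta^t}(x_k)) > 0$ while the indicator is on), the activation pattern of $j$ contains all positive-label indices throughout Phase $+1$. Hence $D_j^{\theta^t} = D_+ + R_j^t$, where $R_j^t$ collects the contribution from currently active negative-label indices plus an $o(1)$ error from $h_{\theta^t}$. Projecting the angular ODE onto $D_+$ reduces, up to $R_j^t$, to spherical gradient ascent of $\langle \cdot, D_+\rangle$ with exponential rate $\|D_+\|$, so over the horizon $t_{+1} = -\varepsilon \ln(\lambda)/\|D_+\|$ the alignment gap $\|D_+\| - \langle \w_j^{t_{+1}}, D_+\rangle$ contracts by a factor $\lambda^{\varepsilon}$, delivering (ii) after a Gr\"onwall bound controlling $R_j^t$. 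For (iii), with $\s_j = +1$ and $y_k < 0$, the driving term in~\eqref{eq:multipleflows} equals $-\frac{\|w_j^t\|}{n}(h_{\theta^t}(x_k) - y_k)\iind{\langle w_j^t, x_k\rangle > 0}$, which is strictly negative whenever the indicator is on; any initially positive inner product therefore decreases monotonically and is frozen at $0$ upon first reaching it, with crossing time bounded above by $t_{+1}$ once one inserts the exponential lower bound on $\|w_j^t\|$ implied by (ii) and the radial ODE (initially non-positive inner products are handled separately by~\cref{lemma:sectors}).

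The main obstacle is closing the feedback loop between (ii) and (iii): the sharp rate $\|D_+\|$ needed in (ii) relies on $R_j^t$ becoming small, which is precisely what (iii) furnishes; conversely, the quantitative form of (iii) rests on the norm growth of $\|w_j^t\|$ implied by (ii). Managing this coupling requires careful bookkeeping of $\lambda$-exponents under Gr\"onwall over the logarithmic horizon $-\ln(\lambda)$, so that all $o(1)$ error terms accumulate favorably. This is where the orthogonality assumption is essential: it decouples the ODE for each individual $\langle w_j^t, x_k\rangle$, making the bootstrap tractable; in the non-orthogonal case the inner products are coupled and the same strategy fails, as illustrated by~\cref{fig:nonortho3}.
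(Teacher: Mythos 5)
Your step (i) matches the paper's argument (stopping time plus Gr\"onwall on the radial ODE). The gap is in how you propose to get (ii) and (iii): you present a feedback loop---(ii) needs $R_j^t$ small, which (iii) provides; (iii) needs norm growth, which (ii) provides---and leave it as ``careful bookkeeping.'' As stated this is circular, and no argument is given to break the circle. In fact no such loop is needed, and the paper's proof avoids it entirely. For (ii), one does not need the negative-label part of $D_j^{\theta^t}$ to be small: after projecting the tangential ODE onto $D_+$, the negative-label contribution appears only through the term $-\langle D_{-,j}^t,\w_j^t\rangle\,\langle\w_j^t,D_+\rangle$, which is \emph{nonnegative} (the residuals on negative labels keep the sign of $y_k$ for $\lambda$ small, and only coordinates with $\langle\w_j^t,x_k\rangle>0$ enter), so it can simply be dropped; the only error term is the $O(m c^2\lambda^{2(1-\varepsilon)})$ prediction term from (i), and a $\tanh$-comparison with $f'=a^2-f^2$ gives the $(1-2\lambda^{\varepsilon})\|D_+\|$ bound with no input from (iii).

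For (iii), your quantitative mechanism does not close. The tangential dynamics of $\w_j^t$ is norm-free (balancedness cancels $\|w_j^t\|$ in \cref{eq:ode_norm_angle}), so no lower bound on $\|w_j^t\|$ is needed---and, conversely, inserting the available norm lower bound (of order $\lambda^{1+\varepsilon}$, since norms do \emph{not} grow appreciably during Phase~$+1$) into the unnormalised ODE \cref{eq:multipleflows} gives a decay rate of order $\lambda$ against an initial inner product of order $\lambda$, i.e.\ a crossing-time estimate of order $\lambda^{-\varepsilon}$, which exceeds $t_{+1}\sim -\varepsilon\ln\lambda/\|D_+\|$; there is also no ``exponential lower bound on $\|w_j^t\|$ implied by (ii)'' in this phase. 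The workable route is to track the normalised quantities: even then, the per-coordinate derivative of $\langle\w_j^t,x_k\rangle$ contains the drift $-\langle D_j^{\theta^t},\w_j^t\rangle\langle\w_j^t,x_k\rangle$ whose sign is not controlled coordinatewise, which is why the paper aggregates $f(t)=-\frac1n\sum_{k:\langle\w_j^t,x_k\rangle>0}y_k\langle\w_j^t,x_k\rangle\iind{y_k<0}$, proves the Riccati bound $f'\le -a^2+f^2$, and uses Cauchy--Schwarz together with $j\in S_{+,1}$ (part of the unit mass of $\w_j^0$ sits on positive-label or inactive coordinates) to get the strict start condition $f(0)<a$; this forces $f$ to hit $0$ in a time that is constant in $\lambda$, hence well before $t_{+1}$, after which \cref{lemma:sectors} (and the sign of the residual) freezes the inner products at $0$. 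Without this (or an equivalent) device, your sketch of (iii) does not establish deactivation by time $t_{+1}$, and consequently your sketch of (ii) rests on an unproven input.
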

The condition (i) of the lemma means that the neurons do not grow so much in this first phase, whereas (ii) states that they have aligned with vectors $D_+$. Finally (iii) shows that neurons in $S_{+,1}$ deactivate along negative labels during this first phase.

\begin{proof} We divide the proof in three steps. First one is to control the growth of the neurons norm until $t_{+1}$. The second step shows that the tangential movement is faster, while the third one shows that neurons in $S_{+,1}$ ``unalign'' with the directions of negative labels.

\noindent \underline{First step}: we show (i), i.e. that for $t\leq t_{+1}$, we have $\|w_j^t \| \leq  2c \lambda^{1- \varepsilon}$. Note first that, thanks to the balancedness, $a_j^t = \s_j\|w_j^t \|$. Let $\tau_\lambda:=  \inf\{ t \geq 0 \mid \exists j\in\llbracket m \rrbracket, \|w_j^t \| > 2c \lambda^{1-\varepsilon}\}$, then for all $t\leq \tau_\lambda$, we have $\|w_j^t \| \leq  2c \lambda^{1-\varepsilon}$ and hence,
\begin{equation*}
\left|h_{\theta^t}(x_k)\right| \leq \sum_{j=1}^m \|w_j^t \| |\sigma(\langle w_j, x_k \rangle)| \leq \sum_{j=1}^m \|w_j^t \|^2 \leq 4mc^2\lambda^{2(1-\varepsilon)}.
\end{equation*}
Then, for $\lambda_*$ such that $\lambda_*^{2(1-\varepsilon)}\leq \frac{\min_k |y_k|}{4mc^2}$, $y_k - h_{\theta^t}(x_k)$ and $y_k$ have the same sign for any $t\leq \tau_\lambda$. As a consequence, for $j$ such that $\s_j=1$, \cref{eq:ode_a_w} yields
\begin{align*}
\frac{\dd \|w_j^t \|}{\dd t} &= \frac{1}{n}\sum_{\substack{k| \langle x_k, w_j^t \rangle >0}}\left(y_k-h_{\theta^t}(x_k)\right)\langle x_k, w_j^t \rangle \leq \frac{1}{n}\sum_{\substack{k \mid \langle x_k, w_j^t \rangle >0}}\left(y_k-h_{\theta^t}(x_k)\right)\langle x_k, w_j^t \rangle \iind{y_k>0}.
\end{align*}
Now, denote $\displaystyle D^t_{+,j} := \frac{1}{n}\sum_{\substack{k \mid \langle x_k, w_j^t \rangle >0}}\left(y_k-h_{\theta^t}(x_k)\right) x_k \iind{y_k>0}$, we have 
\begin{align*}
\langle D^t_{+,j}, w_j^t \rangle &\leq \|D^t_{+,j}\| \|w_j^t\| \leq \|D^t_{+}\| \|w_j^t\| \leq (\|D_+\| + \|\frac{1}{n}\sum_{k,y_k>0}h_{\theta^t}(x_k)x_k\|)\|w_j^t\| \\
&\leq (\|D_+\| + 4mc^2\lambda^{2(1-\varepsilon)})\|w_j^t\|. 
\end{align*}
The same series of inequalities holds in the case $\s_j = -1$ for $\|D_-^t\|$. Overall,
\begin{align*}
\frac{\dd \|w_j^t \|}{\dd t} & \leq \left(\max\left\{\|D_+\|,\|D_-\|\right\} + 4mc^2\lambda^{2(1-\varepsilon)}\right) \|w_j^t \|\\
& \leq \left(\|D_+\| + 4mc^2\lambda^{2(1-\varepsilon)}\right)\|w_j^t\|.
\end{align*}
By Gr\"onwall's lemma, this gives for any $t\leq \tau_\lambda$, $\|w_j^t \|\leq \|w_j^0 \|e^{\left(\|D_+ \|+4mc^2\lambda^{2(1-\varepsilon)}\right)t}$. This shows that for $t\leq \min(\tau_\lambda, t_{+1})$,
\begin{equation}\label{eq:norm1}
\|w_j^t \| \leq c\lambda^{1-\varepsilon} e^{\frac{-4\varepsilon mc^2\lambda^{2(1-\varepsilon)}}{\|D_+\|}\ln(\lambda)} < 2c\lambda^{1-\varepsilon},
\end{equation}
where $\lambda^*$ has been taken small enough. Hence $\tau_\lambda \geq t_{+1}$.

\medskip

\noindent \underline{Second step}: we show condition (ii). 
Indeed, let us now choose any $j \in S_{+,1}$. We have the following decomposition:
\begin{equation*}
D_j^t = D_{+} - \frac{1}{n}\sum_{\substack{k \mid \langle x_k, w_j^t \rangle >0}}h_{\theta^t}(x_k) x_k \iind{y_k>0} + D_{-,j}^t, 
\end{equation*}
so that as all vectors in the $D_{-,j}^t$ are different from the one of $D_{+}$, by orthogonality we have,
\begin{equation*}
\langle D_j^t,D_{+}\rangle = \|D_{+}\|^2 - \frac{1}{n}\sum_{\substack{k \mid \langle x_k, w_j^t \rangle >0}}h_{\theta^t}(x_k) \langle x_k, D_{+}\rangle \iind{y_k>0}. 
\end{equation*}
Now, let us analyse the tangential movement for these $j$'s: for all $t\leq t_{+1}$, \cref{eq:ode_norm_angle} leads to the following growth comparison
\begin{align*}
\frac{\dd \langle \w_j^t, D_+ \rangle}{\dd t} &= \langle D^{t}_{j}, D_+\rangle  - \langle D^{t}_{j}, \w_j^t \rangle \langle \w_j^t, D_+\rangle   \\ 
&= \|D_+\|^2 - \langle \w_j^t, D_+ \rangle^2 - \frac{1}{n}\sum_{\substack{k \mid \langle x_k, w_j^t \rangle >0}}h_{\theta^t}(x_k) \langle x_k, D_{+} -  \langle D_+, \w_j^t\rangle  \w_j^t \rangle \iind{y_k>0} - \langle D^{t}_{j,-}, \w_j^t \rangle \langle \w_j^t, D_+\rangle, 
\end{align*}
and as we have, $- \langle D^{t}_{j,-}, \w_j^t \rangle \langle \w_j^t, D_+\rangle \geq 0$ and the third term lower bounded by $ - 4 m c^2 \|D_+\| \lambda^{2 (1 -\varepsilon)}$, it yields
\begin{align}\label{eq:odecomparison}
\frac{\dd \langle \w_j^t, D_+ \rangle}{\dd t} &\geq \|D_+\|^2 - 4mc^2 \|D_+\| \lambda^{2(1-\varepsilon)} - \langle \w_j^t, D_+ \rangle^2.
\end{align}
Solutions of the ODE $f'(t) = a^2 - f^2(t)$ with value in $(-a,a)$ are of the form $f(t)=a\tanh(a(t+t_0))$ for some $t_0$. Note in the remaining of the proof $a^2 = \|D_+\|^2 - 4mc^2 \|D_+\| \lambda^{2(1-\varepsilon)}$. In our case, define $t_0$ such that $\langle \w_j^0, D_+ \rangle = a\tanh(at_0)$, then, by Gr\"onwall comparison, it yields:
\begin{equation*}
\langle \w_j^t, D_+ \rangle \geq a \tanh(a(t+t_0)).
\end{equation*}
Note that $\langle \w_j^0, D_+ \rangle\geq 0$, so that $t_0 \geq 0$. As a consequence, we simply have
\begin{equation*}
\langle \w_j^t, D_+ \rangle \geq a \tanh(at).
\end{equation*}
Now, using the inequality $\tanh(x)\geq 1-2e^{-2x}$, we have
\begin{align*}
\langle \w_j^{t_{+1}}, D_+ \rangle & \geq a (1-2e^{2a\frac{\varepsilon}{\| D_+\|}\ln(\lambda)}).
\end{align*}
We have the following inequalities on $a$ when choosing $\lambda_*$ small enough:
\begin{align*}
a = \sqrt{ \|D_+\|^2 - 4mc^2 \|D_+\| \lambda^{2(1-\varepsilon)}} \geq \frac{3\|D_+\|}{4}.
\end{align*}
This leads for any $j\in S_{+,1}$ and $\lambda^*$ small enough to
\begin{align*}
 \langle \w_j^{t_{+1}}, D_+ \rangle & > (\|D_+\| -  2c\sqrt{m \| D_+\|}\lambda^{1-\varepsilon}) \left(1-2\lambda^{\frac{3\varepsilon}{2}} \right) \\
 & > \|D_+\|\left(1-2\lambda^{\frac{3\varepsilon}{2}} \right) - 2c\sqrt{m \| D_+\|}\lambda^{1-\varepsilon} \\
  & > \|D_+\|\left(1-2\lambda^{\varepsilon} \right),
\end{align*}
which proves condition (ii).

\medskip

\noindent \underline{Third step}: we show (iii). Consider $j \in S_{+,1}$ and $k$ such that $y_k < 0$. If $\langle w_j^{0}, x_k\rangle < 0$, then by \cref{lemma:sectors}, there is nothing to prove. Otherwise, as for $t\leq t_{+1}$ we have $\|w_j^t \| \leq  2c \lambda^{1- \varepsilon}$, it yields (as long as $\langle\w_j^{t}, x_k\rangle>0$)
\begin{align*}
\frac{\dd}{\dd t} \langle \w_j^{t}, x_k\rangle &=  \langle D_j^{t}, x_k\rangle - \langle D_j^{t}, \w_j^{t}\rangle \langle \w_j^{t}, x_k\rangle \\
&= \frac{1}{n} ( y_k - h_\theta(x_k)) -\frac{1}{n}  \langle \w_j^{t}, x_k\rangle \sum_{l \mid \langle \w_j^{t}, x_l\rangle > 0} (y_l - h_{\theta}(x_l)) \langle \w_j^{t}, x_l\rangle < 0.
\end{align*}
Hence, if for some time $\tau \leq t_{+1}$, we have $\langle \w_j^{\tau}, x_k\rangle = 0$, then for all times $t \in [\tau, t_{+1}]$, this quantity remains $0$. We now continue to upperbound the derivative of $\langle \w_j^{\tau}, x_k\rangle$:
\begin{align*}
\frac{\dd}{\dd t} \langle \w_j^{t}, x_k\rangle&\leq \frac{1}{n} (y_k + 4mc^2 \lambda^{2(1-\varepsilon)}) -\frac{1}{n}  \langle \w_j^{t}, x_k\rangle \sum_{l \mid \langle \w_j^{t}, x_l\rangle > 0} ( y_l - 4mc^2 \lambda^{2(1-\varepsilon)}) \langle \w_j^{t}, x_l\rangle \\
&\leq \frac{y_k}{n} - \frac{1}{n}  \langle \w_j^{t}, x_k\rangle \sum_{l \mid \langle \w_j^{t}, x_l\rangle > 0} y_l \langle \w_j^{t}, x_l\rangle \iind{y_l<0} + 5mc^2 \lambda^{2(1-\varepsilon)}.
\end{align*}
From this, we sum all the $y_k< 0$, and noting $\displaystyle f(t):= - \frac{1}{n}\!\!\sum_{k \mid \langle \w_j^{t}, x_k\rangle > 0}\!\!\!\! y_k \langle \w_j^{t}, x_k\rangle \iind{y_k<0} $, we have
\begin{align*}
\frac{\dd}{\dd t} f(t) \leq -\frac{1}{n^2} \sum_{k\mid y_k<0} y^2_k   + f(t)^2 - 5mc^2 \lambda^{2(1-\varepsilon)} \frac{1}{n} \sum_{k\mid y_k<0} y_k.
\end{align*}
If we let $a^2 := \frac{1}{n^2} \sum_{k\mid y_k<0} y^2_k + 5mc^2 \lambda^{2(1-\varepsilon)} \frac{1}{n} \sum_{k\mid y_k<0} y_k>0$, we have $f'(t) \leq -a^2 + f(t)^2$, and as by Cauchy-Schwarz
\begin{align*}
f(0)^2 &\leq \frac{1}{n^2} \sum_{k \mid \langle \w_j^{0}, x_k\rangle > 0} y_k^2\iind{y_k<0} \sum_{k \mid \langle \w_j^{0}, x_k\rangle > 0} \langle \w_j^{0}, x_k\rangle^2 \iind{y_k<0} \\
&= \frac{1}{n^2} \left(\sum_{k \mid \langle \w_j^{0}, x_k\rangle > 0} y_k^2\iind{y_k<0}\right) \left(\|\w_j^0\|^2 - \sum_{k} \langle \w_j^{0}, x_k\rangle^2 \left(\iind{y_k>0} \iind{\langle \w_j^{0}, x_k\rangle > 0} + \iind{\langle \w_j^{0}, x_k\rangle < 0}\right) \right) \\
&< \left(a^2 + \lambda^{1-2\varepsilon}\right)\left(1 - \sum_{k} \langle \w_j^{0}, x_k\rangle^2 \left(\iind{y_k>0} \iind{\langle \w_j^{0}, x_k\rangle > 0} + \iind{\langle \w_j^{0}, x_k\rangle < 0}\right) \right) \\
&< a^2,
\end{align*}
where the two last inequalities are valid for $\lambda$ small enough. Hence $t \mapsto f(t)$ is decreasing and $f'(t) \leq -a^2 + f(0)^2$, that is if we call $b := a^2 - f(0)^2>0$, we have $f(t) \leq -b t + f(0)$, and for $\tau = f(0)/b$, we have $f(\tau)=0$. And as $\tau < t_{+1}$, we have $f(t_{+1})=0$. This concludes the proof of condition (iii) and of the lemma.
\end{proof}
Now, we show that the exact same conclusion is also valid for the neurons in $S_{-,1}$ i.e., after a sub-logarithmic time, they eventually align with $-D_-$ and deactivates with respect to positive outputs. We define here a ending time similar to \eqref{eq:t+1}:
\begin{equation}
t_{-1} = \frac{- \varepsilon r \ln(\lambda)}{\|D_+\|}. \label{eq:t-1} 
\end{equation}
We prove the following lemma analogous to \cref{lemma:phase_plus_1}.
\begin{lem}[Phase $-1$]\label{lemma:phase_moins_1}
If $\lambda\leq \lambda_*$, then we have the following inequalities,
\begin{enumerate}[itemsep=-0.5em, topsep=-1em, label=(\roman*)]
\item $\forall j\in\llbracket m \rrbracket, \|w_j^{t_{-1}} \| \leq 2c \lambda^{1-r\varepsilon}$
\item $\forall j\in  S_{-,1},\langle \w_j^{t_{-1}},-D_-\rangle \geq (1-2\lambda^{\varepsilon})\|D_-\|$.
\item For all $j \in S_{-,1}$, let $k$ such that $y_k > 0$, then $\langle w_j^{t_{-1}}, x_k \rangle = 0$.
\end{enumerate}
\end{lem}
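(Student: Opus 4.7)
The plan is to follow the template of \cref{lemma:phase_plus_1} step by step, making the substitutions $S_{+,1}\leftrightarrow S_{-,1}$, $D_+\leftrightarrow -D_-$, and positive $\leftrightarrow$ negative labels. The sole quantitative change is that the horizon is now $t_{-1}=-\varepsilon r\ln(\lambda)/\|D_+\|=-\varepsilon\ln(\lambda)/\|D_-\|$, which is $r=\|D_+\|/\|D_-\|>1$ times longer than $t_{+1}$; this lengthening propagates through every Gr\"onwall estimate as an extra factor of $r$ in the exponent of $\lambda$. For (i), I would define $\tau_\lambda=\inf\{t\geq 0:\exists j,\ \|w_j^t\|>2c\lambda^{1-r\varepsilon}\}$ and argue on $[0,\tau_\lambda\wedge t_{-1}]$: the running hypothesis $\|w_j^t\|\leq 2c\lambda^{1-r\varepsilon}$ gives $|h_{\theta^t}(x_k)|\leq 4mc^2\lambda^{2(1-r\varepsilon)}$, so for $\lambda_*$ small enough (ensured by the smallness condition on $\varepsilon$ imposed in \cref{app:sec_initialisation}), $y_k-h_{\theta^t}(x_k)$ retains the sign of $y_k$; the same one-line estimate as in Phase $+1$ then yields $\frac{\dd\|w_j^t\|}{\dd t}\leq(\|D_+\|+4mc^2\lambda^{2(1-r\varepsilon)})\|w_j^t\|$, and Gr\"onwall evaluated at $t_{-1}$ produces $\|w_j^{t_{-1}}\|\leq c\lambda^{1-r\varepsilon}(1+o(1))<2c\lambda^{1-r\varepsilon}$, so $\tau_\lambda\geq t_{-1}$.

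For (ii), I decompose for each $j\in S_{-,1}$
\begin{equation*}
D_j^t = D_- - \frac{1}{n}\sum_{k\mid y_k<0}h_{\theta^t}(x_k)\,x_k + D_{+,j}^t,
\end{equation*}
where $D_{+,j}^t$ collects the active positive-label contributions. By orthogonality of the inputs $\langle D_{+,j}^t,D_-\rangle=0$, so $\langle D_j^t,-D_-\rangle=\|D_-\|^2-O(\lambda^{2(1-r\varepsilon)})$. Substituting $\s_j=-1$ into \cref{eq:ode_norm_angle} and observing that $\langle D_{+,j}^t,\w_j^t\rangle\geq 0$ (each active summand carries $y_k>0$ along $\langle \w_j^t,x_k\rangle>0$, so the cross term has the favourable sign), I reach the comparison ODE
\begin{equation*}
\frac{\dd\langle \w_j^t,-D_-\rangle}{\dd t}\geq \|D_-\|^2-4mc^2\|D_-\|\lambda^{2(1-r\varepsilon)}-\langle \w_j^t,-D_-\rangle^2.
\end{equation*}
Comparison with the scalar ODE $f'=a^2-f^2$ for $a^2=\|D_-\|^2-4mc^2\|D_-\|\lambda^{2(1-r\varepsilon)}$, combined with $\langle \w_j^0,-D_-\rangle\geq 0$ (which follows from $w_j^0\in S_-$, since $y_k<0$ and $\langle w_j^0,x_k\rangle\geq 0$ make each term $-y_k\langle w_j^0,x_k\rangle\geq 0$) and the elementary bound $\tanh(x)\geq 1-2\e^{-2x}$, then gives at $t_{-1}=-\varepsilon\ln(\lambda)/\|D_-\|$ the required $\langle \w_j^{t_{-1}},-D_-\rangle\geq(1-2\lambda^\varepsilon)\|D_-\|$.

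For (iii), fix $j\in S_{-,1}$ and $k$ with $y_k>0$. If $\langle w_j^0,x_k\rangle<0$, \cref{lemma:sectors} is enough; otherwise, the symmetric version of Step 3 of \cref{lemma:phase_plus_1}—tracking $f(t)=\frac{1}{n}\sum_{k\mid \langle \w_j^t,x_k\rangle>0,\,y_k>0} y_k\langle \w_j^t,x_k\rangle$ in place of its negative-label counterpart—produces an ODE $f'(t)\leq -a^2+f(t)^2$ with $f(0)^2<a^2$ (by Cauchy--Schwarz and $\|\w_j^0\|=1$, generically thanks to \cref{ass:non_degeneracy_y}), so $f$ is decreasing and must hit $0$ within a time $<t_{-1}$, after which $\langle w_j^t,x_k\rangle$ sticks at $0$. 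The only genuine obstacle in the whole proof is quantitative: every perturbative estimate must be run over the enlarged horizon $t_{-1}=r\,t_{+1}$, which inflates the norm envelope from $\lambda^{1-\varepsilon}$ to $\lambda^{1-r\varepsilon}$ and each error exponent by the factor $r$; the smallness condition on $\varepsilon$ (in particular $(1+3r\varepsilon)(1-\beta)\leq 1-\varepsilon$) is exactly what keeps $2(1-r\varepsilon)$ bounded away from zero so that the Phase $+1$ inequalities go through unchanged.
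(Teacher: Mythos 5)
Your proposal is correct and follows essentially the same route as the paper, which itself proves \cref{lemma:phase_moins_1} by mirroring the three steps of \cref{lemma:phase_plus_1} with $S_{+,1}\leftrightarrow S_{-,1}$, $D_+\leftrightarrow -D_-$, and the enlarged horizon $t_{-1}=rt_{+1}$ propagating the factor $r$ into every exponent (norm envelope $\lambda^{1-r\varepsilon}$, error terms $\lambda^{2(1-r\varepsilon)}$). The only cosmetic slip is in step (iii): the strict inequality $f(0)^2<a^2$ comes from the almost-sure properties of the Gaussian initialisation (the unit vector $\w_j^0$ has mass outside the active positive-label coordinates) together with $\lambda$ small, not from \cref{ass:non_degeneracy_y}, but this does not affect the argument.
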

\begin{proof}
The proof is essentially the same than the proof of \cref{lemma:phase_plus_1}. We will be short and underline solely the main differences.

\medskip

\noindent \underline{First step}: we show condition (i), i.e. that for $t\leq t_{-1}$, we have $\|w_j^t \| \leq  2c \lambda^{1-r \varepsilon}$. Indeed, let $\tau^r_\lambda:=  \inf\{ t \geq 0 \mid \exists j\in\llbracket m \rrbracket, \|w_j^t \| > 2c \lambda^{1-r\varepsilon}\}$. For all $t\leq \tau^r_\lambda$, we have $\left|h_{\theta^t}(x_k)\right| \leq 4mc^2\lambda^{2(1-r\varepsilon)}$ and similarly to the proof of \cref{lemma:phase_plus_1}, we have that for $t\leq \tau^r_\lambda$, $\|w_j^t \|\leq \|w_j^0 \|e^{\left(\|D_+ \|+4mc^2\lambda^{2(1-r\varepsilon)}\right)t}$. This shows that for $t\leq \min(\tau^r_\lambda,\, t_{-1})$,
\begin{equation}
\|w_j^t \| \leq c\lambda^{1-r\varepsilon} e^{\frac{-4\varepsilon mc^2\lambda^{2(1-r\varepsilon)}}{\|D_+\|}\ln(\lambda)} < 2c\lambda^{1-r\varepsilon},
\end{equation}
where $\lambda^*$ has been taken small enough. Hence $\tau^r_\lambda \geq t_{-1}$.

\medskip

\noindent \underline{Second step}: we show (ii), i.e. that the neurons almost align after time $t_{-1}$. Indeed, for $j \in S_{-,1}$, similarly to \cref{lemma:phase_plus_1}, we have that for all $t\leq t_{-1}$:
\begin{equation*}
\frac{\dd \langle \w_j^t, -D_- \rangle}{\dd t} \geq \|D_-\|^2 - 4mc^2 \|D_-\| \lambda^{2(1-r\varepsilon)} - \langle \w_j^t, D_- \rangle^2.
\end{equation*}
Denoting $a_r^2 = \|D_-\|^2 - 4mc^2 \|D_-\| \lambda^{2(1-r\varepsilon)}$, we have by Gr\"onwall comparison
\begin{equation*}
\langle \w_j^t, -D_- \rangle \geq a_r \tanh(a_rt).
\end{equation*}
Now, this gives $\langle \w_j^{t_{-1}}, -D_- \rangle > a_r (1-2e^{2a_r \frac{r\varepsilon}{\| D_+\|}\ln(\lambda)})$ and lower bounding $a_r$ as before, we have
\begin{equation*}
 \langle \w_j^{t_{-1}}, -D_- \rangle > \|D_-\|\left(1-2\lambda^{\varepsilon} \right),
\end{equation*}
and this shows the condition (ii).

\medskip

\noindent \underline{Third step}: we show (iii). Consider $j \in S_{-,1}$ and $k$ such that $y_k > 0$. If $\langle w_j^{0}, x_k\rangle < 0$, then by \cref{lemma:sectors}, there is nothing to prove. Otherwise, as for $t\leq t_{-1}$ we have $\|w_j^t \| \leq  2c \lambda^{1-r \varepsilon}$, it yields (as long as $\langle \w_j^{t}, x_k\rangle>0$)
\begin{align*}
\frac{\dd}{\dd t} \langle \w_j^{t}, x_k\rangle &= - \langle D_j^{t}, x_k\rangle + \langle D_j^{t}, \w_j^{t}\rangle \langle \w_j^{t}, x_k\rangle \\
&= \frac{1}{n} (h_\theta(x_k) - y_k) -\frac{1}{n}  \langle \w_j^{t}, x_k\rangle \sum_{l \mid \langle \w_j^{t}, x_l\rangle > 0} (h_{\theta}(x_l) - y_l) \langle \w_j^{t}, x_l\rangle < 0.
\end{align*}
Hence, if for some time $\tau \leq t_{-1}$, we have $\langle \w_j^{\tau}, x_k\rangle = 0$, then for all times $t \in [\tau, t_{-1}]$, this quantity will remain $0$. We now we continue to upperbound the derivative of $\langle \w_j^{\tau}, x_k\rangle$:
\begin{align*}
\frac{\dd}{\dd t} \langle \w_j^{t}, x_k\rangle&\leq \frac{1}{n} (4mc^2 \lambda^{2(1-r\varepsilon)} - y_k) -\frac{1}{n}  \langle \w_j^{t}, x_k\rangle \sum_{l \mid \langle \w_j^{t}, x_l\rangle > 0} (- 4mc^2 \lambda^{2(1-r\varepsilon)}  - y_l) \langle \w_j^{t}, x_l\rangle \\
&\leq -\frac{y_k}{n} + \frac{1}{n}  \langle \w_j^{t}, x_k\rangle \sum_{l \mid \langle \w_j^{t}, x_l\rangle > 0} y_l \langle \w_j^{t}, x_l\rangle \iind{y_l>0} + 5mc^2 \lambda^{2(1-r\varepsilon)}.
\end{align*}
From this, we sum all the $y_k> 0$, and noting $\displaystyle f(t):= \frac{1}{n}\!\!\sum_{k \mid \langle \w_j^{t}, x_k\rangle > 0}\!\!\!\! y_k \langle \w_j^{t}, x_k\rangle \iind{y_k>0} $, we have
\begin{align*}
\frac{\dd}{\dd t} f(t) \leq -\frac{1}{n^2} \sum_{k\mid y_k>0} y^2_k   + f(t)^2 + 5mc^2 \lambda^{2(1-r\varepsilon)} \frac{1}{n} \sum_{k\mid y_k>0} y_k.
\end{align*}
If we let $a^2 := \frac{1}{n^2} \sum_{k\mid y_k>0} y^2_k - 5mc^2 \lambda^{2(1-r\varepsilon)} \frac{1}{n} \sum_{k\mid y_k>0} y_k>0$, we have $f'(t) \leq -a^2 + f(t)^2$, and as by Cauchy-Schwarz
\begin{align*}
f(0)^2 &\leq \frac{1}{n^2} \sum_{k \mid \langle \w_j^{0}, x_k\rangle > 0} y_k^2\iind{y_k>0} \sum_{k \mid \langle \w_j^{0}, x_k\rangle > 0} \langle \w_j^{0}, x_k\rangle^2 \iind{y_k>0} \\
&< \frac{1}{n^2} \left(\sum_{k \mid \langle \w_j^{0}, x_k\rangle > 0} y_k^2\iind{y_k>0}\right) \left(\|\w_j^0\|^2 - \sum_{k} \langle \w_j^{0}, x_k\rangle^2 \left(\iind{y_k<0} \iind{\langle \w_j^{0}, x_k\rangle > 0} + \iind{\langle \w_j^{0}, x_k\rangle < 0}\right) \right) \\
&< \left(a^2 + \lambda^{1-2\varepsilon}\right)\left(1 - \sum_{k} \langle \w_j^{0}, x_k\rangle^2 \left(\iind{y_k<0} \iind{\langle \w_j^{0}, x_k\rangle > 0} + \iind{\langle \w_j^{0}, x_k\rangle < 0}\right) \right) \\
&< a^2,
\end{align*}
where the two last inequalities are valid for $\lambda$ small enough. Hence $t \mapsto f(t)$ is decreasing and $f'(t) \leq -a^2 + f(0)^2$, that is if we call $b := a^2 - f(0)^2>0$, we have $f(t) \leq -b t + f(0)$, and for $\tau = f(0)/b$, we have $f(\tau)=0$. And as $\tau < t_{-1}$, we have $f(t_{-1})=0$. This concludes the proof of condition (iii) and of the Lemma.
\end{proof}
We end this subsection by a lemma that shows that until $t_{\pm1}$, the neurons $w^t_j$ with $j \in S_{\pm1}$, could not have collapsed to $0$.

\begin{lem}\label{lemma:minnorm}
Define $\underline{c} = \min \|w_j^0\| / \lambda$, then, if $\lambda \leq \lambda^*$, 
\begin{enumerate}[itemsep=-0.5em, topsep=-1em, label=(\roman*)]
\item for all $t\leq t_{+1}$, $\forall j \in S_{+,1}$, $\displaystyle \|w_j^t\| > \underline{c} \lambda^{1 + \varepsilon}/2$,
\item for all $t\leq t_{-1}$, $\forall j \in S_{-,1}$, $\displaystyle \|w_j^t\| > \underline{c} \lambda^{1 + r\varepsilon}/2$.
\end{enumerate}
\end{lem}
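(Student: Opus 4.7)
}
The plan is to establish the two statements symmetrically, working directly on the radial ODE that arises from balancedness. Recall from \cref{eq:ode_norm_angle} that for every $j$,
\begin{equation*}
\frac{\dd \ln \|w_j^t\|}{\dd t} \;=\; \s_j\, \langle \w_j^t, D_j^{\theta^t}\rangle,
\end{equation*}
so it suffices, for (i), to obtain a \emph{lower} bound on $\langle \w_j^t, D_j^{\theta^t}\rangle$ valid for every $j \in S_{+,1}$ throughout $[0,t_{+1}]$ (and analogously an upper bound for (ii) with $\s_j = -1$). The idea is that the only terms that can make $\langle \w_j^t, D_j^{\theta^t}\rangle$ negative are the negative-label contributions, and these can be controlled by $\|D_-\|$ using Cauchy--Schwarz and the orthogonality of the $x_k$.

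Concretely, writing $D_j^{\theta^t}$ in coordinates and using $\s_j = +1$, I would split
\begin{equation*}
\langle \w_j^t, D_j^{\theta^t}\rangle = \frac{1}{n}\!\!\sum_{k \in A(w_j^t)}\!\!\! y_k \langle \w_j^t, x_k\rangle \;-\; \frac{1}{n}\!\!\sum_{k \in A(w_j^t)}\!\!\! h_{\theta^t}(x_k)\langle \w_j^t, x_k\rangle.
\end{equation*}
The positive-label contributions to the first sum are nonnegative on $A(w_j^t)$, and the negative-label contributions are bounded below by $-\|D_-\|$ using Cauchy--Schwarz (with $\|D_-\| = \frac{1}{n}\sqrt{\sum_{y_k<0} y_k^2}$ by orthogonality, and $\sum_k \langle \w_j^t, x_k\rangle^2 \leq \|\w_j^t\|^2 = 1$). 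The second sum is bounded by $O(\lambda^{2(1-\varepsilon)})$ using the norm control from \cref{lemma:phase_plus_1}(i) (which implies $|h_{\theta^t}(x_k)| \leq 4mc^2 \lambda^{2(1-\varepsilon)}$ on $[0,t_{+1}]$). This yields
\begin{equation*}
\forall j \in S_{+,1},\ \forall t\leq t_{+1},\quad \langle \w_j^t, D_j^{\theta^t}\rangle \;\geq\; -\|D_-\| - O(\lambda^{2(1-\varepsilon)}).
\end{equation*}

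Applying Gr\"onwall's inequality to the radial ODE and substituting $t_{+1} = -\varepsilon \ln(\lambda)/\|D_+\|$ then gives
\begin{equation*}
\|w_j^{t_{+1}}\| \;\geq\; \|w_j^0\|\, \lambda^{\varepsilon \|D_-\|/\|D_+\| + o(1)} \;=\; \|w_j^0\|\,\lambda^{\varepsilon/r + o(1)},
\end{equation*}
and since $r>1$ forces $\lambda^{\varepsilon/r} \geq 2\lambda^\varepsilon$ for $\lambda\leq \lambda^*$ small enough, the claim (i) follows from $\|w_j^0\| \geq \underline c\lambda$. Claim (ii) proceeds identically, using the analogous bound $-\langle \w_j^t, D_j^{\theta^t}\rangle \geq -\|D_+\| - O(\lambda^{2(1-r\varepsilon)})$ for $j \in S_{-,1}$ (now with the roles of positive and negative labels swapped), together with the norm control of \cref{lemma:phase_moins_1}(i) on $[0,t_{-1}]$. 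Integrating over $t_{-1} = -r\varepsilon \ln(\lambda)/\|D_+\|$ then yields $\|w_j^{t_{-1}}\| \geq \|w_j^0\|\lambda^{r\varepsilon}\cdot \lambda^{o(1)} \geq \underline c \lambda^{1+r\varepsilon}/2$.

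There is no real obstacle beyond bookkeeping: the one point worth noting is that for (i) the crude Cauchy--Schwarz bound must use $\|D_-\|$ (not $\|D_+\|$); this is precisely what gives a slack factor $r>1$ in the exponent and ensures the desired $\lambda^{1+\varepsilon}$ lower bound (rather than a worse $\lambda^{1+\varepsilon(1+1/r)}$). The $O(\lambda^{2(1-\varepsilon)})$ perturbation from $h_{\theta^t}$ is harmless once $\lambda\leq \lambda^*$.
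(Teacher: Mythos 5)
Your proof is correct and follows essentially the paper's own argument: lower-bound the radial derivative for $j\in S_{+,1}$ (resp.\ $S_{-,1}$) by $-\|D_-\|$ (resp.\ $-\|D_+\|$) up to an $O(\lambda^{2(1-\varepsilon)})$ (resp.\ $O(\lambda^{2(1-r\varepsilon)})$) perturbation coming from the small-norm control of the first phase, then integrate by Gr\"onwall up to $t_{+1}$ (resp.\ $t_{-1}$). The only (immaterial) difference is bookkeeping: in case (i) the paper further relaxes $\|D_-\|$ to $\|D_+\|$ so the exponent is exactly $\varepsilon$ and the factor $2$ is absorbed by the $\lambda^{o(1)}$ perturbation, whereas you keep $\|D_-\|$ and absorb it via the slack $r>1$.
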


\begin{proof}

Let us begin with the first point. As stated in the proof of Lemma~\ref{lemma:phase_plus_1}, $y_k - h_{\theta^t}(x_k)$ and $y_k$ have the same sign for any $t\leq t_{+1}$. As a consequence, for $j \in S_{+,1}$, it yields
\begin{align*}
\frac{\dd \|w_j^t \|}{\dd t} &\geq \langle D_{-,j}^t, w_j^t \rangle \geq -  \|D_{-,j}^t\| \|w_j^t \| \geq -\left(\|D_-\| + \|\frac{1}{n}\sum_{k,y_k<0}h_{\theta^t}(x_k)x_k\|\right)\|w_j^t\| \\
&\geq -\left(\|D_+\| + 4mc^2\lambda^{2(1-\varepsilon)}\right)\|w_j^t\|.
\end{align*}
Then, by Gr\"onwall's lemma, this gives for any $t\leq t_{+1}$, $\|w_j^t \|\geq \|w_j^0 \|e^{-\left(\|D_+ \|+4mc^2\lambda^{2(1-\varepsilon)}\right)t}$. This shows that, for $t\leq t_{+1}$,
\begin{equation*}
\|w_j^t \| \geq \underline{c}\lambda^{1+\varepsilon} e^{\frac{4\varepsilon mc^2\lambda^{2(1-\varepsilon)}}{\|D_+\|}\ln(\lambda)} > \underline{c}\lambda^{1+\varepsilon}/2.
\end{equation*}
This concludes the first point of the lemma.
The second point is very similar to the first one. Indeed, in this case also, for $t \leq t_{-1}$, $y_k$ rules the sign of the residual so that for $j \in S_{-,1}$,
\begin{align*}
\frac{\dd \|w_j^t \|}{\dd t} &\geq \langle D_{+,j}^t, w_j^t \rangle \geq -  \|D_{+,j}^t\| \|w_j^t \| \geq -\left(\|D_+\| + \|\frac{1}{n}\sum_{k,y_k>0}h_{\theta^t}(x_k)x_k\|\right)\|w_j^t\| \\
&\geq -\left(\|D_+\| + 4mc^2\lambda^{2(1-r\varepsilon)}\right)\|w_j^t\|.
\end{align*}
Then, by Gr\"onwall's lemma, this gives for any $t\leq t_{-1}$, $\|w_j^t \|\geq \|w_j^0 \|e^{-\left(\|D_+ \|+4mc^2\lambda^{2(1-r\varepsilon)}\right)t}$. This shows that, as  $t\leq t_{-1}$,
\begin{equation*}
\|w_j^t \| \geq \underline{c}\lambda^{1+r\varepsilon} e^{\frac{4\varepsilon r  mc^2\lambda^{2(1-r\varepsilon)}}{\|D_+\|}\ln(\lambda)} > \underline{c}\lambda^{1+r\varepsilon}/2.
\end{equation*}
This concludes the proof.
\end{proof}

\subsection{Phase 2: proof of Lemma~\ref{lemma:phase2informal}}
\label{app:sec_phase2}

During the second phase, the norm of the neurons in $S_{+,1}$ (which are aligned with $D_+$) grows until perfectly fitting all the positive labels of the training points. Meanwhile, all the other neurons do not move significantly.
We define the ending time of the second phase as
\begin{equation}\label{eq:t2}
\begin{aligned}
t_2 = \inf \Bigg\{ t\geq t_{+1} \mid & \text{ either } \exists j \not\in S_{+,1}, \|w_j^t \| \geq 2c\lambda^{\varepsilon}\\
& \text{or } \sum_{j\in S_{+,1}} \|w_j^t \|^2 \geq n\|D_+ \| - \lambda^{\frac{\varepsilon}{5}}\\
& \text{or } \exists j\in S_{+,1}, \langle \w_j^t,D_+\rangle \leq \|D_+\| - \lambda^{\frac{\varepsilon}{2}} \Bigg\}.
\end{aligned}
\end{equation}
The following lemma corresponds to \cref{lemma:phase2informal} of the main text.
\begin{lem}\label{lemma:phase2}
If $\lambda\leq \lambda_*$, then we have the following inequalities
\begin{enumerate}[itemsep=-0.5em, topsep=-1em, label=(\roman*)]
\item $t_2 \leq -\frac{1+3\varepsilon}{\|D_+ \|}\ln(\lambda),$
\item $\forall j \in \llbracket m \rrbracket\setminus S_{+,1}, \|w_j^{t_2} \| < 2c\lambda^{\varepsilon},$
\item $\forall j\in S_{+,1}, \langle \w_j^{t_2},D_+\rangle >  \|D_+\| - \lambda^{\frac{\varepsilon}{2}}.$
\end{enumerate}
\end{lem}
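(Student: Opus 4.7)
The plan is to show that on $[t_{+1}, t_2]$ the aggregate norm $u_+(t) \coloneqq \sum_{j \in S_{+,1}} \|w_j^t\|^2$ follows an approximately logistic ODE that saturates at $n\|D_+\|$, and that this saturation is reached strictly before either of the other two stopping criteria in the definition of $t_2$ is triggered. Establishing this simultaneously gives the time bound (i) and proves that (ii)--(iii) still hold strictly at $t_2$.

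First I would record the structural facts available throughout $[t_{+1}, t_2]$. By \cref{lemma:phase_plus_1}(iii) together with \cref{lemma:sectors}, the neurons $j \in S_{+,1}$ remain inactive on every $x_k$ with $y_k < 0$. By the definition of $t_2$ itself, every neuron outside $S_{+,1}$ has norm at most $2c\lambda^\varepsilon$ and every $j \in S_{+,1}$ satisfies $\langle \w_j^t, D_+\rangle > \|D_+\| - \lambda^{\varepsilon/2}$. Combining these with orthogonality yields, for $y_k > 0$, the decomposition
\begin{equation*}
h_{\theta^t}(x_k) = \frac{y_k}{n\|D_+\|}\, u_+(t) + O(\lambda^{\varepsilon/4}),
\end{equation*}
and $|h_{\theta^t}(x_k)| = O(m c^2 \lambda^{2\varepsilon})$ for $y_k < 0$. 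Plugging these estimates back into the radial ODE for $j \in S_{+,1}$ gives $\langle D_j^t, \w_j^t\rangle = \|D_+\|\bigl(1 - u_+(t)/(n\|D_+\|)\bigr) + O(\lambda^{\varepsilon/4})$, and summing over $j \in S_{+,1}$ yields the perturbed logistic equation
\begin{equation*}
\frac{d u_+(t)}{dt} = 2\|D_+\|\, u_+(t)\!\left(1 - \frac{u_+(t)}{n\|D_+\|}\right) + \eta(t)\, u_+(t), \qquad |\eta(t)| \le C\lambda^{\varepsilon/4}.
\end{equation*}
A Gr\"onwall comparison with the explicit sigmoidal solution of the unperturbed ODE, starting from $u_+(t_{+1}) \ge |S_{+,1}|\underline{c}^2 \lambda^{2(1+\varepsilon)}/4$ (by \cref{lemma:minnorm}), then shows that $u_+$ reaches $n\|D_+\| - \lambda^{\varepsilon/5}$ in an additional time at most $\frac{1+2\varepsilon}{\|D_+\|}(-\ln\lambda)$, giving the bound $t_2 \le -\frac{1+3\varepsilon}{\|D_+\|}\ln(\lambda)$.

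It then remains to exclude the other two stopping criteria from being triggered first. For the alignment of $j \in S_{+,1}$, the spherical ODE of \cref{eq:ode_norm_angle} gives $\frac{d}{dt}\langle \w_j^t, D_+\rangle \ge (1 - u_+/(n\|D_+\|))(\|D_+\|^2 - \langle \w_j^t, D_+\rangle^2) - O(\lambda^{\varepsilon/4})$; on our interval the restoring term is non-negative and dominates the perturbation, so the Phase~1 alignment $\|D_+\| - 2\lambda^\varepsilon$ degrades by at most $O(\lambda^\varepsilon) \ll \lambda^{\varepsilon/2}$. For neurons outside $S_{+,1}$ the idea is that they grow strictly slower than the $S_{+,1}$ ones: if $\s_j = -1$, the residual-sign analysis from Phase~1 shows that $-\langle D_j^t, \w_j^t\rangle$ is bounded by $\|D_-\|\|w_j^t\|$ plus polynomially small terms; if $\s_j = +1$ and $j \notin S_{+,1}$, \cref{lemma:sectors} forces some $y_k > 0$ to remain deactivated, which strictly reduces the effective growth rate below $\|D_+\|$. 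In both cases, a Gr\"onwall estimate of Phase~1 type propagates $\|w_j^{t_{+1}}\| \le 2c\lambda^{1-\varepsilon}$ forward and keeps $\|w_j^t\|$ below $2c\lambda^\varepsilon$ on the window $[t_{+1}, -\frac{1+3\varepsilon}{\|D_+\|}\ln(\lambda)]$.

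The central technical obstacle is controlling error propagation over the logarithmic time window $-\ln(\lambda)/\|D_+\|$: the logistic dynamics amplifies multiplicative perturbations by a factor of order $\lambda^{-1}$, so every approximation must be carried out with a tolerance hierarchy ($\lambda^{\varepsilon/4}$ for predictions, $\lambda^{\varepsilon/5}$ for $u_+$, $\lambda^{\varepsilon/2}$ for alignment, $\lambda^\varepsilon$ for outside-neuron norms) chosen precisely so that the final inequalities survive. The orthogonality assumption is essential here: it decouples the contributions of differently-activated neurons to $h_{\theta^t}$, and the permanent freezing of activation patterns via \cref{lemma:sectors} prevents them from re-mixing as the $S_{+,1}$ neurons grow, keeping the logistic structure of the $u_+$ ODE clean up to the claimed polynomial-in-$\lambda$ error.
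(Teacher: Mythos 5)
Your treatment of point (i) follows the paper's own route (aggregate $u_+(t)=\sum_{j\in S_{+,1}}\|w_j^t\|^2$, perturbed logistic ODE, Gr\"onwall comparison seeded by \cref{lemma:minnorm}), and that part is sound. The genuine gap is in your argument for point (iii). You claim that on all of $[t_{+1},t_2]$ "the restoring term is non-negative and dominates the perturbation", so the alignment degrades only by $O(\lambda^{\varepsilon})$. This is false towards the end of the phase: writing $\w_j^t=\frac{D_+}{\|D_+\|}+d_j^t$ and $D_+^t=(1-\frac{u_+(t)}{n\|D_+\|}+h_+(t))D_+ +h_\perp(t)$, the error in the alignment ODE is of size $\|D_+\|\,\|d_j^t\|\,\|h_\perp(t)\|$, i.e.\ of order $h(t)$ (the misalignment itself), not a fixed $O(\lambda^{\varepsilon/4})$, while the restoring coefficient $1-\frac{u_+(t)}{n\|D_+\|}+h_+(t)$ degenerates to order $\lambda^{\varepsilon/5}$ precisely when $u_+$ approaches $n\|D_+\|-\lambda^{\varepsilon/5}$. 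In that regime the perturbation dominates the restoring term and $h$ can grow exponentially at a constant rate; the result survives only because the stretch during which $u_+$ is macroscopic is \emph{short}. The paper's proof makes exactly this two-regime split: it introduces $t_u=\inf\{t\geq t_{+1}\mid u_+(t)\geq n\|D_+\|/7 \text{ or } h(t)\geq 5\|D_+\|\lambda^{\varepsilon}\}$, shows the alignment is pinned at $\|D_+\|(1-4\lambda^{\varepsilon})$ up to $t_u$, proves via a second logistic comparison that $t_2-t_u\leq-\frac{\varepsilon}{5\|D_+\|}\ln\lambda$, and only then lets $h$ grow at rate $2\|D_+\|$ over that short window, landing at roughly $\lambda^{3\varepsilon/5}<\lambda^{\varepsilon/2}$ (note this is weaker than your claimed $O(\lambda^{\varepsilon})$, which the argument cannot deliver). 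Without the bound on $t_2-t_u$, your proposal has no control of the misalignment once the positive residual is nearly fitted.

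There is also a quantitative slip in your point (ii). Propagating the Phase-1 endpoint $\|w_j^{t_{+1}}\|\leq 2c\lambda^{1-\varepsilon}$ forward from $t_{+1}$ at rate $\max\left((1-\alpha)\|D_+\|,\|D_-\|\right)$ over $t_2-t_{+1}\leq-\frac{1+2\varepsilon}{\|D_+\|}\ln\lambda$ gives an amplification $\lambda^{-(1-\alpha)(1+2\varepsilon)}$, and since $(1-\alpha)(1+2\varepsilon)>1-2\varepsilon$ under the standing choice $(1+3\varepsilon)(1-\alpha)\leq 1-\varepsilon$, the resulting bound exceeds $2c\lambda^{\varepsilon}$ by a factor diverging like $\lambda^{-O(\varepsilon^2)}$ (and the factor $2$ has already been spent at $t_{+1}$). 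The paper avoids this by observing that the reduced-rate differential inequality $\frac{\dd\|w_j^t\|}{\dd t}\leq(\max((1-\alpha)\|D_+\|,\|D_-\|)+4mc^2\lambda^{2\varepsilon})\|w_j^t\|$ is valid on all of $[0,t_2]$, and applying Gr\"onwall from the initialisation $\|w_j^0\|\leq c\lambda$, which does give $\|w_j^{t_2}\|<2c\lambda^{\varepsilon}$. This is an easy repair, but as stated your estimate does not close.
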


The first point of the lemma states that the second phase lasts a time of order $-\ln(\lambda)$. The other two points state that the first and third conditions in the definition of $t_2$ do not hold at the end of the phase. Thus, the second condition in \cref{eq:t2} holds at $t_2$, meaning that the norm of the neurons in $S_{+,1}$ have grown enough to fit the positive labels:
\begin{align*}
\sum_{j\in S_{+,1}} \|w_j^{t_2} \|^2 = n\|D_+ \| - \lambda^{\frac{\varepsilon}{5}}
\end{align*}
A direct consequence of this\footnote{This comes from the decomposition given by \cref{eq:D+t} in the proof.} is that at the end of the second phase, the training loss on the positive labels is of order $\lambda^{\frac{2\varepsilon}{5}}$ (at least).

\begin{proof}
\underline{Preliminaries}:
define for this proof $h(t) = \|D_+\|-\min_{j\in S_{+,1}}\langle \w_j^t,D_+\rangle$. By definition of the second phase, we have $h(t)\leq \lambda^{\frac{\varepsilon}{2}}$ for any $t\in[t_{+1},t_2]$.

We first show that during the whole second phase, $D_+^{t}$ is almost colinear with $D_+$. For any $j\in S_{+,1}$ and $t\in [t_{+1}, t_2]$, define $d_j^t=\w_j^t - \frac{D_+}{\|D_+ \|}$. Since $\w_j^t$ and $\frac{D_+}{\|D_+\|}$ are both of norm $1$, we have $\|d_j^t\|^2 =- 2\langle d_j^t, \frac{D_+}{\|D_+\|}\rangle$. By definition of $h(t)$,
\begin{equation*}
0\geq \langle d_j^t, D_+ \rangle \geq -h(t).
\end{equation*}
So finally, we have for any $j\in S_{+,1}$ and $t\in [t_{+1}, t_2]$ the decomposition
\begin{equation}\label{eq:decompw}
\w_j^t = \frac{D_+}{\|D_+ \|} + d_j^t \qquad \text{with} \qquad \|d_j^t\|^2 \leq 2\frac{h(t)}{\|D_+\|}.
\end{equation}
Now let any $k$ such that $y_k>0$. Using \cref{eq:decompw} and the fact that $ \|w_j^t \| \leq 2c\lambda^{\varepsilon}$ for any $j\not\in S_{+,1}$, we have for any $t\in[t_{+1},t_2]$:
\begin{align*}
h_{\theta^t}(x_k) & = \sum_{j\in S_{+,1}} \|w_j^t\|\langle w_j^t, x_k \rangle + \sum_{j\not\in S_{+,1}} \s_j\|w_j^t\|\langle w_j^t, x_k \rangle \\
& =  \sum_{j\in S_{+,1}} \frac{\|w_j^t\|^2}{\|D_+\|} \langle D_+, x_k \rangle + \underbrace{\sum_{j\in S_{+,1}} \|w_j^t\|^2 \langle d_j^t, x_k \rangle + \sum_{j\not\in S_{+,1}} \s_j\|w_j^t\|\langle w_j^t, x_k \rangle} \\
& =  \sum_{j\in S_{+,1}} \frac{\|w_j^t\|^2}{\|D_+\|} \langle D_+, x_k \rangle + \hspace{3cm} h_k(t),
\end{align*}
where $\displaystyle |h_k(t)| \leq \sum_{j\in S_{+,1}} \|w_j^t\|^2 |\langle d_j^t, x_k \rangle|  + 4mc^2\lambda^{2\varepsilon}$. 
%
It follows that
\begin{align*}
D_+^t &= -\frac{1}{n}\sum_{k, y_k>0} h_{\theta^t}(x_k)x_k + D_+ \\
&  = (1-\frac{\sum_{j\in S_{+,1}} \|w_j^t\|^2}{n\|D_+\|})D_+ -\frac{1}{n}\sum_{k, y_k>0} h_k(t)x_k.
\end{align*}
Roughly, this means that as long as $1-\frac{\sum_{j\in S_{+,1}} \|w_j^t\|^2}{n\|D_+\|}$ is large enough, $D_+^{t}$ is almost colinear with $D_+$. Precisely, we have for any $t\in [t_{+1},t_2]$ the following decomposition
\begin{gather}\label{eq:D+t}
D_+^t = \left( 1- \frac{\sum_{j\in S_{+,1}} \|w_j^t\|^2}{n\|D_+\|} + h_+(t)\right) D_+ + h_{\bot}(t), \\ \text{where}\quad \langle h_{\bot}(t) , D_+\rangle = 0 \quad\text{and}\quad |h_+(t)|\|D_+\| \lor \|h_{\bot}(t)\| \leq \frac{1}{n}\sum_{j\in S_{+,1}} \|w_j^t\|^2 \sqrt{2\frac{h(t)}{\|D_+\|}} + 4mc^2\lambda^{2\varepsilon}. \notag
\end{gather}

\medskip
\underline{First point}: 
denote $u_+(t)=\sum_{j\in S_{+,1}}\|w_j^t \|^2$. We then have by balancedness and \cref{eq:ode_a_w}
\begin{align*}
\frac{1}{2}\frac{\dd u_+(t)}{\dd t} = \sum_{j\in S_{+,1}} \| w_j^t\|\langle D_j^{\theta^t}, w_j^t \rangle.
\end{align*}
Note that $\langle D_+^t, x_k\rangle \geq 0$ for any $t\in[t_{+1}, t_2]$. We thus have $\frac{\dd}{\dd t}\langle w_j^t, x_k\rangle\leq 0$ for $j\in S_{+,1}$ and $k$ such that $y_k<0$ during this phase. Thanks to the last point of \cref{lemma:phase_plus_1}, $\langle D_j^{\theta^t}, w_j^t \rangle= \langle D_+^t, w_j^t \rangle$ for any $j\in S_{+,1}$ and $t\in[t_{+1}, t_2]$. Using \cref{eq:decompw,eq:D+t}, this yields
\begin{align*}
\frac{1}{2}\frac{\dd u_+(t)}{\dd t} &= \sum_{j\in S_{+,1}} \| w_j^t\|^2  \langle D_+^t, \w_j^t \rangle \\
& \geq \sum_{j\in S_{+,1}} \| w_j^t\|^2 \langle \left( 1- \frac{u_+(t)}{n\|D_+\|} + h_+(t)\right) D_+ + h_{\bot}(t),\w_j^t\rangle \\
& \geq \sum_{j\in S_{+,1}} \| w_j^t\|^2  \left( 1- \frac{u_+(t)}{n\|D_+\|} + h_+(t)\right)\left(\|D_+\|-\lambda^{\frac{\varepsilon}{2}}\right) - \sum_{j\in S_{+,1}} \| w_j^t\|^2 \| h_{\bot}(t) \|\|d_j^t\|.
\end{align*}
So we have the following growth comparison
\begin{equation*}
\frac{u_+'(t)}{2} \geq \left(\|D_+\|-\lambda^{\frac{\varepsilon}{2}} \right)\left( 1- \frac{u_+(t)}{n\|D_+\|} + h_+(t)\right)u_+(t) - \| h_{\bot}(t) \| \max_{j\in S_{+,1}}\|d_j^t\|u_+(t).
\end{equation*}
By definition of the second phase, $u_+(t)\leq n\|D_+\|$ for any $t\in[t_{+1},t_2]$. We chose $\lambda$ small enough, so that $\sqrt{2\|D_+\|}\lambda^{\frac{\varepsilon}{4}}\geq 4mc^2\lambda^{2\varepsilon}$. This implies that $\| h_{\bot}(t) \| \lor |h_+(t)|\|D_+\|\leq 2\sqrt{2\|D_+\|}\lambda^{\frac{\varepsilon}{4}}$. So we finally have the following growth comparison during the second phase
\begin{align}
u_+'(t) \geq 2\left(\|D_+\|-\lambda^{\frac{\varepsilon}{2}} \right)\left( 1 - \left(2\sqrt{\frac{2}{\|D_+\|}}\right)\lambda^{\frac{\varepsilon}{4}}- \frac{u_+(t)}{n\|D_+\|}\right)u_+(t) - 8 \lambda^{\frac{\varepsilon}{2}}u_+(t)\notag\\
\label{eq:growth2}u_+'(t) \geq 2\left(\|D_+\|-\lambda^{\frac{\varepsilon}{2}} \right)\left( 1 - \left(2\sqrt{\frac{2}{\|D_+\|}}\right)\lambda^{\frac{\varepsilon}{4}}- \frac{4}{\|D_+\|-\lambda^{\frac{\varepsilon}{2}}} \lambda^{\frac{\varepsilon}{2}}- \frac{u_+(t)}{n\|D_+\|}\right)u_+(t).
\end{align}
Solution of the ODE $f'(t)=af(t)-bf(t)^2$ with $f(0)\in (0, \frac{a}{b})$ are of the form $f(t)=\frac{a}{b}\frac{e^{a(t-\tau)}}{1+e^{a(t-\tau)}}$. Note in the following
\begin{equation*}
\begin{cases} a(\lambda) = 2\left(\|D_+\|-\lambda^{\frac{\varepsilon}{2}}\right)\left(1-\left(2\sqrt{\frac{2}{\|D_+\|}}\right)\lambda^{\frac{\varepsilon}{4}}-\frac{4}{\|D_+\|-\lambda^{\frac{\varepsilon}{2}}}\lambda^{\frac{\varepsilon}{2}}\right),\\
b(\lambda) = 2\frac{\|D_+\| - \lambda^{\frac{\varepsilon}{2}}}{n\|D_+\|}.\end{cases}
\end{equation*}
By Gr\"onwall comparison, for any $t\in[t_{+1},t_2]$,
\begin{equation}\label{eq:normgrowth2}
u_+(t) \geq \frac{a(\lambda)}{b(\lambda)} \frac{e^{a(\lambda)(t-\tau)}}{1+e^{a(\lambda)(t-\tau)}} \quad \text{where} \quad u_+(t_{+1}) = \frac{a(\lambda)}{b(\lambda)} \frac{e^{a(\lambda)(t_{+1}-\tau)}}{1+e^{a(\lambda)(t_{+1}-\tau)}}.
\end{equation}
Thanks to \cref{lemma:minnorm}, $u_+(t_{+1})\geq \frac{\underline{c}^2}{4}\lambda^{2(1+\varepsilon)}$. This implies that 
\begin{equation*}
\tau \leq t_{+1} - \frac{2(1+\varepsilon)}{a(\lambda)}\ln(\lambda)- \frac{1}{a(\lambda)}\ln\left(\frac{\underline{c}^2b(\lambda)}{4a(\lambda)}\right),
\end{equation*}
and so
\begin{align*}
u_+(t_2) \geq \frac{a(\lambda)}{b(\lambda)}-\frac{a(\lambda)}{b(\lambda)}\exp\left(-a(\lambda)(t_2-t_{+1})- 2(1+\varepsilon)\ln(\lambda)-\ln\left(\frac{\underline{c}^2b(\lambda)}{a(\lambda)}\right)\right).
\end{align*}
In particular, if $t_2-t_{+1}>-\frac{1+2\varepsilon}{\|D_+\|}\ln(\lambda)$, then
\begin{align*}
u_+(t_2)> \frac{a(\lambda)}{b(\lambda)}-\frac{a(\lambda)^2}{\underline{c}^2b(\lambda)^2}\lambda^{\frac{a(\lambda)}{\|D_+\|}(1+2\varepsilon)-2(1+\varepsilon)}.
\end{align*}
Note that $\frac{a(\lambda)}{b(\lambda)}=n\|D_+\|-\bigO{\lambda^{\frac{\varepsilon}{4}}}$ and $\lim_{\lambda\to 0}\frac{a(\lambda)}{\|D_+\|}(1+2\varepsilon)-2(1+\varepsilon)=2\varepsilon$. As a consequence, for $\lambda$ small enough, we have $u_+(t_2)> n\|D_+\|-\lambda^{\frac{\varepsilon}{5}}$ if $t_2-t_{+1}>-\frac{1+2\varepsilon}{\|D_+\|}\ln(\lambda)$. This would break the second condition in the definition of the second phase \cref{eq:t2}, so that $t_2-t_{+1}\leq-\frac{1+2\varepsilon}{\|D_+\|}\ln(\lambda)$ and thanks to \cref{lemma:phase_plus_1}, $t_2\leq-\frac{1+3\varepsilon}{\|D_+\|}\ln(\lambda)$.

\medskip

\underline{Second point}: let $j\in\llbracket m \rrbracket\setminus S_{+,1}$. Similarly to the proof of \cref{lemma:phase_moins_1}, we can show if $\s_j=-1$ during the second phase that
\begin{equation*}
\frac{\dd \| w_j^t\|}{\dd t}\leq \left(\|D_-\| +4mc^2\lambda^{2\varepsilon}\right)\|w_j^t\|.
\end{equation*}
If $\s_j=1$ instead, there is some $k_j$ such that $y_{k_j}>0$ and $\langle w_j^t , x_{k_j} \rangle < 0$ thanks to \cref{lemma:sectors} and the continuous initialisation.
In that case we have the following inequalities
\begin{align*}
\frac{\dd \| w_j^t\|}{\dd t}&\leq -\frac{1}{n}\sum_{k \mid y_k>0}(h_{\theta^t}(x_k)-y_k)\langle w_j^t, x_k \rangle_+ \\
&\leq \frac{1}{n}\sum_{k\mid y_k>0}y_k\langle w_j^t, x_k \rangle_+ + 4mc^2\lambda^{2\varepsilon}\|w_j^t\|\\
&\leq  \left(\frac{1}{n}\sqrt{\sum_{k\neq k_j \mid y_k>0}y_k^2}+4mc^2\lambda^{2\varepsilon}\right)\|w_j^t\|\\
&\leq \left((1-\alpha)\|D_+\|+4mc^2\lambda^{2\varepsilon}\right)\|w_j^t\|,
\end{align*}
where we recall $\alpha=\frac{\min_{k\mid y_k>0}y_k^2}{2\|D_+\|^2}>0$. The previous inequalities are also valid during the first phase. In any case, for any $j\not\in S_{+,1}$ and $t\leq t_2$:
\begin{equation*}
\frac{\dd \| w_j^t\|}{\dd t}\leq \left( \max\left((1-\alpha)\|D_+\|,\|D_-\|\right)+4mc^2\lambda^{2\varepsilon} \right)\|w_j^t\|.
\end{equation*}
Note that we chose $\varepsilon$ small enough in \cref{sec:notations}, so that $(1+3\varepsilon)\frac{\max\left((1-\alpha)\|D_+\|,\|D_-\|\right)}{\|D_+\|}\leq 1-\varepsilon$. Since $t_2\leq-\frac{1+3\varepsilon}{\|D_+\|}\ln(\lambda)$, Gr\"onwall inequality yields for any $j\not\in S_{+,1}$ and $\lambda$ small enough
\begin{align*}
\|w_j^{t_2}\|&\leq \|w_j^{0}\|e^{-(1-\varepsilon)\ln(\lambda)}e^{-(1+3\varepsilon)\frac{4mc^2\lambda^{2\varepsilon} }{\|D_+\|}\ln(\lambda)}\\
&< 2c\lambda^{\varepsilon}.
\end{align*}

\medskip 

\underline{Third point}: let $j\in S_{+,1}$. Recall that we have for any $t\in[t_{+1},t_2]$
\begin{align*}
\frac{\dd \langle\w_j^t, D_+\rangle}{\dd t} & = \langle D_j^{\theta_t}, D_+\rangle - \langle \w_j^t, D_j^{\theta_t}\rangle \langle \w_j^t, D_+\rangle \\
&=\langle D_+^{t}, D_+\rangle - \langle \w_j^t, D_+^{t}\rangle \langle \w_j^t, D_+\rangle.
\end{align*}
Thanks to \cref{eq:decompw,eq:D+t}
\begin{align}
\frac{\dd \langle\w_j^t, D_+\rangle}{\dd t} & \geq \left(1-\frac{u_+(t)}{n\|D_+\|} + h_+(t) \right) \left( \|D_+\|^2 -\langle \w_j^t, D_+ \rangle^2 \right) - \| D_+ \| \|d_j^t\| \|h_{\bot}(t)\|.\label{eq:anglegrowth2}
\end{align}
Let us now define
\begin{gather*}
t_{u} = \inf\left\{t\geq t_{+1} \mid u_+(t) \geq \frac{n\|D_+\|}{7} \text{ or } h(t)\geq 5\|D_+\|\lambda^{\varepsilon} \right\}.
\end{gather*}
For any $t\in[t_{+1}, t_u]$, we have for $\lambda$ small enough
\begin{align*}
\frac{\dd \langle\w_j^t, D_+\rangle}{\dd t} & \geq \frac{5}{7}\left( \|D_+\|^2 -\langle \w_j^t, D_+ \rangle^2 \right) - \frac{20}{7} \|D_+\|^2\lambda^{\varepsilon} .
\end{align*}
The positive solution of the ODE $f'(t)=a^2-b^2f(t)^2$ is either increasing if $f(0)\leq \frac{a}{b}$ or remains larger than $\frac{a}{b}$. The following inequality thus implies by Gr\"onwall comparison for any $t\in[t_{+1}, t_u]$ and $\lambda$ small enough
\begin{align*}
\langle\w_j^t, D_+\rangle &\geq \min\left(\|D_+\|-4\|D_+\|\lambda^{\varepsilon}, \langle\w_j^{t_{+1}}, D_+\rangle \right)\\
 &\geq \|D_+\|-4\|D_+\|\lambda^{\varepsilon}.
\end{align*}
We thus have $h(t_u)<5\|D_+\|\lambda^{\varepsilon}$. So $u_+(t_u) = \frac{n\|D_+\|}{7}$.
Let us now bound $t_2-t_u$. Similarly to \cref{eq:normgrowth2}, we actually have for any $t\in[t_u, t_2]$
\begin{equation*}
u_+(t) \geq \frac{a(\lambda)}{b(\lambda)} \frac{e^{a(\lambda)(t-\tau_u)}}{1+e^{a(\lambda)(t-\tau_u)}} \quad \text{where} \quad u_+(t_u) = \frac{a(\lambda)}{b(\lambda)} \frac{e^{a(\lambda)(t_u-\tau_u)}}{1+e^{a(\lambda)(t_u-\tau_u)}}.
\end{equation*}
We showed $u_+(t_u)=\frac{n\|D_+\|}{7}$ and so $\tau_u\leq t_u-\frac{1}{a(\lambda)}\ln(\frac{b(\lambda)}{a(\lambda)}\frac{n\|D_+\|}{7})$, i.e. for any $t\in[t_u, t_2]$:
\begin{equation*}
u_+(t) \geq \frac{a(\lambda)}{b(\lambda)}-\frac{7a(\lambda)^2}{b(\lambda)^2n\|D_+\|}e^{-a(\lambda)(t-t_u)}.
\end{equation*}
Similarly to the proof of the first point, we can then show that for $\lambda$ small enough, $t_{2}-t_u \leq -\frac{\varepsilon}{5\|D_+\|}\ln(\lambda)$.
Now note that for any $t\in[t_u,t_2]$, \cref{eq:anglegrowth2} yields:
\begin{equation*}
\frac{\dd \langle\w_j^t, D_+\rangle}{\dd t} \geq -2\|D_+\|h(t) -4\sqrt{2}\|D_+\|mc^2\lambda^{2\varepsilon}.
\end{equation*}
This directly leads to
\begin{equation*}
h'(t) \leq 2\|D_+\|h(t) +4\sqrt{2}\|D_+\|mc^2\lambda^{2\varepsilon}.
\end{equation*}
By Gr\"onwall's comparison, we thus have for $\lambda$ small enough
\begin{align*}
h(t_2) &\leq \left( h(t_u) +2\sqrt{2}mc^2\lambda^{2\varepsilon} \right) e^{2\|D_+\|(t_2-t_u)}\\
&\leq 6\|D_+\|\lambda^{\varepsilon}e^{-\frac{2\varepsilon}{5}\ln(\lambda)} < \lambda^{\frac{\varepsilon}{2}}.
\end{align*}

\end{proof}

\subsection{Phase 3: proof of Lemma~\ref{lemma:phase3informal}}
\label{app:sec_phase3}
Similarly to the second phase with the positive labels, the third phase aims at fitting the negative labels. During this third phase, the norm of the neurons in $S_{-,1}$ (which are aligned with $-D_-$) grows until perfectly fitting all the negative labels of the training points; while all the other neurons do not change significantly.
We define the ending time of the second phase as
\begin{equation}\label{eq:t3}
\begin{aligned}
t_3 = \inf \Bigg\{ t\geq t_{-1} \mid & \phantom{\text{or }} \exists j \not\in S_{+,1}\cup S_{-,1}, \|w_j^t \| \geq 3c\lambda^{\varepsilon}\\
& \text{or } \sum_{j\in S_{-,1}} \|w_j^t \|^2 \geq n\|D_- \| - \lambda^{\frac{\varepsilon}{29}}\\
& \text{or } \exists j\in S_{-,1}, \langle \w_j^t,-D_-\rangle \leq \|D_-\| - \lambda^{\frac{\varepsilon}{14}} \\
& \text{or } \left( t\geq t_2 \text{ and } \|D_+^t\| \geq \lambda^{\frac{\varepsilon}{14}} \right) \Bigg\}.
\end{aligned}
\end{equation}
The following lemma corresponds to \cref{lemma:phase3informal} of the main text.
\begin{lem}\label{lemma:phase3}
If $\lambda\leq \lambda_*$,then the following inequalities hold
\begin{enumerate}[itemsep=-0.5em, topsep=-0.1em, label=(\roman*)]
\item $t_3 \leq -\frac{1+3r\varepsilon}{\|D_- \|}\ln(\lambda),$
\item $\forall j \not\in S_{+,1}\cup S_{-,1}, \|w_j^{t_3} \| < 3c\lambda^{\varepsilon},$
\item $\forall j\in S_{-,1}, \langle \w_j^{t_3},-D_-\rangle >  \|D_-\| - \lambda^{\frac{\varepsilon}{14}},$
\item $\|D_+^{t_3}\| < \lambda^{\frac{\varepsilon}{14}}$.
\end{enumerate}
\end{lem}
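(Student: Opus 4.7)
\medskip

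\textbf{Proof plan.} The strategy closely parallels the analysis of Phase~2 (Lemma~\ref{lemma:phase2}), but with the substantial added difficulty that neurons in $S_{+,1}$ now have $\Theta(1)$ norm and could, in principle, move significantly and unfit the positive labels. The plan is therefore to run the Phase~2 style argument for neurons in $S_{-,1}$ in parallel with a Grönwall-style argument that propagates the bound $\|D_+^t\| \leq \lambda^{\varepsilon/14}$ from time $t_2$ through the end of Phase~3.

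\emph{Step 1: Decomposition and structure.} As in Phase~2, I would start by writing, for $j \in S_{-,1}$ and $t \in [t_{-1}, t_3]$, the decomposition $\w_j^t = -D_-/\|D_-\| + d_j^t$ with $\|d_j^t\|^2 \leq 2 h(t)/\|D_-\|$, where $h(t) := \|D_-\| - \min_{j \in S_{-,1}} \langle \w_j^t, -D_- \rangle$. Using orthogonality and \cref{lemma:sectors} (so that $S_{-,1}$ neurons remain deactivated on positive labels as established in \cref{lemma:phase_moins_1}(iii)), one obtains the decomposition
\begin{equation*}
D_-^t = \left(1 - \frac{u_-(t)}{n\|D_-\|} + h_-(t)\right) D_- + h^\bot(t),
\end{equation*}
where $u_-(t) := \sum_{j \in S_{-,1}} \|w_j^t\|^2$ and $|h_-(t)|\|D_-\|\vee\|h^\bot(t)\|$ is controlled by $\sqrt{h(t)/\|D_-\|}$, the squared norms of neurons outside $S_{-,1}$, and crucially $\|D_+^t\|$, which appears because the positive labels are no longer trivially zero as in Phase~2.

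\emph{Step 2: Norm growth of $S_{-,1}$ neurons.} By balancedness and~\eqref{eq:ode_a_w}, $u_-'(t)/2 = \sum_{j \in S_{-,1}} \|w_j^t\| \langle D_j^{\theta^t}, w_j^t\rangle$, which by the decomposition above and the definition of $t_3$ yields a growth comparison of the form $u_-'(t) \geq 2\|D_-\|\left(1 - u_-(t)/(n\|D_-\|) - \bigO{\lambda^{\varepsilon/28}}\right) u_-(t)$. Solving this logistic ODE and using the lower bound $u_-(t_{-1}) \geq \tfrac{\underline c^2}{4} \lambda^{2(1+r\varepsilon)}$ from \cref{lemma:minnorm}, a Grönwall comparison gives $u_-(t) \geq n\|D_-\| - \lambda^{\varepsilon/29}$ after a time $-(1+3r\varepsilon)\ln(\lambda)/\|D_-\|$, which forces $t_3 \leq -(1+3r\varepsilon)\ln(\lambda)/\|D_-\|$.

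\emph{Step 3: Controlling $\|D_+^t\|$ --- the main obstacle.} This is where Phase~3 is genuinely harder. I would derive an ODE for $\|D_+^t\|^2$. Writing $D_+^t = D_+ - \tfrac{1}{n}\sum_{k \mid y_k > 0} h_{\theta^t}(x_k) x_k$ and using orthogonality, the time derivative of each $h_{\theta^t}(x_k)$ for $y_k > 0$ decomposes into a contribution from $S_{+,1}$ neurons (which by deactivation on negative labels from \cref{lemma:phase_plus_1}(iii) couples only to $D_+^t$) and small error terms from the other neurons. This yields an almost-autonomous contraction $\tfrac{\dd}{\dd t}\|D_+^t\|^2 \leq -2\alpha_+ \|D_+^t\|^2 + \text{error}$, where $\alpha_+ \gtrsim u_+(t)/n \gtrsim \|D_+\|$. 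Starting from $\|D_+^{t_2}\| = \bigO{\lambda^{\varepsilon/5}}$ (by Lemma~\ref{lemma:phase2}) and integrating over the phase length $\bigO{-\ln(\lambda)/\|D_-\|}$, Grönwall gives $\|D_+^t\| \leq \lambda^{\varepsilon/14}$ for $t \in [t_2, t_3]$. The delicate points are that the error terms themselves depend on $\|D_+^t\|$ and on the small quantities being controlled simultaneously, so this must be set up as a coupled bootstrap.

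\emph{Step 4: Outer neurons and alignment of $S_{-,1}$.} For $j \notin S_{+,1}\cup S_{-,1}$, by \cref{lemma:sectors} each such neuron misses some label of the same sign, so $\tfrac{\dd \|w_j^t\|}{\dd t} \leq (\max((1-\beta)\|D_-\|, \|D_+\|) + o(1)) \|w_j^t\|$, and the choice of $\varepsilon$ ensuring $(1+3r\varepsilon)(1-\beta) \leq 1-\varepsilon$ together with the bound on $t_3$ gives $\|w_j^{t_3}\| \leq 3c\lambda^{\varepsilon}$ via Grönwall. For the alignment of $S_{-,1}$ neurons, one mimics the Phase~2 argument: derive $\tfrac{\dd}{\dd t}\langle \w_j^t, -D_-\rangle \geq (1 - u_-(t)/(n\|D_-\|))(\|D_-\|^2 - \langle \w_j^t, -D_-\rangle^2) - \bigO{\lambda^{\varepsilon/7}}$, split the phase at the first time $u_-$ reaches $n\|D_-\|/7$, and show $h(t) \leq \lambda^{\varepsilon/14}$ throughout. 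All four claims are then in place, and the definition of $t_3$ in~\eqref{eq:t3} implies that the second condition must be the one achieved at $t_3$.
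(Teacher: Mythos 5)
Your Steps 1, 2 and 4 follow the same route as the paper (same decomposition of $D_-^t$, same logistic growth comparison for $u_-$, same splitting at a time where $u_-$ reaches $n\|D_-\|/7$ for the alignment), but Step 3 — which is precisely the new difficulty of Phase 3 — contains a genuine gap. You assert an almost-autonomous contraction $\tfrac{\dd}{\dd t}\|D_+^t\|^2\leq -2\alpha_+\|D_+^t\|^2+\text{error}$ and conclude by "integrating over the phase length $O(-\ln\lambda/\|D_-\|)$". The problem is that the error is not uniformly small: once $u_-(t)$ is of order $n\|D_-\|$, the neurons of $S_{-,1}$ (whose activations on positive-label coordinates were zeroed at the end of Phase 1 but are pushed back up by the residual $D_+^t$) contribute a term of size roughly $n\|D_-\|^2\max_j\alpha_j(t)$ with $\alpha_j(t)=\bigl(\sum_{k\mid y_k>0,\ \langle \w_j^t,x_k\rangle>0}\langle \w_j^t,x_k\rangle^2\bigr)^{1/2}$, and $\alpha_j$ itself grows like $\int\|D_+^s\|\dd s$. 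The resulting integro-differential inequality $\tfrac{\dd}{\dd t}\|D_+^t\|\leq-\|D_+\|\|D_+^t\|+\|D_-\|^2\int\|D_+^s\|\dd s+\text{small}$ allows exponential growth at rate $\|D_-\|^2/\|D_+\|=\|D_-\|/r$; integrated over the full phase length $\sim\ln(1/\lambda)/\|D_-\|$ this produces a factor $\lambda^{-1/r}$, which destroys the $\lambda^{\varepsilon/5}$ prefactor and the claimed bound $\lambda^{\varepsilon/14}$. Saying "this must be set up as a coupled bootstrap" names the obstacle but not the mechanism that closes it. The paper's resolution is structural: it introduces $\tau=\inf\{t\geq t_{-1}\mid u_-(t)\geq n\lambda^{\varepsilon/5}\}\wedge t_3$; before $\tau$ the feedback coefficient is $O(\lambda^{\varepsilon/5})$ so a plain contraction gives $\|D_+^t\|=O(\lambda^{\varepsilon/5})$ and $\alpha_j(\tau)=O(\lambda^{\varepsilon/5})$; after $\tau$ only a short time $t_3-\tau\leq-\tfrac{\varepsilon}{8\|D_-\|}\ln\lambda$ remains (because the logistic growth of $u_-$ finishes quickly once it has reached $n\lambda^{\varepsilon/5}$), and a dedicated integro-differential Gr\"onwall lemma (\cref{lemma:gronwall}) together with $r>1$ turns the exponential factor into only $\lambda^{-\Theta(\varepsilon)}$, yielding $\|D_+^t\|\leq\lambda^{\varepsilon/14}$. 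This splitting and the control of the $\alpha_j$ are the missing ideas.

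A secondary issue: in Step 4 you bound the stray neurons with rate $\max((1-\beta)\|D_-\|,\|D_+\|)$ over the whole phase. For $\s_j=+1$ neurons outside $S_{+,1}$ this fails, since $\|D_+\|\cdot t_3\approx r(1+3r\varepsilon)\ln(1/\lambda)>\ln(1/\lambda)$, so Gr\"onwall from $\|w_j^0\|\leq c\lambda$ gives a bound that diverges as $\lambda\to0$. The paper instead uses \cref{lemma:phase2} up to $t_2$ and then, on $[t_2,t_3]$, the growth rate $\|D_+^t\|\leq\lambda^{\varepsilon/14}$ maintained by the definition of $t_3$, whose integrated effect is $1+o(1)$; this again relies on the very bound on $\|D_+^t\|$ that Step 3 must first deliver.
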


The first point of the lemma states that the third phase also lasts a time of order $-\ln(\lambda)$. It actually ends after the second one ($t_3>t_2$), since the neurons in $S_{-,1}$ do not grow in norm during the second one.
The last point states that the positive labels remain fitted during the third phase. As a consequence, this also means that the neurons of $S_{+,1}$ do not change a lot after $t_2$.
The other points imply that the second condition in \cref{eq:t3} holds at $t_3$, meaning that the norm of the neurons in $S_{-,1}$ have grown enough to fit the negative labels.

\begin{proof}
\underline{Preliminaries}: the three first points of this proof share similarities with the proof of \cref{lemma:phase2}. For conciseness and clarity, parts of the proof are shortened, as they follow the same lines as the proof of \cref{lemma:phase2}.
Similarly to the proof of the second phase, we define $h(t)=\|D_-\|-\min_{j\in S_{-,1}}\langle w_j^t,-D_-\rangle$ and we have for any $j\in S_{-,1}$ the decomposition
\begin{equation}
    \w_j^t = -\frac{D_-}{\|D_-\|} + d_j^t \quad \text{with} \quad \|d_j^t\|^2 \leq \frac{2h(t)}{\|D_-\|}.
\end{equation}
We have for any $k$ such that $y_k<0$
\begin{align}
    h_{\theta^t}(x_k) & \geq \sum_{j \in S_{-,1}} -\|w_j^t\| \langle w_j^t, x_k\rangle + \sum_{\substack{j \not\in  S_{-,1}\cup S_{+,1}\\ \langle w_j^t, x_k\rangle >0}} \s_j \|w_j^t\| \langle w_j^t, x_k\rangle \notag\\
    & \geq \sum_{j \in S_{-,1}} \frac{\|w_j^t\|^2}{\|D_-\|} \langle D_-, x_k\rangle + h_k(t)\label{eq:decompo1}= \sum_{j \in S_{-,1}} \frac{\|w_j^t\|^2}{n\|D_-\|} y_k + h_k(t),
\end{align}
where
\begin{equation*}
|h_k(t)| \leq \sum_{j \in S_{-,1}} \|w_j^t\|^2|\langle d_j^t, x_k \rangle| + 9mc^2\lambda^{2\varepsilon}.
\end{equation*}
Since we chose $\lambda$ small enough, this implies that $h_{\theta^t}(x_k) \geq y_k$ for $k$ such that $y_k <0$ and $t\in[t_{-1}, t_3]$. Moreover, thanks to \cref{lemma:phase_plus_1}, $\langle w_j^{t_{+1}}, x_k\rangle=0$ for any such $k$ and $j\in S_{+,1}$. Because of this, we have for any $[t_{-1},t_3]$
\begin{equation*}
    \langle w_j^{t}, x_k\rangle\leq 0  \qquad \text{for all } j\in S_{+,1} \text{ and } k \text{ such that } y_k<0.
\end{equation*}
\cref{eq:decompo1} is thus an equality on $[t_{-1}, t_3]$. Similarly to the second phase for $D_+^t$, we can now decompose $D_-^t$ as
\begin{align*}
    D_-^t & = D_- -\frac{\sum_{j \in S_{-,1}} \|w_j^t\|^2}{n\|D_-\|}\sum_{k, y_k<0} \langle D_-, x_k\rangle x_k  -\frac{1}{n}\sum_{k, y_k<0}h_k(t)x_k\\
    & = \left( 1 - \frac{\sum_{j \in S_{-,1}} \|w_j^t\|^2}{n\|D_-\|} \right)D_-  -\frac{1}{n}\sum_{k, y_k<0}h_k(t)x_k.
\end{align*}
And then for any $t\in[t_{-1},t_3]$:
\begin{equation}\label{eq:D-t}
    D_-^t = \left( 1 - \frac{\sum_{j \in S_{-,1}} \|w_j^t\|^2}{n\|D_-\|} + h_-(t)\right) D_- + h_{\bot}(t),
\end{equation}
where $\langle h_{\bot}(t), D_-\rangle = 0$ and $|h_-(t)|\|D_-\|\lor\|h_{\bot}(t)\|\leq \frac{1}{n}\sum_{j \in S_{-,1}} \|w_j^t\|^2 \sqrt{\frac{2h(t)}{\|D_-\|}}+9mc^2\lambda^{2\varepsilon}$.

\medskip

Using the previous decompositions, we also have the following inequalities for any $j\in S_{-,1}$
\begin{align*}
    \langle D_j^{\theta^t}, \w_j^t \rangle & = \langle D_-^t, \w_j^t \rangle + \sum_{\substack{k, y_k>0 \\ \langle \w_j^t, x_k \rangle > 0}} \langle D_+^t, x_k \rangle \langle d_j^t, x_k \rangle \\
    & = \langle D_-^t, \w_j^t \rangle + g_j(t),
\end{align*}
where $|g_j(t)| \leq \|d_j^t\|\|D_+^t\|$.

\underline{First point}: with the previous decompositions, we can now prove the first point of \cref{lemma:phase3}. Define $u_-(t) = \sum_{j \in S_{-,1}} \|w_j^t\|^2$. Based on the previous inequalities, we have for any $t\in [t_{-1},t_3]$
\begin{align*}
    \frac{1}{2}\frac{\dd u_-(t)}{\dd t} & = \sum_{j \in S_{-,1}} \|w_j^t \| \langle -D_j^{\theta^t}, w_j^t \rangle \\
    & = \sum_{j\in S_{-,1}} \|w_j^t\|^2\langle -D_-^t, \w_j^t \rangle-\|w_j^t\|^2g_j(t)\\
    &\geq u_-(t) \left( 1-\frac{u_-(t)}{n\|D_-\|}+h_-(t) \right)\left( \|D_-\| - \lambda^{\frac{\varepsilon}{14}}\right) - u_-(t)\sqrt{\frac{2h(t)}{\|D_-\|}}\left( \|h_{\bot}(t)\|+\|D_+^t\|\right).
\end{align*}
From there, we can show similarly to the proof of the second phase that $t_3\leq -\frac{1+3r\varepsilon}{\|D_-\|}\ln(\lambda)$.

\medskip

\underline{Second point}: first consider $j\not\in S_{+,1}\cup S_{-,1}$ such that $\s_j=1$. Thanks to \cref{lemma:phase2}, we already have $\|w_j^{t} \| < 2c\lambda^{\varepsilon}$ for any $t\leq t_2$. For $t\in[t_2,t_3]$, we then have
\begin{align*}
    \frac{\dd \| w_j^t \|}{\dd t} & \leq \|D_+^t\| \| w_j^t \|.
\end{align*}
Gr\"onwall inequality then gives for $\lambda$ small enough: $\| w_j^{t_3}\|\leq 2c\lambda^{\varepsilon} e^{-\lambda^{\frac{\varepsilon}{14}}\frac{1+3r\varepsilon}{\|D_-\|}\ln(\lambda)}< 3c\lambda^{\varepsilon}$.

Let now $j\not\in S_{+,1}\cup S_{-,1}$ such that $\s_j=-1$. By definition, there is some $k_j$ such that $y_{k_j}<0$ and $\langle w_j^t, x_{k_j}\rangle < 0$. Similarly to the proof of \cref{lemma:phase2}, we then have for any $t\in [0,t_3]$:
\begin{align*}
    \frac{\dd \|w_j^t \|}{\dd t} & = - \langle w_j^t, D_j^{\theta^t}\rangle \\
    & \leq \left( (1-\beta)\|D_-\| +\lambda^{\frac{\varepsilon}{15}}\right) \|w_j^t\|,
\end{align*}
where we recall $\beta = \frac{\min_{k,y_k<0}y_k^2}{2\|D_-\|^2}>0$. Note that we chose $\varepsilon$ small enough, so that $(1+3r\varepsilon)(1-\beta)\leq 1-\varepsilon$. Gr\"onwall inequality then yields for $\lambda$ small enough $
    \|w_j^{t_3}\| < 3c\lambda^{\varepsilon}$.

\medskip

\underline{Third point}: let $j\in S_{-,1}$. Recall that for any $t\in [t_{-1},t_3]$, $\langle \w_j^t, D_j^{\theta^t} \rangle  =\langle \w_j^t, D_-^{t} \rangle+g_j(t)$.  This yields for any $t\in [t_{-1},t_3]$
\begin{align*}
    \frac{\dd \langle \w_j^t, -D_-\rangle}{\dd t} & = \langle D_-^{t}, D_- \rangle - \langle \w_j^t, D_-^{t} \rangle \langle \w_j^t, D_- \rangle - g_j(t)\langle \w_j^t, D_- \rangle \\
    & \geq \left( 1 - \frac{u_-(t)}{n\|D_-\|} + h_-(t) \right) \left( \|D_-\|^2 - \langle \w_j^t, -D_- \rangle^2 \right) - \sqrt{2\|D_-\|h(t)} \|h_{\bot}(t)\| - \|D_-\||g_j(t)|.
\end{align*}
Let us now define
\begin{gather*}
t_{u} = \inf\left\{t\geq t_{-1} \mid u_-(t) \geq \frac{n\|D_-\|}{7} \text{ or } h(t)\geq 5\|D_-\|\lambda^{\frac{\varepsilon}{8}} \right\}.
\end{gather*}
Similarly to the third phase, we can show for $\lambda$ small enough the following sequence of properties
\begin{itemize}
    \item $h(t_u) < 5\|D_-\|\lambda^{\frac{\varepsilon}{8}}$
    \item $u_-(t_u) = \frac{n\|D_-\|}{7}$
    \item $t_3-t_u \leq -\frac{\varepsilon}{57\|D_-\|}\ln(\lambda)$
    \item $\langle \w_j^{t_3}, -D_- \rangle > \|D_-\| - \lambda^{\frac{\varepsilon}{14}}$.
\end{itemize}

\medskip

\underline{Fourth point}: 
define for this point \begin{equation*}
    \tau = \inf\left\{ t \geq t_{-1} \mid u_-(t) \geq n\lambda^{\frac{\varepsilon}{5}} \right\} \wedge t_3.
\end{equation*}
Thanks to \cref{lemma:phase2}, $\tau\geq t_2$. For any $k$ such that $y_k>0$, we have
\begin{align*}
    \frac{\dd h_{\theta^t}(x_k)}{\dd t} & = \sum_{j, \langle w_j^t, x_k \rangle >0} \|w_j^t\|^2 \langle D_j^{\theta^t}, x_k \rangle + \langle w_j^t, D_j^{\theta^t} \rangle \langle w_j^t, x_k \rangle.
\end{align*}
And so
\begin{align*}
    \frac{\dd D_+^t}{\dd t} & = -\frac{1}{n}\sum_{k,y_k>0} \frac{\dd h_{\theta^t}(x_k)}{\dd t} x_k\\
    & = -\frac{1}{n}\sum_{k,y_k>0}\sum_{j, \langle w_j^t, x_k \rangle >0} \|w_j^t\|^2 \langle D_+^{t}, x_k \rangle x_k + \langle w_j^t, D_j^{\theta^t} \rangle \langle w_j^t, x_k \rangle x_k.
\end{align*}
Recall the for any $j\in S_{+,1}$,  $\langle w_j^t, x_k\rangle=0$ for any $k$ such that $y_k<0$. From there, we have $\langle w_j^t, D_j^{\theta^t}\rangle = \langle w_j^t, D_+^{t}\rangle$. This leads to the following inequalities
\begin{align}\label{eq:dD+t}
    \frac{1}{2}\frac{\dd \|D_+^t\|^2}{\dd t} & = -\frac{1}{n}\sum_{k,y_k>0}\sum_{j, \langle w_j^t, x_k \rangle >0} \|w_j^t\|^2 \langle D_+^{t}, x_k \rangle^2 + \langle w_j^t, D_j^{\theta^t} \rangle \langle w_j^t, x_k \rangle \langle D_+^t, x_k\rangle \\
    & \leq -\frac{\sum_{j\in S_{+,1}} \|w_j^t\|^2}{n}\sum_{k,y_k>0} \langle D_+^{t}, x_k \rangle^2 -\frac{1}{n}\sum_{j\in S_{+,1}}\langle w_j^t, D_+^t\rangle\sum_{k,y_k>0}  \langle w_j^t, x_k \rangle \langle D_+^t, x_k\rangle\notag\\
    &\phantom{=} -\frac{1}{n}\sum_{k\mid y_k>0}\sum_{\substack{j\not\in S_{+,1}\\\langle w_j^t, x_k \rangle >0}}\langle w_j^t, D_j^{\theta^t} \rangle \langle w_j^t, x_k \rangle \langle D_+^t, x_k\rangle\notag\\
        & \leq -\frac{\sum_{j\in S_{+,1}} \|w_j^t\|^2}{n}\sum_{k,y_k>0} \langle D_+^{t}, x_k \rangle^2 -\frac{1}{n}\sum_{j\in S_{+,1}}\langle w_j^t, D_+^t\rangle^2\notag\\
    &\phantom{=} -\frac{1}{n}\sum_{k\mid y_k>0}\sum_{\substack{j\not\in S_{+,1}\\\langle w_j^t, x_k \rangle >0}}\langle w_j^t, D_j^{\theta^t} \rangle \langle w_j^t, x_k \rangle \langle D_+^t, x_k\rangle.\notag
\end{align}

Given the bounds on $\|D_+^t\|$ and $t_3$, a simple Gr\"onwall argument implies that $\sum_{j\in S_{+,1}} \|w_j^t\|^2$ did not change significantly between $t_2$ and $t_3$. In particular for $t\in[t_{2},\tau]$ and $\lambda$ small enough:
\begin{equation*}
    \frac{\dd \|D_+^t\|}{\dd t} \leq - \left(\|D_+\|-\lambda^{\frac{\varepsilon}{15}}\right)  \|D_+^t\| + \left(\|D_-\|+2\lambda^{\frac{\varepsilon}{29}}\right)\lambda^{\frac{\varepsilon}{5}}.
\end{equation*}
Since $\|D_+^{t_2}\|\leq 2\|D_+\|\lambda^{\frac{\varepsilon}{5}}$ thanks to \cref{lemma:phase2}, the above inequality implies by Gr\"onwall comparison that $\|D_+^{t}\|\leq \left(1+2\|D_+\|\right)\lambda^{\frac{\varepsilon}{5}}$ for any $t\in[t_2, \tau]$ and $\lambda$ small enough.

For any $j\in S_{-,1}$, define 
\begin{equation*}
    \alpha_j(t) = \sqrt{\sum_{\substack{k \mid y_k>0 \\ \langle \w_j^t, x_k\rangle >0}}\langle \w_j^t, x_k\rangle^2}.
\end{equation*}
Since $\frac{\dd \langle \w_j^t, x_k \rangle}{\dd t} = \s_j\left(\langle D_+^t, x_k \rangle - \langle D_j^{\theta^t}, \w_j^t\rangle \langle \w_j^t, x_k \rangle\right) \iind{\langle \w_j^t, x_k \rangle>0}$, we have 
\begin{align*}
    \frac{1}{2}\frac{\dd \alpha_j(t)^2}{\dd t} &= -\sum_{\substack{k \mid y_k>0 \\ \langle \w_j^t, x_k\rangle >0}}\langle D_+^t, x_k \rangle\langle \w_j^t, x_k \rangle - \langle D_j^{\theta^t}, \w_j^t\rangle \langle \w_j^t, x_k \rangle^2 \\
    & \leq \|D_+^t\| \alpha_j(t) + \langle D_j^{\theta^t}, \w_j^t\rangle \alpha_j(t)^2.
\end{align*}
Note that for $\lambda$ small enough $\langle D_j^{\theta^t}, \w_j^t\rangle \leq -\frac{\|D_-\|}{2}$ for any $j\in S_{-,1}$ and $t\in[t_{-1}, \tau]$. We thus have for any $t\in[t_{2}, \tau]$
\begin{equation*}
    \frac{\dd \alpha_j(t)}{\dd t} \leq \left(1+2\|D_+\|\right)\lambda^{\frac{\varepsilon}{5}} - \frac{\|D_-\|}{2}\alpha_j(t).
\end{equation*}

Moreover, recall that $\langle D_+^t, x_k \rangle >0$ before $t_2$. Thanks to \cref{lemma:phase_moins_1}, this actually implies that $\alpha_j^{t_2}=0$ and by Gr\"onwall comparison: $\alpha_j^{\tau} \leq \left(4r+\frac{2}{\|D_-\|}\right)\lambda^{\frac{\varepsilon}{5}}$.

\medskip

Now note that $\langle D_j^{\theta^t}, \w_j^t\rangle \leq 0$ for any $t\in[\tau,t_3]$, which leads for any $t\in[\tau,t_3]$ to
\begin{equation*}
    \alpha_j(t) \leq \left(4r+\frac{2}{\|D_-\|}\right)\lambda^{\frac{\varepsilon}{5}} + \int_{\tau}^{t} \|D_+^s\|\dd s.
\end{equation*}
\cref{eq:dD+t} then becomes for any $t\in[\tau,t_3]$ and $\lambda$ small enough
\begin{align*}
    \frac{\dd \|D_+^t\|}{\dd t} &\leq -\left(\|D_+\|-\lambda^{\frac{\varepsilon}{15}}\right)\|D_+^t\| + \left(\|D_-\|+\lambda^{\frac{\varepsilon}{29}}\right)^2\max_{j\in S_{-,1}}\alpha_j(t) + 9mc^2 \lambda^{2\varepsilon}\\
    & \leq -\left(\|D_+\|-\lambda^{\frac{\varepsilon}{15}}\right)\|D_+^t\| + \left(\|D_-\|+\lambda^{\frac{\varepsilon}{29}}\right)^2 \int_{\tau}^{t} \|D_+^s\|\dd s+ 5\|D_-\|\left(1+\|D_+\|\right) \lambda^{\frac{\varepsilon}{5}}.
\end{align*}
Thanks to \cref{lemma:gronwall}, this implies for any $t\in[\tau,t_3]$ and $\lambda$ small enough
\begin{equation*}
    \|D_+^t\| \leq \left(2\|D_+\|+6\|D_-\|+\frac{6}{r}\right)\lambda^{\frac{\varepsilon}{5}}\left(1+e^{\frac{\left(\|D_-\|+\lambda^{\frac{\varepsilon}{29}}\right)^2}{\left(\|D_+\|-\lambda^{\frac{\varepsilon}{15}}\right)}(t-\tau)}\right).
\end{equation*}
Similarly to the first point, we can also show that $t_3-\tau\leq -\frac{\varepsilon}{8\|D_-\|}\ln(\lambda)$, which finally yields that $\|D_+^t\| \leq \lambda^{\frac{\varepsilon}{14}}$ on $[t_2, t_3]$ for $\lambda$ small enough.
\end{proof}

\subsection{Final phase: proof of Theorem~\ref{thm:main}}
\label{app:sec_phaseconvergence}

To prove~\cref{thm:main}, we need to first prove some auxiliary lemmas. \cref{lemma:theta*} shows that at time $t_3$ the neural network is in the vicinity of some identifiable (i.e. independent of $\lambda$) interpolator. \cref{lem:four_condditions_imply_almost_zero_loss,lem:PL,lem:sourav} allow to apply \citet{chatterjee2022convergence} convergence result when a local PL is satisfied. Finally, we restate the main theorem in \cref{thm:final} for the sake of clearness and prove it.

\begin{lem}\label{lemma:theta*}
For any $\lambda\leq \lambda^*$, there exists $\theta^* \in \mathrm{argmin}_{L(\theta)=0} \|\theta\|^2$ independent of $\lambda$ such that
\begin{equation*}
    \|\theta^{t_3} - \theta^*\| \leq \lambda^{\frac{\varepsilon}{31}},
\end{equation*}
where $t_3$ is defined in~\cref{eq:t3}.
\end{lem}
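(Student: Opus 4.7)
The plan is to exhibit $\theta^*$ as an explicit function of the (fixed, $\lambda$-independent) Gaussian vectors $g_j$ and then bound $\|\theta^{t_3}-\theta^*\|$ componentwise. Setting $P_\pm := \sum_{j \in S_{\pm,1}} \|g_j\|^2$ (both strictly positive by \cref{ass:init}), I will define
\begin{equation*}
w_j^* = \begin{cases} \sqrt{n\|D_+\|/P_+}\,\|g_j\|\, D_+/\|D_+\| & \text{if } j \in S_{+,1}, \\ -\sqrt{n\|D_-\|/P_-}\,\|g_j\|\, D_-/\|D_-\| & \text{if } j \in S_{-,1}, \\ 0 & \text{otherwise,} \end{cases}
\end{equation*}
together with $a_j^* = \s_j \|w_j^*\|$. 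By orthogonality, $\langle D_\pm, x_k\rangle = y_k\iind{\pm y_k>0}/n$, so a direct computation yields $h_{\theta^*}(x_k) = y_k$. The total cost $\|\theta^*\|^2 = 2n(\|D_+\|+\|D_-\|)$ matches the minimum value characterised by \cref{prop:KKT}, so $\theta^* \in \mathrm{argmin}_{L(\theta)=0}\|\theta\|^2$.

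Next, I split the neurons into three groups and bound $\|w_j^{t_3}-w_j^*\|$ on each. For $j \notin S_{+,1} \cup S_{-,1}$, $w_j^*=0$ and \cref{lemma:phase3}(ii) gives $\|w_j^{t_3}\|<3c\lambda^{\varepsilon}$. For $j \in S_{+,1}$ I will argue in two steps: first establish $\|w_j^{t_2}-w_j^*\| = O(\lambda^{\delta})$ for some $\delta>0$ (see the next paragraph), then propagate the bound to $t_3$. On $[t_2, t_3]$, \cref{lemma:phase_plus_1}(iii) together with \cref{lemma:sectors} force $w_j^t$ to be orthogonal to every negative-label input, so $\dd w_j^t/\dd t = a_j^t D_+^t$; integrating $\|D_+^t\|\leq \lambda^{\varepsilon/14}$ from \cref{lemma:phase3}(iv) over a duration of order $-\ln\lambda$ produces an additional displacement of order $\lambda^{\varepsilon/15}$, matching \cref{lemma:phase3informal}(iii). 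For $j \in S_{-,1}$, the same argument is applied directly to the end of phase 3, this time invoking \cref{lemma:phase3}(iii) for alignment.

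The central step, and main obstacle, is the norm identification $\|w_j^{t_2}\| \approx \|w_j^*\|$ within $S_{+,1}$ (with an analogous claim for $S_{-,1}$ at $t_3$). For $j \in S_{+,1}$ on $[0,t_2]$ one has $\dd \ln\|w_j^t\|^2/\dd t = 2\langle D_+^t, \w_j^t\rangle$, and using the decompositions~\eqref{eq:decompw} and~\eqref{eq:D+t} developed in the proof of \cref{lemma:phase2}, the right-hand side differs across $j, j' \in S_{+,1}$ only by $2\langle D_+^t, d_j^t - d_{j'}^t\rangle$, uniformly $O(\|D_+\|\lambda^{\varepsilon/4})$. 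Integrating over $t_2 \leq -(1+3\varepsilon)\ln(\lambda)/\|D_+\|$ gives $\ln(\|w_j^{t_2}\|^2/\|g_j\|^2) = \ln(\|w_{j'}^{t_2}\|^2/\|g_{j'}\|^2) + O(\lambda^{\varepsilon/4}|\ln\lambda|)$ uniformly in $j, j' \in S_{+,1}$. Combined with the sum constraint $\sum_{j \in S_{+,1}}\|w_j^{t_2}\|^2 = n\|D_+\|-O(\lambda^{\varepsilon/5})$ read from the definition of $t_2$ in~\eqref{eq:t2}, this pins down each individual squared norm up to a polynomial error in $\lambda$, matching $\|w_j^*\|^2 = n\|D_+\|\|g_j\|^2/P_+$. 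Combined with the direction alignment $\langle\w_j^{t_2}, D_+\rangle \geq \|D_+\|-\lambda^{\varepsilon/2}$, this yields $\|w_j^{t_2}-w_j^*\| = O(\lambda^{\delta})$ for some $\delta>0$.

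Finally, assembling the three groups and using balancedness (which implies $|a_j^{t_3}-a_j^*| \leq \|w_j^{t_3}-w_j^*\|$ once signs are verified to agree, the latter holding because the sign of $a_j^t$ is preserved by \cref{lem:balance}), the bound on $W^{t_3}$ translates to $\theta^{t_3}$. The worst exponent in the aggregation---inherited from the $\lambda^{\varepsilon/29}$ error in the sum-of-squares constraint in the definition of $t_3$ together with the alignment loss $\lambda^{\varepsilon/14}$ that must be square-rooted when passing from directional cosine to Euclidean distance---produces the announced bound $\lambda^{\varepsilon/31}$.
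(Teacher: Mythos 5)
Your construction of a candidate $\theta^*$ with $\|w_j^*\|^2 = n\|D_+\|\,\|g_j\|^2/P_+$ presumes that the flow preserves the \emph{relative} norms of the neurons inside $S_{+,1}$ (and $S_{-,1}$) up to errors vanishing with $\lambda$, and this is where the argument breaks. The decompositions \cref{eq:decompw} and \cref{eq:D+t} that you invoke are established, and only valid, on $[t_{+1},t_2]$: there indeed $\|d_j^t\|=O(\lambda^{\varepsilon/4})$ and your integrated error $O(\lambda^{\varepsilon/4}|\ln\lambda|)$ is harmless. But you apply them on $[0,t_2]$. On the alignment interval $[0,t_{+1}]$ the discrepancies $d_j^t=\w_j^t-D_+/\|D_+\|$ are of order $1$ (the initial directions $g_j/\|g_j\|$ are generic points of the cone $S_+$), and they stay of order $1$ for a time of order $1$ -- the alignment time of the limiting spherical ODE, which depends on the initial angle of each neuron. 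Hence $\frac{\dd}{\dd t}\ln\|w_j^t\|^2-\frac{\dd}{\dd t}\ln\|w_{j'}^t\|^2 = 2\langle D_j^{\theta^t},\w_j^t\rangle-2\langle D_{j'}^{\theta^t},\w_{j'}^t\rangle$ is of size $\Theta(\|D_+\|)$ over an $O(1)$ window, and $\ln\bigl(\|w_j^{t_{+1}}\|/\|g_j\|\bigr)-\ln\bigl(\|w_{j'}^{t_{+1}}\|/\|g_{j'}\|\bigr)$ converges, as $\lambda\to0$, to a generically nonzero constant determined by the initial \emph{directions}, not the initial norms. This is precisely the constant $d_{j,j'}=c_j-c_{j'}+\|D_+\|(\tau_{j'}-\tau_j)$ that the paper extracts by comparing the true flow with the $\lambda$-independent auxiliary flow \cref{eq:tildeODE} (Gr\"onwall on $\|\w_j^t-\tilde\w_j^t\|$ and on $\tilde\rho_j^t-\rho_j^t$) and then solving the limiting ODE in closed form ($\tanh/\cosh$). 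The correct limiting norms are $\|w_j^{t_3}\|\to\sqrt{n\|D_+\|/\sum_{i\in S_{+,1}}e^{2d_{i,j}}}$, which do not reduce to $\sqrt{n\|D_+\|/P_+}\,\|g_j\|$ in general.

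So while your $\theta^*$ is indeed a minimal-norm interpolator by \cref{prop:KKT} (any split of the total mass $n\|D_+\|$ along $D_+/\|D_+\|$ qualifies), it is generically not the point that $\theta^{t_3}$ approaches, and the lemma's whole difficulty is the $\lambda$-independent identification of the \emph{individual} neuron norms. Your peripheral steps (neurons outside $S_{+,1}\cup S_{-,1}$ via \cref{lemma:phase3}, propagation from $t_2$ to $t_3$ using $\|D_+^t\|\le\lambda^{\varepsilon/14}$, the balancedness step for the output layer) are in line with the paper, but the central norm-identification step must be replaced by the comparison with the rescaled, $\lambda$-independent dynamics as in the paper's proof; as written, the claimed uniform $O(\|D_+\|\lambda^{\varepsilon/4})$ bound on $[0,t_2]$ is false on the alignment phase and the conclusion does not follow.
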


Lemmas~\ref{lemma:phase2} and \ref{lemma:phase3} directly imply \cref{lemma:theta*} for some $\theta^*_\lambda$ that depends on $\lambda$. Extra work is required to prove that this minimal norm interpolator does not actually depend on $\lambda$.

\begin{proof}
Lemmas~\ref{lemma:phase2} and \ref{lemma:phase3} imply the following properties
\begin{enumerate}[label=(\roman*)]
    \item for any $j\in S_{+,1}$, $\|\w_j^{t_3} -\frac{D_+}{\|D_+\|}\| \leq \lambda^{\frac{\varepsilon}{15}}$,
        \item $\left|\sum_{j\in S_{+,1}}\|w_j^t\|^2 - n\|D_+\|\right| \leq \lambda^{\frac{\varepsilon}{15}}$,
    \item for any $j\in S_{-,1}$, $\|\w_j^{t_3}+\frac{D_-}{\|D_-\|}\| \leq \sqrt{\frac{2}{\|D_-\|}}\lambda^{\frac{\varepsilon}{28}}$,
    \item $\left|\sum_{j\in S_{-,1}}\|w_j^t\|^2 - n\|D_-\|\right| \leq \lambda^{\frac{\varepsilon}{29}}$,
    \item for any $j\not\in S_{+,1}\cup S_{-,1}$, $\|w_j^t\| \leq 3c\lambda^{\varepsilon}$.
\end{enumerate}

The balancedness property along these 5 properties guarantee there exists some $\theta^*_\lambda \in \mathrm{argmin}_{L(\theta)=0} \|\theta\|^2$ such that $ \|\theta^{t_3} - \theta^*_\lambda\| \leq \lambda^{\frac{\varepsilon}{30}}$. To show that this $\theta^*_\lambda$ actually does not depend on $\lambda$, it remains to show that the norm of any individual neuron in $S_{+,1}\cup S_{-,1}$ is close to some constant (independent from $\lambda$).

Consider any pair $j,j' \in S_{+,1}$. We recall \cref{eq:ode_norm_angle}:
\begin{align*}
   \frac{\dd \rho_j^t}{\dd t} = \langle D_j^{\theta^t}, \w_j^t \rangle \quad \text{and} \quad  \frac{\dd \w_j^t}{\dd t} = D_j^{\theta^t} - \langle D_j^{\theta^t}, \w_j^t \rangle \w_j^t.
\end{align*}
We note in the following $\tilde{\rho}_j^t, \tilde{\w}_j^t$ any solutions of the ODEs
\begin{equation}\label{eq:tildeODE}
\begin{aligned}
       \frac{\dd \tilde{\rho}_j^t}{\dd t} = \langle \tilde{D}_j^t, \tilde{\w}_j^t \rangle & \quad \text{with} \quad \tilde{\rho}_j^0=\ln(\|w_j^0\|/\lambda)\\
   \frac{\dd \tilde{\w}_j^t}{\dd t} = \tilde{D}_j^{t} - \langle \tilde{D}_j^{t}, \tilde{\w}_j^t \rangle \tilde{\w}_j^t& \quad \text{with}\quad \tilde{\w}_j^0=\w_j^0,
\end{aligned}\end{equation}
where $\tilde{D}_j^t=\frac{1}{n}\sum_{k}y_k x_k \iind{\langle \tilde{\w}_j^t, x_k \rangle>0}$. Remark that the process  $(\tilde{\w}, \tilde{\rho})$ does not depend on $\lambda$ as $\|w_j^0\| = \lambda g_j$ with the $g_j$'s being standard Gaussian vectors. 
We first have
\begin{align*}
    \frac{1}{2}\frac{\dd \|\w_j^t - \tilde{\w}_j^t\|^2}{\dd t} & =\sum_{k} \left(\iind{\langle \tilde{\w}_j^t, x_k \rangle>0} - \iind{\langle \w_j^t, x_k \rangle>0}\right)\frac{y_k}{n} \langle \tilde{\w}_j^t - \w_j^t, x_k \rangle + \sum_{k}\frac{y_k+h_{\theta^t}(x_k)}{n}\langle \w_j^t, x_k \rangle_+ \langle \tilde{\w}_j^t - \w_j^t, \w_j^t \rangle \\ & \phantom{=}- \sum_k \frac{y_k}{n}\langle \tilde{\w}_j^t, x_k \rangle_+ \langle \tilde{\w}_j^t - \w_j^t, \tilde{\w}_j^t \rangle + \sum_{k} \iind{\langle \w_j^t, x_k \rangle>0}\frac{h_{\theta^t}(x_k)}{n} \langle \tilde{\w}_j^t - \w_j^t, x_k \rangle.\\
\end{align*}
For $y_k>0$, both $\iind{\langle \tilde{\w}_j^t, x_k \rangle>0}$ and $\iind{\langle \w_j^t, x_k \rangle>0}$ remain positive until $t_{+1}$. As a consequence, the first sum is non-positive, which leads to
\begin{gather*}
       \frac{1}{2}\frac{\dd \|\w_j^t - \tilde{\w}_j^t\|^2}{\dd t}  \leq  \frac{2}{n}\sqrt{\sum_{k} y_k^2} \|\w_j^t - \tilde{\w}_j^t\|^2 + \frac{2}{n}\sqrt{\sum_{k} h_{\theta^t}(x_k)^2}  \|\w_j^t - \tilde{\w}_j^t\|,\\
       \shortintertext{and then}\\
       \frac{\dd \|\w_j^t - \tilde{\w}_j^t\|}{\dd t}  \leq  \frac{2}{n}\sqrt{\sum_{k} y_k^2} \|\w_j^t - \tilde{\w}_j^t\| + \frac{2}{n}\sqrt{\sum_{k} h_{\theta^t}(x_k)^2}.
\end{gather*}
Since $|h_{\theta^t}(x_k)|\leq 4mc^2 \lambda^{2(1-\varepsilon)}$ during the first phase, Gr\"onwall lemma implies that
\begin{equation*}
    \forall t \in [0, t_{+1}], \|\w_j^t - \tilde{\w}_j^t\| \leq \frac{4mc^2 }{\sqrt{n\left(\|D_+^2 + D_-\|^2\right)}}\lambda^{2(1-\varepsilon)}\left(e^{2\sqrt{\|D_+^2 + D_-\|^2}t}-1\right).
\end{equation*}
We thus have $\|\w_j^{t_{+1}} - \tilde{\w}_j^{t_{+1}}\| \leq \frac{4mc^2 }{\sqrt{n\left(\|D_+^2 + D_-\|^2\right)}}\lambda^{2-2(1+\sqrt{2})\varepsilon}$.
From there, note that we also have
\begin{align*}
    \frac{\dd \tilde{\rho}_j^t-\rho_j^t}{\dd t} & = \sum_{k} \frac{y_k}{n}\left( \langle \tilde{\w}_j^t, x_k \rangle_+ - \langle \w_j^t, x_k \rangle_+ \right) -\sum_{k} \frac{h_{\theta^t}(x_k)}{n}\langle \w_j^t, x_k \rangle_+\\
    & \leq \sqrt{\|D_+\|^2+\|D_-\|^2}\|\tilde{\w}_j^t-\w_j^t\| + \frac{1}{n}\sqrt{\sum_k h_{\theta^t}(x_k)^2}.
\end{align*}
As a consequence:
\begin{equation*}
    \left|\tilde{\rho}_j^{t_{+1}}-\rho_j^{t_{+1}}-\ln(\lambda)\right| \leq -\frac{8mc^2\varepsilon}{\|D_+\|}\ln(\lambda)\lambda^{2-2(1+\sqrt{2})\varepsilon}.
\end{equation*}
Moreover, for any $t\in[t_{+1}, t_3]$, we have $\frac{\dd \rho_j^t - \rho_{j'}^t}{\dd t} \leq -2\sqrt{\frac{2}{\|D_+\|}}\|D_+^t\|\frac{3\varepsilon}{\|D_+\|}\ln(\lambda) \lambda^{\frac{\varepsilon}{14}}$, and so for $\lambda$ small enough:
\begin{equation}\label{eq:normratio}
    |\rho_j^{t_3} - \rho_{j'}^{t_3}| \leq |\tilde{\rho}_j^{t_{+1}} - \tilde{\rho}_{j'}^{t_{+1}}| + 2\lambda^{\frac{\varepsilon}{15}}.
\end{equation}
The quantity $|\tilde{\rho}_j^{t_{+1}} - \tilde{\rho}_{j'}^{t_{+1}}|$ only depends on $\lambda$ because of the $t_{+1}$ term. The $\tilde{\rho}$ are indeed independent of $\lambda$.

\medskip

Similarly to the proof of \cref{lemma:phase_plus_1}, we can show that after some time $t_j$, $\tilde{D}_j^t = D_+$ for all $t \geq t_j$. From there, \cref{eq:tildeODE} imply for any $t \geq t_j$
\begin{align*}
    \frac{\dd \tilde{\rho}_j^t}{\dd t} &= \langle D_+, \tilde{\w}_j^t \rangle \\
   \frac{\dd \langle\tilde{\w}_j^t, D_+ \rangle}{\dd t} &= \|D_+\|^2 - \langle D_+, \tilde{\w}_j^t \rangle^2.
\end{align*}
Solving these ODEs, we thus have for some constants $\tau_j, c_j$ and any $t\geq t_j$:
\begin{align*}
     \langle\tilde{\w}_j^t, D_+ \rangle &= \|D_+\|\tanh\left(\|D_+\|(t-\tau_j)\right)\\
     \tilde{\rho}_j^t &= \ln\left(\cosh\left(\|D_+\|(t-\tau_j)\right)\right)+c_j.
\end{align*}
For $\lambda$ small enough, $t_{+1}\geq t_j \vee t_{j'}$ and then:
\begin{align*}
\tilde{\rho}_j^{t_{+1}} - \tilde{\rho}_{j'}^{t_{+1}} & = c_j - c_{j'} + \|D_+\|\left(\tau_{j'}-\tau_j \right) + \ln\left( \frac{1+e^{2\tau_j}\lambda^{2\varepsilon}}{1+e^{2\tau_{j'}}\lambda^{2\varepsilon}}\right)\\
& = d_{j,j'} + h_{j,j'}(\lambda),
\end{align*}
where $d_{j,j'}=c_j - c_{j'} + \|D_+\|\left(\tau_{j'}-\tau_j \right)$ and $|h_{j,j'}(\lambda)|\leq e^{2 (\tau_j\vee \tau_{j'})}\lambda^{2\varepsilon}$. Using \cref{eq:normratio}, this leads for $\lambda$ small enough to
\begin{align*}
    \|w_j^{t_3}\| & = \|w_{j'}^{t_3}\| e^{\rho_j^t - \rho_{j'}^t} \\
    & = \|w_{j'}^{t_3}\| e^{d_{j,j'}} \left( 1 + g_{j,j'}(\lambda)\right),
\end{align*}
where $|g_{j,j'}(\lambda)|\leq 4\lambda^{\frac{\varepsilon}{15}}$. As the sum of the norms of the $w_j$ is known, this actually fixes the norm of each individual neuron. Precisely we have for $|f(\lambda)|\leq \lambda^{\frac{\varepsilon}{15}}$
\begin{align*}
    n\|D_+\| - f(\lambda) & = \sum_{i\in S_{+,1}} \|w_{i}^{t_3}\|^2 \\
    & = \|w_j^{t_3}\|^2 \sum_{i\in S_{+,1}} e^{2d_{i,j}}\left( 1+2g_{i,j} + g_{i,j}^2\right).
\end{align*}
And so we finally have for any $j\in S_{+,1}$, $\|w_j^{t_3}\|=\sqrt{\frac{n\|D_+\|}{\sum_{i\in S_{+,1}}e^{2d_{i,j}}}}+\bigO{\lambda^{\frac{\varepsilon}{15}}}$.
Similarly, we can show that for any $j\in S_{-,1}$, $\|w_j^{t_3}\|=\sqrt{\frac{n\|D_-\|}{\sum_{i\in S_{-,1}}e^{2d_{i,j}}}}+\bigO{\lambda^{\frac{\varepsilon}{30}}}$.
We can now define $\theta^*$ as follows:
\begin{itemize}
    \item $w_j^* = \sqrt{\frac{n\|D_+\|}{\sum_{i\in S_{+,1}}e^{2d_{i,j}}}}\frac{D_+}{\|D_+\|}$ and $a_j^* = \|w_j^*\|$ if $j \in S_{+,1}$
        \item $w_j^* = -\sqrt{\frac{n\|D_-\|}{\sum_{i\in S_{-,1}}e^{2d_{i,j}}}}\frac{D_-}{\|D_-\|}$ and $a_j^* = -\|w_j^*\|$ if $j \in S_{-,1}$
        \item $w_j^*=0$ and $a_j^*=0$ if $j\not\in S_{+,1}\cup S_{-,1}$.
\end{itemize}
By construction, $\theta^*$ does not depend on $\lambda$ and $\|\theta^{t_3}-\theta^*\|\leq \lambda^{\frac{\varepsilon}{31}}$ for $\lambda$ small enough. Moreover, thanks to \cref{prop:KKT}, $\theta^* \in \mathrm{argmin}_{L(\theta)=0} \|\theta\|^2$.
\end{proof}

For this final phase, we know that at time $t_3$, given~\cref{lemma:phase3}, the neural net is arrived at a point $\theta^{t_3}$ satisfying the following: for $\varepsilon' = \frac{\varepsilon}{30}$,
\begin{enumerate}[label=(\roman*)]
\item For all $t>0$, $\theta^t \in \Theta:=\{\theta = (a_j, w_j)_{j \leq m}, \textrm{ such that } \forall j \in \llbracket m \rrbracket,\   |a_j|^2 = \|w_j\|^2 \}$.
\item For all $\displaystyle j \in \llbracket m \rrbracket \setminus \left(S_{+,1} \cup S_{-,1}\right), \|w_j^{t_3}\| \leq \lambda^{\varepsilon'}$.  
\item We have $ \left| \frac{1}{n} \sum_{j \in S_{+,1}} \|w_j^{t_3}\|^2 - \|D_+\| \right| \leq \lambda^{\epsilon'} $ and $ \left| \frac{1}{n} \sum_{j \in S_{-,1}} \|w_j^{t_3}\|^2 - \|D_-\| \right| \leq \lambda^{\epsilon'} $.
\item For all $j \in S_{+,1}$, $\left|\langle \w_j^{t_3},D_+\rangle - \|D_+\|\right| \leq \lambda^{\epsilon'}$ and $j \in S_{-,1}$, $\left|\langle \w_j^{t_3},D_-\rangle - \|D_-\|\right| \leq \lambda^{\epsilon'}$.
\end{enumerate}
Let us fist show an auxiliary Lemma that states that when these four conditions are satisfied, then the loss is almost zero.
\begin{lem}
\label{lem:four_condditions_imply_almost_zero_loss}
For $\theta$ such that conditions (i), (ii), (iii) and (iv) are satisfied then, for $\lambda$ small enough,
\begin{equation}
L(\theta)\leq \lambda^{\varepsilon'/2}.
\end{equation}
\end{lem}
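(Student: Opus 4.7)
The goal is to show that under conditions (i)--(iv) the prediction $h_\theta(x_k)$ is within $O(\lambda^{\varepsilon'/2})$ of $y_k$ for every data point; summing the $n$ squared residuals then gives $L(\theta)=O(\lambda^{\varepsilon'})\leq \lambda^{\varepsilon'/2}$ for $\lambda$ small. I would fix $k$ and split the sum defining $h_\theta(x_k)$ into three pieces: $j\in S_{+,1}$, $j\in S_{-,1}$, and $j\notin S_{+,1}\cup S_{-,1}$.

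The third piece is immediate: by balancedness (i) and (ii), $|a_j\sigma(\langle w_j,x_k\rangle)|\leq \|w_j\|^2\leq \lambda^{2\varepsilon'}$, so this part contributes at most $m\lambda^{2\varepsilon'}$. For the main pieces I would use condition (iv) in the following quantitative form: if $\w_j$ is close to the unit vector $D_+/\|D_+\|$ in the sense that $\langle \w_j,D_+\rangle\geq \|D_+\|-\lambda^{\varepsilon'}$, then $\|\w_j-D_+/\|D_+\|\|^2=2-2\langle \w_j,D_+\rangle/\|D_+\|\leq 2\lambda^{\varepsilon'}/\|D_+\|$, and analogously for $j\in S_{-,1}$. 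The key input from orthogonality is
\begin{equation*}
\langle D_+/\|D_+\|,\, x_k\rangle = \frac{y_k\iind{y_k>0}}{n\|D_+\|},\qquad \langle -D_-/\|D_-\|,\, x_k\rangle = \frac{y_k\iind{y_k<0}}{n\|D_-\|}.
\end{equation*}

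Consider $k$ with $y_k>0$. For $j\in S_{+,1}$ I would write $\langle \w_j,x_k\rangle = y_k/(n\|D_+\|)+O(\lambda^{\varepsilon'/2})$, which for $\lambda$ small enough is strictly positive (using $y_k\neq 0$), so the ReLU acts linearly and $a_j\sigma(\langle w_j,x_k\rangle)=\s_j\|w_j\|^2\langle \w_j,x_k\rangle=\|w_j\|^2(y_k/(n\|D_+\|)+O(\lambda^{\varepsilon'/2}))$. Summing over $j\in S_{+,1}$ and using condition (iii) to replace $\sum_{j\in S_{+,1}}\|w_j\|^2$ by $n\|D_+\|+O(\lambda^{\varepsilon'})$ gives exactly $y_k+O(\lambda^{\varepsilon'/2})$. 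For $j\in S_{-,1}$, orthogonality forces $\langle -D_-/\|D_-\|,x_k\rangle=0$, hence $|\langle \w_j,x_k\rangle|\leq \|\w_j+D_-/\|D_-\|\|\leq \sqrt{2\lambda^{\varepsilon'}/\|D_-\|}$ and $|a_j\sigma(\langle w_j,x_k\rangle)|\leq \|w_j\|^2\cdot O(\lambda^{\varepsilon'/2})$, which sums to $O(\lambda^{\varepsilon'/2})$ by (iii). The case $y_k<0$ is symmetric, with the roles of $S_{+,1}$ and $S_{-,1}$ swapped and the sign coming from $\s_j=-1$ on $S_{-,1}$.

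The only subtle point, which I view as the main (though mild) obstacle, is ensuring that the ReLU is in its linear regime for the neurons that contribute the $y_k$ term: this requires the sign of $y_k/(n\|D_\pm\|)$ to dominate the $O(\lambda^{\varepsilon'/2})$ perturbation, which holds for $\lambda$ small by Assumption~\ref{ass:non_degeneracy_y} (so $y_k$ is bounded away from $0$). Everything else is a direct application of Cauchy--Schwarz together with conditions (i)--(iv), and combining the three pieces yields $|h_\theta(x_k)-y_k|\leq C\lambda^{\varepsilon'/2}$ uniformly in $k$, whence $L(\theta)\leq \lambda^{\varepsilon'/2}$ for $\lambda$ small enough.
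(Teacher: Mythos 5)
Your proof is correct and follows essentially the same route as the paper's: split $h_\theta(x_k)$ into the contributions of $S_{+,1}$, $S_{-,1}$ and the remaining neurons, then combine balancedness (i), the smallness condition (ii), the alignment condition (iv) via Cauchy--Schwarz, the mass condition (iii) and orthogonality of $D_+$, $D_-$ with the $x_k$'s to get $|h_\theta(x_k)-y_k|=O(\lambda^{\varepsilon'/2})$ uniformly in $k$ — in fact you are slightly more careful than the paper, which silently drops the $S_{-,1}$ terms on positive labels and does not spell out that the aligned neurons' ReLUs are active. The only slip is a sign in your displayed identity: for $y_k<0$ one has $\langle -D_-/\|D_-\|,\,x_k\rangle=-y_k/(n\|D_-\|)>0$, not $y_k/(n\|D_-\|)$; this is a typo without consequence since you later correctly attribute the sign of the output to $\s_j=-1$.
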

\begin{proof}
Assume that $\theta$ satisfies conditions (i), (ii), (iii) and (iv). Then, for $k$ such that $y_k>0$, 
\begin{align*}
\left|h_\theta(x_k) - y_k\right| &= \left| \sum_{j \mid \langle w_j , x_k\rangle > 0} \s_j \|w_j\| \langle w_j , x_k\rangle - y_k  \right|  \\ 
&\leq \left| \sum_{j \in S_{+,1}} \s_j \|w_j\| \langle w_j , x_k\rangle - y_k  \right| + \left| \sum_{j \notin S_{+,1} \cup S_{1-}} \s_j \|w_j\| \langle w_j , x_k\rangle  \right| \\ 
&\leq \left| \sum_{j \in S_{+,1}} \|w_j\|^2 \langle \mathsf{w}_j , x_k\rangle - y_k  \right| + \sum_{j \notin S_{+,1} \cup S_{-,1}} \|w_j\|^2.
\end{align*} 
Using the (ii) property, on the one hand, the second term is upper bounded by $m \lambda^{2\varepsilon'}$, and, on the other hand, as $\|\mathsf{w}_j - \frac{D_+}{\|D_+\|}\|^2 \leq 2 \frac{\lambda^{\varepsilon'}}{\|D_+\|}$, we have by adding and subtracting $\frac{D_+}{\|D_+\|}$ in the inner product of the first term
\begin{align*}
\left|h_\theta(x_k) - y_k\right| &\leq \left| \sum_{j \in S_{+,1}} \frac{\|w_j\|^2}{\|D_+\|} \langle D_+ , x_k\rangle - y_k  \right| + \sum_{j \in S_{+,1}} \|w_j\|^2 \sqrt{2 \frac{\lambda^{\varepsilon'}}{\|D_+\|}}  + m \lambda^{2\varepsilon'} \\ 
&\leq \left| \frac{1}{n}\sum_{j \in S_{+,1}} \frac{\|w_j\|^2}{\|D_+\|} - 1  \right| y_k + \sum_{j \in S_{+,1}} \|w_j\|^2 \sqrt{2 \frac{\lambda^{\varepsilon'}}{\|D_+\|}}  + m \lambda^{2\varepsilon'} \\
&\leq \frac{\lambda^{\varepsilon'}}{\|D_+\|} y_k + 2 n \sqrt{\|D_+\|} \lambda^{\varepsilon'/2} +   \frac{n \lambda^{3\varepsilon'/2}}{\sqrt{\|D_+\|}}  + m \lambda^{2\varepsilon'} \\
&\leq 2 \lambda^{\varepsilon'/4},
\end{align*} 
for $\lambda$ small enough. And the same goes similarly for $y_k<0$. Hence, 
\begin{equation*}
L(\theta) = \frac{1}{2n} \sum_k \left(h_\theta(x_k) - y_k\right)^2 \leq \lambda^{\varepsilon'/2}.
\end{equation*}
This concludes the proof of the lemma.
\end{proof}
We show a local estimate of the PL inequality when balancedness, i.e. (i), is assumed.
\begin{lem}
\label{lem:PL}
For all $\theta \in \Theta$, we have
\begin{align}
\label{eq:PL}
\|\nabla L (\theta)\|^2 \geq 2 L(\theta) \cdot \min \left\{ \frac{1}{n} \sum_{j \in S_{-,1}} \|w_j\|^2, \frac{1}{n} \sum_{j \in S_{+,1}} \|w_j\|^2\right\}.
\end{align}
\end{lem}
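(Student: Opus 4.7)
The plan is to compute $\|\nabla L(\theta)\|^2$ neuron-by-neuron, lower bound each summand using orthogonality, and then restrict attention to the index subsets $S_{+,1}$ and $S_{-,1}$ whose activation patterns are adapted to positive and negative labels respectively. Reading off the gradient from \eqref{eq:ode_a_w}, I will write $\partial_{a_j} L(\theta) = -\langle D_j^\theta, w_j\rangle$ and $\nabla_{w_j} L(\theta) = -a_j D_j^\theta$, so that squaring, summing, and using the defining balancedness $a_j^2 = \|w_j\|^2$ of $\Theta$ yields
\begin{equation*}
\|\nabla L(\theta)\|^2 = \sum_{j=1}^m \|w_j\|^2 \|D_j^\theta\|^2 + \langle D_j^\theta, w_j\rangle^2 \ \geq\ \sum_{j=1}^m \|w_j\|^2 \|D_j^\theta\|^2.
\end{equation*}

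Next, I will exploit orthonormality of the inputs to obtain the closed form $\|D_j^\theta\|^2 = \frac{1}{n^2}\sum_{k \in A_j}(h_\theta(x_k) - y_k)^2$, where $A_j := \{k : \langle w_j, x_k\rangle > 0\}$. The core geometric input will be the inclusion $\{k : y_k > 0\} \subseteq A_j$ for every $j \in S_{+,1}$, and dually $\{k : y_k < 0\} \subseteq A_j$ for $j \in S_{-,1}$. This follows from the definitions \eqref{eq:S+1}--\eqref{eq:S-1} (these neurons start in the appropriate non-negative cone) combined with the monotonicity \cref{lemma:sectors} (a once-negative sign of $\langle w_j^t, x_k\rangle$ remains negative for all later times, so the corresponding activations cannot be lost). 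Writing $L_\pm(\theta) := \frac{1}{2n}\sum_{k :\, \pm y_k > 0}(h_\theta(x_k) - y_k)^2$, these inclusions give $\|D_j^\theta\|^2 \geq 2 L_+(\theta)/n$ for $j \in S_{+,1}$ and $\|D_j^\theta\|^2 \geq 2 L_-(\theta)/n$ for $j \in S_{-,1}$.

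Finally, since $S_{+,1}$ and $S_{-,1}$ are disjoint (their associated signs $\s_j$ are opposite), I can restrict the neuron sum to $S_{+,1} \cup S_{-,1}$ and conclude
\begin{equation*}
\|\nabla L(\theta)\|^2 \ \geq\ 2 L_+(\theta)\, A_+ + 2 L_-(\theta)\, A_-, \qquad A_\pm := \frac{1}{n}\sum_{j \in S_{\pm,1}} \|w_j\|^2;
\end{equation*}
bounding each $A_\pm$ below by $M := \min(A_+, A_-)$ and using $L = L_+ + L_-$ then yields the claim. The only delicate step is the activation-pattern inclusion in the middle paragraph: balancedness alone does not force it, so the statement must implicitly be read for parameters $\theta$ reachable from an initialization satisfying \cref{ass:init}, which is exactly the regime (a neighborhood of $\theta^*$) where this bound is invoked via \cref{lemma:PL} in the main text.
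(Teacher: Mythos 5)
Your proof is correct and follows essentially the same route as the paper's: decompose $\|\nabla L(\theta)\|^2$ neuron-wise using balancedness, discard the $\langle D_j^\theta, w_j\rangle^2$ terms, restrict the sum to $S_{+,1}\cup S_{-,1}$, use orthonormality to write $\|D_j^\theta\|^2$ as a partial sum of squared residuals over the activated points, and use the activation-pattern inclusion to recover $2L(\theta)$ times the minimum of the two squared-norm sums. The only wrinkle---that the inclusion $\{k: y_k>0\}\subseteq\{k:\langle w_j,x_k\rangle>0\}$ for $j\in S_{+,1}$ is not forced by $\theta\in\Theta$ alone, and \cref{lemma:sectors} gives monotonicity in the opposite direction (once negative stays negative), so it does not by itself rule out losing strict activations---is precisely the point the paper also glosses over with ``implied by the definition of $S_{+,1}$'', so your explicit caveat that the statement is to be read for parameters with the appropriate activation pattern (as along the trajectory near $\theta^*$) is, if anything, more careful than the original.
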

\begin{proof}
Indeed, thanks to  the balancedness property we have the following calculation
\begin{align*}
\|\nabla L (\theta)\|^2 &= \sum_{j=1}^m \langle D_j, w_j\rangle^2 + \sum_{j=1}^m \|D_j\|^2 \|w_j\|^2 \\
&\geq \sum_{j \in S_{+,1}}^m \|D_j\|^2 \|w_j\|^2 + \sum_{j \in S_{-,1}}^m \|D_j\|^2 \|w_j\|^2 \\ 
&\geq n \min_{j \in S_{+,1}}\|D_j\|^2\cdot \frac{1}{n}\sum_{j \in S_{+,1}}^m  \|w_j\|^2 + n \min_{j \in S_{-,1}}\|D_j\|^2\cdot \frac{1}{n}\sum_{j \in S_{-,1}}^m  \|w_j\|^2 \\ 
&\geq \left( n \min_{j \in S_{+,1}}\|D_j\|^2 + n \min_{j \in S_{-,1}}\|D_j\|^2 \right) \cdot \min \left\{ \frac{1}{n} \sum_{j \in S_{-,1}} \|w_j\|^2, \frac{1}{n} \sum_{j \in S_{+,1}} \|w_j\|^2\right\}.
\end{align*}
Furthermore for all $j \in S_{+,1}$,
\begin{equation*}
n \|D_j\|^2 = \frac{1}{n} \sum_{k | \langle w_j, x_k \rangle > 0} \left( h_\theta(x_k) - y_k\right)^2 \geq \frac{1}{n} \sum_{k | y_k > 0} \left( h_\theta(x_k) - y_k\right)^2, 
\end{equation*}
where the last inequality is implied by the definition of the set $S_{+,1}$. As the same goes for $S_{-,1}$ replacing the sum over the positive $(y_k)$'s by the negative ones, we have that 
\begin{align*}
\left( n \min_{j \in S_{+,1}}\|D_j\|^2 + n \min_{j \in S_{-,1}}\|D_j\|^2 \right) &\geq \frac{1}{n} \sum_{k | y_k > 0} \left( h_\theta(x_k) - y_k\right)^2 + \frac{1}{n} \sum_{k | y_k < 0} \left( h_\theta(x_k) - y_k\right)^2 \\
&= 2 L(\theta).
\end{align*}
This concludes the proof of the claimed inequality.
\end{proof}
Second we show that on a neighbourhood of $\theta^{t_3}$ intersected with $\Theta$, the local PL constant can be lower bounded, where we recall $\Theta$ is the set of balanced parameters.  
\begin{lem}
\label{lem:sourav}
For $\lambda$ small enough, we have the following lower bound on the PL constant
\begin{equation}
\label{eq:PL_sourav}
\inf_{\theta \in B(\theta^{t_3},\ \lambda^{\frac{\varepsilon'}{8}}) \cap \Theta} \frac{\|\nabla L (\theta)\|^2}{L(\theta)} \geq \min \left\{ \|D_+\|, \|D_-\|\right\}.
\end{equation}
\end{lem}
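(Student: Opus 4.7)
The plan is to combine the global inequality of \cref{lem:PL}, namely $\|\nabla L(\theta)\|^2 \geq 2L(\theta)\min\bigl\{\tfrac{1}{n}\sum_{j\in S_{+,1}}\|w_j\|^2,\,\tfrac{1}{n}\sum_{j\in S_{-,1}}\|w_j\|^2\bigr\}$ valid on all of $\Theta$, with a local perturbation argument showing that both partial sums stay close to $\|D_+\|$ and $\|D_-\|$ respectively on the ball $B(\theta^{t_3}, \lambda^{\varepsilon'/8})$. Since by assumption $r=\|D_+\|/\|D_-\|>1$, the minimum will then be controlled by $\|D_-\|=\min(\|D_+\|, \|D_-\|)$, which is exactly what the claim requires.

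To carry this out I would fix $\theta \in B(\theta^{t_3}, \lambda^{\varepsilon'/8}) \cap \Theta$ and note that each individual neuron satisfies $\|w_j - w_j^{t_3}\| \leq \lambda^{\varepsilon'/8}$. Applying the elementary identity $\bigl|\|w_j\|^2 - \|w_j^{t_3}\|^2\bigr| \leq (\|w_j\|+\|w_j^{t_3}\|)\lambda^{\varepsilon'/8}$, and using that condition (iii) at $\theta^{t_3}$ gives $\|w_j^{t_3}\|\leq\sqrt{n(\|D_+\|+\lambda^{\varepsilon'})}$ for $j\in S_{+,1}$ (and similarly for $S_{-,1}$), the triangle inequality yields $\|w_j\|\leq \|w_j^{t_3}\|+\lambda^{\varepsilon'/8}$ and hence a uniform data-dependent bound on $\|w_j\|+\|w_j^{t_3}\|$ for $\lambda$ small. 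Summing over $j\in S_{\pm,1}$ and combining with condition (iii), this produces
\begin{equation*}
\left|\frac{1}{n}\sum_{j\in S_{\pm,1}}\|w_j\|^2 - \|D_\pm\|\right| \leq C\lambda^{\varepsilon'/8} + \lambda^{\varepsilon'}
\end{equation*}
for some constant $C$ depending only on $m$ and the dataset.

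Choosing $\lambda\leq\lambda_*$ small enough so that the right-hand side is at most $\|D_-\|/2$, I obtain $\tfrac{1}{n}\sum_{j\in S_{\pm,1}}\|w_j\|^2 \geq \|D_\pm\|/2 \geq \|D_-\|/2$ on the ball, whence \cref{lem:PL} yields $\|\nabla L(\theta)\|^2 \geq 2L(\theta)\cdot\tfrac{\|D_-\|}{2} = L(\theta)\min(\|D_+\|,\|D_-\|)$, which is exactly \cref{eq:PL_sourav}. The main technical point — and the only real thing to check — is that the ball radius exponent $\varepsilon'/8$ is strictly positive and small enough so that $C\lambda^{\varepsilon'/8}$ still vanishes as $\lambda\to 0$; this is automatic and no serious obstacle arises, since the quantitative conditions (i)--(iv) at $\theta^{t_3}$ and the global bound of \cref{lem:PL} already do all the heavy lifting.
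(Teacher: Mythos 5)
Your proposal is correct and follows essentially the same route as the paper: combine the balanced-PL bound of \cref{lem:PL} with a perturbation estimate showing that $\tfrac{1}{n}\sum_{j\in S_{\pm,1}}\|w_j\|^2$ stays above $\|D_\pm\|/2$ on the ball $B(\theta^{t_3},\lambda^{\varepsilon'/8})$, which is exactly the paper's argument (the paper controls $\sum_j\|w_j^{t_3}\|$ by Cauchy--Schwarz where you use a per-neuron bound from condition (iii), a purely cosmetic difference). The only thing to make explicit is the final assembly step $\|\nabla L(\theta)\|^2\geq 2L(\theta)\min\{\|D_+\|/2,\|D_-\|/2\}=L(\theta)\min\{\|D_+\|,\|D_-\|\}$, which you do state, so no gap remains.
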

\begin{proof}
Indeed, fix any $r>0$ and take $\theta \in B(\theta^{t_3}, r) \cap \Theta$. Let us denote by $(w_j^{t_3})_j$ the components of the hidden layer of $\theta^{t_3}$. We have 
\begin{equation*}
\|w_j\|^2 = \|w_j - w_j^{t_3} + w_j^{t_3}\|^2 \geq \|w_j^{t_3}\|^2 - 2 \|w_j - w_j^{t_3}\|\|w_j^{t_3}\| \geq \|w_j^{t_3}\|^2 - 2 r \|w_j^{t_3}\|.
\end{equation*}
Furthermore, 
\begin{equation*}
\frac{1}{n}\sum_{j \in S_{+,1}}\|w^{t_3}_j\| \leq \frac{\sqrt{m}}{n} \sqrt{\sum_{j \in S_{+,1}}\|w^{t_3}_j\|^2} \leq \sqrt{\frac{m}{n}} \sqrt{\sum_{j \in S_{+,1}}\|w^{t_3}_j\|^2} \leq  \sqrt{\frac{m}{n}} \left(\sqrt{\|D_+\|} + \lambda^{\varepsilon'/2}\right).
\end{equation*}
Hence, 
\begin{align*}
\frac{1}{n}\sum_{j \in S_{+,1}}\|w_j\|^2 &\geq \frac{1}{n}\sum_{j \in S_{+,1}}\|w^{t_3}_j\|^2 - 2 r \frac{1}{n}\sum_{j \in S_{+,1}}\|w^{t_3}_j\| \\
&\geq \|D_+\| - \lambda^{\varepsilon'} - 2 r \sqrt{\frac{m}{n}} \left(\sqrt{\|D_+\|} + \lambda^{\varepsilon'/2}\right),
\end{align*}
and for $r = \lambda^{\varepsilon'/8}$ and $\lambda$ small enough such that $\lambda^{\varepsilon'} + 2 \lambda^{\varepsilon'/8} \sqrt{\frac{m}{n}} \left(\sqrt{\|D_+\|} + \lambda^{\varepsilon'/2}\right) \leq \|D_+\|/2 $, we have
\begin{equation*}
\frac{1}{n}\sum_{j \in S_{+,1}}\|w_j\|^2 \geq \|D_+\|/2,
\end{equation*}
and the exact same inequality stands for the sum over $j \in S_{-,1}$ with alignment vector $D_-$.
\end{proof}
Thanks to the lower bound given by Lemma~\ref{lem:sourav}, we can now conclude that the gradient flow will not go out $B(\theta^{t_3},\ \lambda^{\frac{\varepsilon'}{8}})$ and will converge exponentially fast to some $\theta^\infty_\lambda$. Then we can take the limit $\lambda \to 0$ to characterise in this low initialisation regime the limit of the gradient flow.
\begin{theo}\label{thm:final}
For $\lambda$ small enough, 
\begin{itemize}
\item The gradient flow $(\theta^t)_{t>0}$ converges to some $\theta^\infty_\lambda$ of zero training loss , i.e $L(\theta^\infty_\lambda)= 0$.
\item There exists $\displaystyle \theta^* $ such that we have the following limit:
\begin{equation}
\lim_{\lambda \to 0} \lim_{t \to \infty} \theta^t = \theta^* \in \underset{L(\theta) = 0}{\mathrm{argmin}} \ \|\theta\|^2,
\end{equation}
and in its last phase, the gradient flow $(\theta^t)_{t\geq t_3}$ stays in $B(\theta^*,\, \lambda^{\frac{\varepsilon}{240}})\, \cap\, \Theta$ for which the convergence is exponential.
\end{itemize}
\end{theo}
\begin{proof}

From Lemma~\ref{lem:sourav}, \cref{eq:PL_sourav}, as $\varepsilon' = \varepsilon / 30$ we have
\begin{equation*}
\inf_{\theta \in B(\theta^{t_3},\ \lambda^{\frac{\varepsilon}{240}}) \cap \Theta} \frac{\|\nabla L (\theta)\|^2}{L(\theta)} \geq \min \left\{ \|D_+\|, \|D_-\|\right\},
\end{equation*}
and for $\lambda$ small enough, $L(\theta^{t_3}) / \lambda^{\varepsilon/120} \leq \lambda^{\varepsilon/60} / \lambda^{\varepsilon/120} = \lambda^{\varepsilon/120} \leq \min \left\{ \|D_+\|, \|D_-\|\right\} $. Hence for $r = \lambda^{\varepsilon/240}$, 
\begin{equation*}
\inf_{\theta \in B(\theta^{t_3},\,r) \cap \Theta} \frac{\|\nabla L (\theta)\|^2}{L(\theta)} \geq \frac{L(\theta^{t_3})}{r^2}.
\end{equation*}
Then, Theorem 2.1 of \citet{chatterjee2022convergence} applies (at least a benign modification of it restricting the flow to $\Theta$) and this shows that the gradient flow $(\theta^t)_{t\geq {t_3}}$ stays in $B(\theta^{t_3},\ \lambda^{\frac{\varepsilon}{240}}) \cap \Theta$, and converges towards some $\theta^\infty_\lambda$ of zero loss at exponential speed. Furthermore, from \cref{lemma:theta*}, there exists $\theta^*$, independent of $\lambda$, and belonging to $\textrm{argmin}_{L(\theta) = 0} \|\theta\|^2$, such that $\theta^{t_3} \in B(\theta^*,\ \lambda^{\frac{\varepsilon}{31}}) \cap \Theta$. Hence, $(\theta^t)_{t\geq {t_3}} \in B(\theta^*,\ 2\lambda^{\frac{\varepsilon}{240}}) \cap \Theta$ and finally: $ \lim_{\lambda \to 0} \lim_{t \to \infty} \theta^t = \lim_{\lambda \to 0}  \theta^\infty_\lambda = \theta^*$.
\end{proof}

\subsection{Auxiliary Lemmas}

This section states the auxiliary \cref{lemma:gronwall} used in the proof of \cref{lemma:phase3}.

\begin{lem}\label{lemma:gronwall}
Suppose a non-negative function $f$ verifies for non-negative constants $a,b$ and $c$
\begin{equation*}
    \forall t \in \mathbb{R}_+, \qquad f'(t) \leq -af(t)+b\int_0^t f(s)\dd s + \frac{c}{a},
\end{equation*}
then $f$ is bounded as
\begin{equation*}
    \forall t \in \mathbb{R}_+,  \qquad f(t) \leq \left(f(0)+c \right)\left(1+e^{\frac{b}{a}t}\right).
\end{equation*}
\begin{proof}
For $g(t)=-af(t)+b\int_0^t f(s)\dd s + c$, we have
\begin{align*}
    g'(t) = -af(t)+bf(t) \geq -af'(t) \geq -ag(t).
\end{align*}
By Gr\"onwall lemma, we then have $g(t) \geq -af(0)$ on $\mathbb{R}_+$. It then follows
\begin{align*}
    f(t)& = -\frac{g(t)}{a}+\frac{b}{a}\int_0^t f(s)\dd s + \frac{c}{a}\\
    & \leq f(0)+\frac{c}{a} + \frac{b}{a}\int_0^t f(s) \dd s.
\end{align*}
For $F(t)=\int_0^t f(s) \dd s$, Gr\"onwall comparison yields
\begin{align*}
    F(t) & \leq \frac{a}{b}e^{\frac{b}{a}t}\left(1-e^{-\frac{b}{a}t} \right)\left(f(0)+\frac{c}{a}\right)  \\
    & \leq \left(f(0)+\frac{c}{a}\right)\frac{a}{b}e^{\frac{b}{a}t}.
\end{align*}
The previous inequality $f(t) \leq f(0) + \frac{c}{a} + \frac{b}{a}F(t)$ yields the lemma.
\end{proof}
\end{lem}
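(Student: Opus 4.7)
The plan follows the classical ``auxiliary function plus Gr\"onwall'' template, iterated twice. First, introduce the auxiliary function
\[
g(t) \;:=\; -af(t) \,+\, b\!\int_0^t f(s)\,\dd s \,+\, c,
\]
whose right-hand side matches (up to the scalar $c$ versus $c/a$) the right-hand side of the hypothesis. So, rewritten, the assumption reads $f'(t) \leq g(t)$ (up to harmless constants that can be absorbed later).

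Next, I would differentiate: $g'(t) = -a\,f'(t) + b\,f(t)$. Substituting the inequality $f'(t) \leq g(t)$ and using non-negativity of $b$ and $f$ yields
\[
g'(t) \;\geq\; -a\,g(t) + b\,f(t) \;\geq\; -a\,g(t).
\]
Applying the classical scalar Gr\"onwall inequality to this differential inequality gives $g(t) \geq g(0)\,e^{-at} = -a\,f(0)\,e^{-at}$, and since $-a\,f(0) \leq 0$ and $e^{-at} \leq 1$, one deduces $g(t) \geq -a\,f(0)$. Unpacking the definition of $g$ and dividing by $a$ produces the integral inequality
\[
f(t) \;\leq\; f(0) + \tfrac{c}{a} + \tfrac{b}{a}\int_0^t f(s)\,\dd s.
\]

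The second step applies Gr\"onwall once more, this time to the antiderivative $F(t) := \int_0^t f(s)\,\dd s$, which satisfies $F'(t) \leq f(0) + c/a + (b/a)\,F(t)$ with $F(0) = 0$. Integrating this linear inequality yields $F(t) \leq \frac{a}{b}\bigl(e^{bt/a} - 1\bigr)\bigl(f(0) + c/a\bigr)$. Substituting back into the previous display gives $f(t) \leq \bigl(f(0) + c/a\bigr)\,e^{bt/a}$, from which the stated (weaker) bound $f(t) \leq (f(0)+c)\bigl(1 + e^{bt/a}\bigr)$ follows by coarsening constants.

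The main obstacle, such as it is, lies in bookkeeping rather than analysis: one has to use non-negativity of $a, b, c, f$ in the right places (to discard the $b\,f(t)$ term when lower-bounding $g'$, to sign-control the forcing term in the second Gr\"onwall application, and to drop $e^{-at} \leq 1$ on a non-positive quantity), and one has to carefully track constants through the two successive Gr\"onwall steps so that the final bound falls into the claimed form. No deep analytical step is required.
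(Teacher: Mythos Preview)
Your proposal is correct and follows essentially the same two-step Gr\"onwall argument as the paper: define the same auxiliary $g$, show $g'(t)\geq -ag(t)$ to obtain $g(t)\geq -af(0)$, unpack into an integral inequality for $f$, and apply Gr\"onwall again to $F(t)=\int_0^t f$. One small slip: $g(0)=-af(0)+c$, not $-af(0)$; the conclusion $g(t)\geq -af(0)$ still holds by a short case split on the sign of $g(0)$, so this does not affect the argument.
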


\section{On global solutions of the minimum norm problem}
\label{app:KKT}

In this section, we study the following optimisation problem:
\begin{equation}
\label{eq:minimum_norm}
\theta^* \in \underset{L(\theta) = 0} {\mathrm{argmin}}\|\theta\|_2^2.
\end{equation}
An important remark is that the optimisation problem in \cref{eq:minimum_norm} is not convex because of the constraint set. Hence, the KKT conditions stated below are only necessary conditions: there exist real numbers $(\lambda_k)_{k \in \llbracket n \rrbracket }$ such that
\begin{equation}
\label{eq:KKT_minimum_norm}
\theta^* = \sum_{k = 1}^n \lambda_k   \nabla_\theta h_{\theta^*}(x_k) \quad \textrm{and for all $k \in  \llbracket n \rrbracket$,} \quad  h_{\theta^*}(x_k) = y_k.
\end{equation}
In terms of $(a^*, W^*)$, it implies 
\begin{align}
\label{eq:KKT_a_minimum_norm}
a^*_j &= \sum_{\substack{\langle w^*_j,\, x_k\rangle  > 0}} \lambda_k \langle x_k, w^*_j\rangle \\
\label{eq:KKT_w_minimum_norm}
w^*_j &= \sum_{\substack{\langle w^*_j,\, x_k\rangle  > 0}} \lambda_k x_k\, a^*_j.
\end{align}
In the orthonormal case, it is possible to solve explicitly the non-convex optimisation problem defined in \eqref{eq:minimum_norm}. Recall the definition of balanced networks: $\Theta = \{ (a,W) \text{ such that } \forall j \in \llbracket m \rrbracket, |a_j| = \|w_j\| \}$. For $\theta \in \Theta$, this means that there exists $(\s_j)_{j\in \llbracket m\rrbracket} \in \{-1,1\}^m$ such that $a_j = \s_j \|w_j\|$. Let us call $\mathsf{S}^\theta_0 = \{j \in \llbracket m \rrbracket \, |\,  w_j = 0\}$,  $\mathsf{S}^\theta_+ = (\mathsf{S}^\theta_0)^c \cap \{j \in \llbracket m \rrbracket \, |\,  \s_j = +1\}$ and $\mathsf{S}^\theta_- = (\mathsf{S}^\theta_0)^c \cap \{j \in \llbracket m \rrbracket \, |\,  \s_j = -1\}$, where $(\mathsf{S}^\theta_0)^c := \llbracket m \rrbracket \setminus \mathsf{S}^\theta_0$. Note that the family $\mathsf{S}^\theta_0$, $\mathsf{S}^\theta_+$ and $\mathsf{S}^\theta_-$ form a partition of $\llbracket m \rrbracket$.

We have the following proposition.
\begin{prop}\label{prop:KKT}
The global minimum of the problem~\eqref{eq:minimum_norm} is reached at objective value $\displaystyle 2n(\|D_+\| + \|D_-\|)$. Moreover, we have the following description of the global minimisers of \cref{eq:minimum_norm}:
\begin{align*}
\label{eq:set_of_global_minima}
\underset{L(\theta) = 0} {\mathrm{argmin}}\,\|\theta\|_2^2 = \{ \theta \in \Theta \text{ such that } &\forall j\in \mathsf{S}^\theta_+, \langle w_j, {\color{white}-}D_+ \rangle = \|w_j\|\|D_+\| \text{ and } \sum_{j \in \mathsf{S}_+} \|w_j\|^2 =  n \|D_+\|\\
&\forall j\in \mathsf{S}^\theta_-, \langle w_j, -D_- \rangle = \|w_j\|\|D_-\| \text{ and } \sum_{j \in \mathsf{S}_-} \|w_j\|^2 =  n \|D_-\|\}.
\end{align*}

\end{prop}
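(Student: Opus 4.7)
The plan is to characterise $\arg\min_{L(\theta) = 0} \|\theta\|^2$ directly via a Cauchy--Schwarz argument and then deduce the KKT consequence. First I reduce to balanced parameters: by $1$-homogeneity of the ReLU, the per-neuron rescaling $(a_j, w_j) \mapsto (a_j / t_j, t_j w_j)$ preserves $h_\theta$ for every $t_j > 0$, while AM--GM gives $a_j^2 / t_j^2 + t_j^2 \|w_j\|^2 \geq 2 |a_j| \|w_j\|$ with equality at $t_j^2 = |a_j|/\|w_j\|$. Hence every feasible $\theta$ admits a balanced version of no larger norm (dead neurons are zeroed), so I may restrict to $\theta \in \Theta$, for which $\|\theta\|^2 = 2 \sum_j \|w_j\|^2$.

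For the lower bound I multiply the interpolation identity $y_k = \sum_j \s_j \|w_j\| \sigma(\langle w_j, x_k \rangle)$ by $y_k$ and sum over $\{k \mid y_k > 0\}$. Since $y_k \sigma(\langle w_j, x_k \rangle) \geq 0$ on that index set, the contributions from $j \in \mathsf{S}_-^\theta$ are non-positive and can be dropped. Cauchy--Schwarz on the remaining sum, combined with Bessel's inequality $\sum_k \sigma(\langle w_j, x_k \rangle)^2 \leq \sum_k \langle w_j, x_k \rangle^2 \leq \|w_j\|^2$ from orthonormality of $(x_k)$, yields
\begin{equation*}
n^2 \|D_+\|^2 = \sum_{k \mid y_k > 0} y_k^2 \leq n \|D_+\| \sum_{j \in \mathsf{S}_+^\theta} \|w_j\|^2,
\end{equation*}
whence $\sum_{j \in \mathsf{S}_+^\theta} \|w_j\|^2 \geq n \|D_+\|$, and the symmetric argument on negative labels gives $\sum_{j \in \mathsf{S}_-^\theta} \|w_j\|^2 \geq n \|D_-\|$, so $\|\theta\|^2 \geq 2n(\|D_+\| + \|D_-\|)$.

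Chasing equality pins down the claimed set: the drop step forces $\sigma(\langle w_j, x_k \rangle) = 0$ whenever $y_k$ has sign opposite to $\s_j$; Cauchy--Schwarz proportionality forces $\sigma(\langle w_j, x_k \rangle) \propto y_k$ on the relevant labels; Bessel tightness forces $w_j \in \mathrm{span}\{x_k \mid y_k > 0\}$ with non-negative coordinates for $j \in \mathsf{S}_+^\theta$. Combined, these give $w_j = \|w_j\| D_+ / \|D_+\|$ for $j \in \mathsf{S}_+^\theta$ (hence $\langle w_j, D_+ \rangle = \|w_j\|\|D_+\|$) and symmetrically $w_j = -\|w_j\| D_-/\|D_-\|$ for $j \in \mathsf{S}_-^\theta$, together with the sum constraints $\sum_{j \in \mathsf{S}_\pm^\theta} \|w_j\|^2 = n \|D_\pm\|$. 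A direct check using $\langle D_+, x_k\rangle = y_k/n$ when $y_k > 0$ and $0$ when $y_k < 0$ (and symmetrically for $D_-$) confirms that any such $\theta$ interpolates the data and attains value $2n(\|D_+\| + \|D_-\|)$.

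For the KKT assertion I exploit orthonormality directly in \eqref{eq:KKT_a_minimum_norm}--\eqref{eq:KKT_w_minimum_norm}: the second equation gives $\langle w_j^*, x_k \rangle = a_j^* \lambda_k$ for $k \in I_j := \{k \mid \langle w_j^*, x_k \rangle > 0\}$, and the interpolation $y_k = \lambda_k \sum_{j \mid k \in I_j} \|w_j^*\|^2$ (non-zero thanks to \cref{ass:non_degeneracy_y}) then forces $\lambda_k$ to share the sign of $y_k$, together with the unit-norm constraint $\sum_{k \in I_j} \lambda_k^2 = 1$. These relations pin down $\w_j^* \propto D_+$ for $j \in \mathsf{S}_+^\theta$ and $\w_j^* \propto -D_-$ for $j \in \mathsf{S}_-^\theta$, and the balancedness together with the sum identities recover the structure of the argmin set. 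The main obstacle will be the equality analysis where three distinct tightness conditions must be combined coherently, and care is required to handle boundary activations $\langle w_j, x_k \rangle = 0$ in the KKT derivation.
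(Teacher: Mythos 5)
Your variational argument for the argmin characterisation is correct and takes a genuinely different route from the paper. The paper works entirely through the stationarity system \eqref{eq:KKT_a_minimum_norm}--\eqref{eq:KKT_w_minimum_norm}: it first gets existence of a minimiser by compactness, shows minimisers must be balanced, and then analyses the multipliers to prove that at a balanced KKT point every neuron of $\mathsf{S}^\theta_+$ is active on \emph{every} positive label (and dually for $\mathsf{S}^\theta_-$), whence $\lambda_k=y_k/\mathsf{W}^*_\pm$, $\sum_{k\mid y_k>0}\lambda_k^2=1$, the common value $2n(\|D_+\|+\|D_-\|)$, and the description of the argmin. You instead prove the sharp lower bound $\sum_{j\in\mathsf{S}^\theta_\pm}\|w_j\|^2\geq n\|D_\pm\|$ for every balanced interpolator by the sign-splitting/Cauchy--Schwarz/Bessel chain, characterise the equality case, and verify attainment by the two-direction construction; this yields the optimal value and the exact argmin set with no Lagrange multipliers at all, which is arguably cleaner and is fully sufficient for the way \cref{prop:KKT} is used later (e.g.\ in \cref{lemma:theta*}). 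Two small points to tighten: divide by $\|D_+\|>0$ explicitly, and make the balancedness reduction strict (an unbalanced minimiser would be strictly improvable), so that the argmin is indeed contained in $\Theta$.

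The genuine gap is in your last paragraph, i.e.\ the first sentence of the proposition about balanced KKT points. From \eqref{eq:KKT_w_minimum_norm} you correctly obtain $w_j^*=a_j^*\sum_{k\in I_j}\lambda_k x_k$, hence $\langle w_j^*,x_k\rangle=a_j^*\lambda_k$ on $I_j$, $\sum_{k\in I_j}\lambda_k^2=1$, and that $\lambda_k$ shares the sign of $y_k$. But these relations do \emph{not} pin down $\w_j^*\propto D_+$: they are all satisfied by a balanced configuration in which each neuron of $\mathsf{S}^\theta_+$ is aligned with a single $x_{k_j}$ with $y_{k_j}>0$, taking $\lambda_k=1$ for those $k$ and choosing the group norms so that $h_{\theta^*}(x_k)=y_k$; there $\w_j^*=x_{k_j}$ is not proportional to $D_+$ and the squared norm is $2\sum_k|y_k|$, which exceeds $2n(\|D_+\|+\|D_-\|)$ as soon as there are at least two labels of the same sign. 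The missing ingredient --- and the crux of the paper's KKT proof --- is to show that the active set of every positive label is all of $\mathsf{S}^\theta_+$ (equivalently $I_j=\{k\mid y_k>0\}$ for every $j\in\mathsf{S}^\theta_+$), which is what forces a single multiplier scale $\lambda_k=y_k/\mathsf{W}^*_+$ and turns $\sum_{k\mid y_k>0}\lambda_k^2=1$ into $\mathsf{W}^*_+=n\|D_+\|$. This is exactly the boundary-activation issue you flag at the end; as written, the directional conclusion does not follow from the relations you list, and any complete proof of the KKT sentence has to establish (or assume, via the precise notion of KKT point intended) this activation-pattern identity.
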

\begin{proof}
First the constraint set is non-empty if $m\geq2$: one can consider the hidden weights defined as $W = (\sqrt{\frac{n}{\|D_+\|}}D_+, -\sqrt{\frac{n}{\|D_-\|}}D_-, 0, \hdots, 0)$ and the outputs $a = (\sqrt{n\|D_+\|}, -\sqrt{n\|D_-\|}, 0, \hdots, 0)$ so that for all $k \in \llbracket m \rrbracket$, $h_{(a,W)}(x_k) = \langle a, \sigma(W x_k)\rangle = y_k$. Hence the minimum is attained in the closed ball centred in the origin and of radius $\|(a, W)\|$. Moreover, the set $C:=\{\theta | \forall k \leq n,\  h_\theta (x_k) = y_k\} = \cap_{k\leq n} C_k$, where  each $C_k$ is a closed subset as it is the pre-image of $y_k$ by the continuous function: $\theta \mapsto h_{\theta}(x_k)$. Hence the minimum is to be found in the intersection of a compact and a close subset, that is, a compact set overall. By continuity of the norm to be minimised over this compact set, there exists a global minimum. 

Moreover, let $\theta^* = (a^*, W^*)$ be a global minimiser of the optimisation problem. Note that if there exists $j \in \llbracket m \rrbracket$ such that $|a^*_j|^2 \neq \|w^*_j\|^2$, then, as for $c = |a^*_j| / \|w^*_j\|$, $|a^*_j|^2/c + c\|w^*_j\|^2 <  |a^*_j|^2 + \|w^*_j\|^2$, without changing the constraint set, we have found a strictly better minimum. Hence, for all $j \in \llbracket m \rrbracket$ we have: $|a^*_j|^2 = \|w^*_j\|^2$. 

Suppose that $w^*_j \neq 0$ for all $j\in\llbracket m \rrbracket$. Otherwise the analysis can be restricted to the indices of non-zero $w^*_j$, without changing neither the objective nor the constraint set. 
Set $j \in \llbracket m \rrbracket$, we have that $w^*_j \in \mathrm{span}(x_1,\hdots,x_n)$, otherwise, this would simply add weights in the objective without changing $(h_\theta^*(x_k))_{k\leq n}$. Hence, $w^*_j = \sum_k \langle w^*_j, x_k \rangle x_k$. Then, defining the Lagrange multipliers $(\lambda_k)_{k\leq n}$ as in the KKT condition stated above, we have that, for all $k \leq n$
\begin{align*}
\textrm{If } &\langle w^*_j, x_k\rangle > 0, \textrm{ then } \lambda_k = \s_j \langle \frac{w^*_j}{\|w^*_j\|}, x_k \rangle, \\
\textrm{else if } &\langle w^*_j, x_k\rangle \leq 0, \textrm{ then } \langle w^*_j, x_k\rangle = 0.
\end{align*}
On the other side, for all $k \leq n$, $h_{\theta^*}(x_k) = y_k$, and thus
\begin{align*}
\sum_{j | \langle w^*_j,  x_k \rangle > 0} \s_j \|w^*_j\| \langle w^*_j, x_k \rangle = y_k \quad \Longleftrightarrow \quad \lambda_k \left(\sum_{j | \langle w^*_j,  x_k \rangle > 0}  \|w_j\|^2 \right) = y_k.
\end{align*}
From there, we deduce that $\lambda_k$ and $y_k$ have the same sign and define
\begin{align*}
A_k:=\{j \in \llbracket m \rrbracket \, |\,  \langle w^*_j, x_k\rangle > 0 \} .
\end{align*}
The previous observation implies that $A_k \subseteq \mathsf{S}^\theta_+$ if $y_k>0$ and $A_k \subseteq \mathsf{S}^\theta_-$ if $y_k<0$.

From there, we define an alternative network $\tilde{\theta}=(\tilde{a},\tilde{W})$ such that
\begin{gather*}
    \forall j>2, (\tilde{a}_j,\tilde{w}_j) =\mathbf{0}\\
        \tilde{w}_{1} = \frac{\sum_{j\in\mathsf{S}^\theta_+} \|w_j^*\|w_j^*}{\sqrt{\left\|\sum_{j\in\mathsf{S}^\theta_+} \|w_j^*\|w_j^*\ \right\|_2}} \quad \text{and} \quad  \tilde{w}_{2} = \frac{\sum_{j\in\mathsf{S}^\theta_-} \|w_j^*\|w_j^*}{\sqrt{\left\|\sum_{j\in\mathsf{S}^\theta_-} \|w_j^*\|w_j^*\ \right\|_2}},\\
        \tilde{a}_{1} = \|\tilde{w}_1\| \quad \text{and} \quad 
                          \tilde{a}_{2} = -\|\tilde{w}_2\|.
\end{gather*}
Note then than $h_{\theta^*}(x_k)=h_{\tilde{\theta}}(x_k)$ for any $k\in[n]$. Indeed if $y_k>0$,
\begin{align*}
    h_{\theta^*}(x_k) & = \sum_{j\in A_k} \|w_j^*\| \langle w_j^*, x_k\rangle \\
    & = \sum_{j\in\mathsf{S}^\theta_+} \|w_j^*\| \langle w_j^*, x_k\rangle \\
    & = \tilde{a}_1 \langle \tilde{w}_1, x_k \rangle_+\\
    & = h_{\tilde{\theta}}(x_k).    
\end{align*}
The second equality comes from the fact that $A_k\subseteq \mathsf{S}^\theta_+$, and that for any $j\in\mathsf{S}^\theta_+\setminus A_k$, $\langle w_j^*,x_k \rangle=0$. A similar analysis holds if $y_k<0$. As a consequence, $L(\tilde{\theta})=0$.

Moreover, the norm of $\tilde{\theta}$ is necessary smaller:
\begin{align*}
    \frac{1}{2}\| \tilde{\theta}\|_2^2 & =  \|\tilde{W}\|_2^2 \\
    & = \|\tilde{w}_1\|_2^2 + \|\tilde{w}_2\|_2^2 \\
    & = \left\|\sum_{j\in\mathsf{S}^\theta_+} \|w_j^*\|w_j^*\ \right\|_2 + \left\|\sum_{j\in\mathsf{S}^\theta_-} \|w_j^*\|w_j^*\ \right\|_2 \\
    &\leq \sum_{j\in\mathsf{S}^\theta_+} \|w_j^*\|^2 + \sum_{j\in\mathsf{S}^\theta_-} \|w_j^*\|^2\\
    & = \sum_{j\in[m]} \|w_j^*\|_2^2 \\
    & = \frac{1}{2}\| \theta^*\|_2^2.
\end{align*}
The inequality is here a consequence of the triangle inequality. Since $\theta^*$ is a global minimum of \cref{eq:minimum_norm}, so that it is an equality. The equality case of the triangle inequality then implies that all $w_j^*$ such that $j\in \mathsf{S}^\theta_+$ are aligned, i.e.,
\begin{gather*}
    \forall j,j' \in \mathsf{S}^\theta_+, \frac{w_j^*}{\|w_j^*\|}=\frac{w_{j'}^*}{\|w_{j'}^*\|}\\
        \forall j,j' \in \mathsf{S}^\theta_-, \frac{w_j^*}{\|w_j^*\|}=\frac{w_{j'}^*}{\|w_{j'}^*\|}.
\end{gather*}
As a consequence, by defining $D_1 = \sum_{j\in\mathsf{S}^\theta_+} \|w_j^*\| w_j^*$, we have for any $j^*\in \mathsf{S}^\theta_+$
\begin{equation*}
     \frac{w_j^*}{\|w_j^*\|}= \frac{D_1}{\|D_1\|}.
\end{equation*}
Moreover by interpolation and orthogonality, $D_1 = \sum_{k, y_k>0} y_k x_k$, i.e., $D_1= n D_+$. A similar analysis holds for $\mathsf{S}^\theta_-$, which yields the first inclusion:
\begin{align*}
\label{eq:set_of_global_minima}
\underset{L(\theta) = 0} {\mathrm{argmin}}\,\|\theta\|_2^2 \subseteq \{ \theta \in \Theta \text{ such that } &\forall j\in \mathsf{S}^\theta_+, \langle w_j, {\color{white}-}D_+ \rangle = \|w_j\|\|D_+\| \text{ and } \sum_{j \in \mathsf{S}_+} \|w_j\|^2 =  n \|D_+\|\\
&\forall j\in \mathsf{S}^\theta_-, \langle w_j, -D_- \rangle = \|w_j\|\|D_-\| \text{ and } \sum_{j \in \mathsf{S}_-} \|w_j\|^2 =  n \|D_-\|\}.
\end{align*}
The reverse inclusion is automatic, by checking the for such a $\theta$ we indeed have $L(\theta)=0$ and $\|\theta\|_2^2 = 2n(\|D_+\| + \|D_-\|)$.
\end{proof}

\section{On the existence of gradient flows}\label{app:subgradient_flows}

This section shows that a global solution of the ODE followed by the gradient flow only exists for the choice of subdifferential $\sigma'(0)=0$. Precisely, the gradient flow follows the following ODE
\begin{equation}\label{eq:inclusionODE}
    \frac{\dd \theta^t}{\dd t} \in -\partial L(\theta^t),
\end{equation}
where $\partial f$ is the Clarke subdifferential of $f$. The subdifferential of the loss is uniquely defined up to the choice of the subdifferential of the activation function at $0$. If we consistently choose a fixed value $\sigma_0 \in [0,1]$ for the latter, we then have
\begin{equation}\label{eq:subdiff}\begin{gathered}
    \frac{\dd \theta^t}{\dd t} = \Bigg(\left(\frac{1}{n}\sum_{k=1}^n (y_k-h_{\theta^t}(x_k)) \sigma(\langle w_j^t, x_k \rangle)\right)_j, \left(\frac{a_j^t}{n}\sum_{k=1}^n (y_k-h_{\theta^t}(x_k)) \sigma'(\langle w_j^t, x_k \rangle)x_k\right)_j \Bigg),\\
    \text{where} \qquad    \sigma'(z) = \begin{cases} 0 \text{ if } z<0,\\
    \sigma_0 \text{ if } z=0,\\
    1 \text{ if }z>0.
    \end{cases}\end{gathered}
\end{equation}
Note that an empirical study of the influence of $\sigma_0$ on the dynamics has been conducted by \citet{bertoin2021numerical}. We have the following proposition, justifying the choice $\sigma'(0)=0$.
\begin{prop}\label{prop:GFexistence}
The ODE~\eqref{eq:subdiff} admits (at least) one solution on $[0,\infty)$ if and only if $\sigma_0=0$.
\end{prop}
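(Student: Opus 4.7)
The plan is to treat the two directions separately. For the forward implication ($\sigma_0 = 0$ implies existence of a solution on $[0,\infty)$), I will construct a Carath\'eodory solution by concatenating classical trajectories on the cells of the hyperplane arrangement $\{\langle w_j, x_k \rangle = 0\}_{j,k}$ in parameter space. In each open cell the activation pattern $(\mathsf{A}(w_j))_j$ is locally constant, so every $\sigma'(\langle w_j, x_k \rangle)$ appearing in \eqref{eq:subdiff} is constant and the right-hand side is a polynomial (cubic) vector field to which Cauchy--Lipschitz applies, yielding a unique smooth trajectory until a face of the cell is reached. The crucial point is that, \emph{precisely} when $\sigma_0 = 0$, the vector field is continuous across any face $\{\langle w_j, x_k\rangle = 0\}$ from the negative side, because the contribution of the pair $(j,k)$ to $\dot w_j$ vanishes both on the face and in the cell $\{\langle w_j, x_k \rangle < 0\}$. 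This allows the trajectory to be continued at every crossing, either by entering the opposite cell or by sliding along the face. Global existence on $[0,\infty)$ then follows from the monotonicity of $L$ along the flow, which bounds $\sum_k h_{\theta^t}(x_k)^2$; combined with the $2$-homogeneity of $h_\theta$ and Euler's identity $\langle \nabla h_\theta(x),\theta\rangle = 2h_\theta(x)$, this yields a uniform bound on $|\langle \nabla L(\theta^t),\theta^t\rangle|$ and hence at-most-linear growth of $\|\theta^t\|^2$, ruling out finite-time blow-up.

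For the reverse implication I will exhibit a single counter-example. Take $m = n = d = 1$, $x_1 = 1$, $y_1 = 1$, and initialise $(a^0, w^0) = (-1, 0)$. The vector field in \eqref{eq:subdiff} reduces to
\begin{equation*}
    \dot a = (1 - a\, \sigma(w))\, \sigma(w), \qquad \dot w = (1 - a\, \sigma(w))\, a\, \sigma'(w),
\end{equation*}
and I claim that, when $\sigma_0 > 0$, no absolutely continuous $(a^t, w^t)$ satisfying this ODE almost everywhere exists on any $[0, \varepsilon]$. After shrinking $\varepsilon$, continuity of $a^t$ gives $a^t \leq -\tfrac{1}{2}$ on $[0, \varepsilon]$, and I split on the sign of $w^t$: (i) if $\{t \in (0,\varepsilon) : w^t > 0\}$ has positive measure, pick $t_0$ in it and let $t_1 = \sup\{t \in [0, t_0] : w^t \leq 0\}$; on $(t_1, t_0]$ the ODE reads $\dot w^t = (1 - a^t w^t)\, a^t < 0$, so integration forces $w^{t_0} < w^{t_1} = 0$, contradicting $w^{t_0} > 0$; (ii) if $\{t : w^t < 0\}$ has positive measure while $\{t : w^t = 0\}$ does not, then $\dot w^t = 0$ a.e.\ and absolute continuity from $w^0 = 0$ gives $w^t \equiv 0$, a contradiction; (iii) if $N := \{t : w^t = 0\}$ has positive measure, the standard lemma that an AC function vanishing on $N$ has zero derivative a.e.\ on $N$ combined with the ODE relation $\dot w^t = a^t \sigma_0$ on $N$ forces $a^t \sigma_0 = 0$ a.e.\ on $N$, contradicting $a^t \leq -\tfrac{1}{2}$ and $\sigma_0 > 0$. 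These three cases exhaust $(0, \varepsilon) = \{w>0\} \sqcup \{w<0\} \sqcup \{w=0\}$, so no solution exists.

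The main obstacle will be the continuation step of the forward direction: one must rule out a Zeno-type accumulation of cell crossings in finite time. The clean way I envisage is to use that, within each cell, trajectories are real-analytic, so each scalar function $t \mapsto \langle w_j^t, x_k\rangle$ is either identically zero on a subinterval (a genuine sliding mode, continued with the reduced ODE) or has only isolated zeros; a compactness argument combined with the norm bound from the first paragraph then guarantees that only finitely many cell transitions occur on any bounded interval. The remaining book-keeping---checking that sliding segments concatenate smoothly with transversal segments and that the result is absolutely continuous---is routine given the continuity across faces afforded by $\sigma_0 = 0$.
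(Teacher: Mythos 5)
Your reverse implication (``$\sigma_0>0$ implies no global solution'') does not prove the statement the paper needs. \cref{prop:GFexistence} is about the gradient flow actually studied in the paper: the initial condition is the one of \cref{eq:init} (balanced, $w_j^0=\lambda g_j$ Gaussian, hence $\|w_j^0\|>0$ and $\langle w_j^0,x_k\rangle\neq 0$ for all $j,k$ almost surely), under \cref{ass:orthonormal_family,ass:non_degeneracy_y,ass:init}. Your counter-example starts from $(a^0,w^0)=(-1,0)$, i.e.\ exactly on the non-differentiability set, with an unbalanced, zero hidden weight and with $S_{+,1}=\emptyset$; this configuration has probability zero under \cref{eq:init} and violates the standing assumptions. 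It therefore shows only that \emph{some} initial condition admits no solution when $\sigma_0>0$, which does not justify the paper's claim that $\sigma'(0)=0$ is the only choice guaranteeing existence for \emph{this} training dynamics. The paper's argument is dynamical: starting from the generic initialisation where all activations are strict, it invokes the Phase~1 analysis (cf.\ point (iii) of \cref{lemma:phase_plus_1}) to show that some $\langle w_j^t,x_k\rangle$ decreases to $0$ at a time $\tau$ and must then remain equal to $0$ on $[\tau,\tau+\delta]$ while $a_j^t\neq 0$ and the residual $y_k-h_{\theta^t}(x_k)$ is bounded away from $0$; a trajectory that is constant on a face but whose prescribed velocity along that face is $\pm\frac{y_k-h_{\theta^t}(x_k)}{n}a_j^t\sigma_0\neq 0$ cannot satisfy \cref{eq:subdiff} almost everywhere unless $\sigma_0=0$. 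Your case analysis (i)--(iii) is essentially this sliding argument, but you apply it at a hand-picked degenerate initial point rather than showing, as required, that the paper's flow necessarily reaches such a configuration.

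The forward implication is a sketch rather than a proof, and two of its ingredients are off. First, the vector field is \emph{not} continuous across a face $\{\langle w_j,x_k\rangle=0\}$ when the trajectory reaches it from the positive side (the $k$-th contribution to $\dot w_j$ tends to $\frac{y_k-h_\theta(x_k)}{n}a_j x_k\neq 0$ while it is $0$ on the face); what makes the sliding continuation legitimate is not continuity but the fact that, by orthogonality, $\frac{\dd}{\dd t}\langle w_j^t,x_k\rangle$ involves only the $k$-th residual times the indicator (\cref{eq:multipleflows}), so the face is invariant for the reduced dynamics and the slid trajectory satisfies \cref{eq:subdiff} with $\sigma'(0)=0$. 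Second, the Zeno issue you identify as the main obstacle does not require analyticity or compactness: by the same observation (\cref{lemma:sectors}), each indicator $\iind{\langle w_j^t,x_k\rangle>0}$ is non-increasing along the flow, so at most $mn$ activation switches can ever occur, and your piecewise construction terminates trivially. The paper avoids the cell-by-cell construction altogether: it takes a maximal solution, uses the decreasing loss and a Gr\"onwall bound to get a limit point at the maximal time, extends by the ODE with activations frozen at that limit point (which is Lipschitz, so Cauchy--Lipschitz applies), and checks via orthogonality that for small times the frozen and true activation patterns coincide, contradicting maximality. Your no-blow-up argument via $2$-homogeneity and Euler's identity is fine and is a perfectly good substitute for the paper's Gr\"onwall bound.
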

\begin{proof}
First of all note that $\langle w_j^0, x_k \rangle\neq 0$ for all $j$ and $k$. The Peano theorem then implies there exists a local solution of this ODE, i.e., there exists a time $t_0$ such that \cref{eq:subdiff} admits a continuous solution $\theta^t$ on $[0, t_0)$. Assume now that $t_0=\infty$. As long as $\langle w_j^0, x_k \rangle\neq 0$ for all $j$ and $k$, the analysis does not depend on the choice of $\sigma_0$. We can thus show similarly to the proof of the first phase that for some time $\tau$ and some $j,k$: $\langle w_j^{\tau}, x_k \rangle= 0$. Moreover, still following the lines of the first phase, we have some $\delta>0$ such that $|h_{\theta^t}(x_k)|<\frac{|y_k|}{2}$ on $[0, \tau+\delta]$. Recall that
\begin{equation*}
    \frac{\dd \langle w_j^t,x_k\rangle}{\dd t} = -\frac{y_k-h_{\theta^t}(x_k)}{n} a_j^t \sigma'(\langle w_j^t, x_k \rangle).
\end{equation*}

As a consequence, $\langle w_j^{t}, x_k \rangle$ is monotone on $[0,\tau+\delta]$, and thus decreasing. In particular,  $\langle w_j^{t}, x_k \rangle \leq 0$ on $[\tau, \tau+\delta]$. Moreover, note that $\langle w_j^{t}, x_k \rangle$ can not become (strictly) negative, as its derivative is $0$ as soon as it becomes negative. Indeed, if $ t_-\coloneqq\inf\{t \mid \langle w_j^{t}, x_k \rangle<0\}<\tau+\delta$, then the scalar product has a zero derivative on $(t_-, \tau+\delta)$ and is thus constant, equal to $0$ by continuity, on this interval. So we finally have $\langle w_j^{t}, x_k \rangle = 0$ on $[\tau, \tau+\delta]$ and $\theta^t$ is then a solution of \cref{eq:subdiff} (almost everywhere) only if $\sigma_0=0$.

\medskip

Conversely, let $\sigma_0=0$ now. Assume we have a maximal solution of the ODE in finite time, i.e., let $t_0 \in \R_+$ and $\theta_*^t$ defined on $[0,t_0)$ such that $\theta_*^t$ verifies \cref{eq:subdiff} almost everywhere on $[0,t_0)$ and there exists no $\delta>0$ and $\tilde{\theta}$ such that $\tilde{\theta}^0=\theta_*^0$ and $\tilde{\theta}^t$ verifies \cref{eq:subdiff} almost everywhere on $[0,t_0+\delta)$. 

By definition of the gradient flow, the training loss is decreasing. As a consequence, $|y_k-h_{\theta_*^t}(x_k)|$ is uniformly bounded in time. The ODE then leads for some constant $L>0$ to $\frac{\dd \| \theta_*^t\|}{\dd t} \leq L \| \theta_*^t\|$. By Gr\"onwall argument, $\|\theta_*^t\|$ is bounded on $[0,t_0)$. This implies that $\theta_*^t$ is Lipschitz with time on $[0,t_0)$. In particular, $\theta_*^t$ admits a limit in $t_0$. Now consider the alternative ODE
\begin{equation}\label{eq:altODE}
\begin{gathered}
    \frac{\dd \theta^t}{\dd t} = \Bigg(\left(\frac{1}{n}\sum_{k=1}^n (y_k-h_{\theta^t}(x_k)) \sigma(\langle w_j^t, x_k \rangle)\right)_j, \left(\frac{a_j^t}{n}\sum_{k=1}^n (y_k-h_{\theta^t}(x_k)) \iind{\langle w_{*,j}^{t_0}, x_k \rangle>0}x_k\right)_j \Bigg),\\
    \theta^{t_0} = \theta_*^{t_0}.
\end{gathered}\end{equation}
We replaced $\iind{\langle w_{j}^{t}, x_k \rangle>0}$ in the original ODE by $\iind{\langle w_{*,j}^{t_0}, x_k \rangle>0}$, which makes it Lipschitz in $\theta^t$. Cauchy-Lipschitz theorem then implies that \cref{eq:altODE} admits a (unique) local solution $\tilde{\theta}^t$ on $[t_0-\delta, t_0+\delta]$ for some $\delta>0$. Moreover, $\tilde{\theta^t}=0$ on the whole interval if $\langle w_{*,j}^{t_0}, x_k \rangle=0$. By continuity, we can thus choose $\delta>0$ small enough so that $\iind{\langle w_{*,j}^{t_0}, x_k \rangle>0} = \iind{\langle \tilde{w}_{j}^{t}, x_k \rangle>0}$ on $[t_0,t_0+\delta]$. $\theta_*$ can then be extended by $\tilde{\theta}$ on $[t_0, t_0+\delta]$ and still verifies \cref{eq:subdiff} on $(t_0, t_0+\delta]$, contradicting its maximality. As a consequence, there exists a solution of the ODE on $\R_+$. \end{proof}

\end{document}